\documentclass[twoside,11pt]{article}

\usepackage{color}
\usepackage{jmlr2e}
\usepackage{graphics}
\usepackage{epsfig}
\usepackage[ruled]{algorithm2e}
\usepackage{booktabs}
\usepackage{url}
\usepackage{psfrag}
\usepackage{relsize}
\usepackage{amsmath,amsfonts,mathrsfs,mathtools,amssymb}
\usepackage[colorlinks,linkcolor=blue,anchorcolor=green,citecolor=green]{hyperref}
\usepackage{tikz,pgfplots}
\usepackage{tkz-tab}
\usepackage[format=hang,justification=justified,singlelinecheck=false,font=footnotesize]{caption}
\usepackage[font=footnotesize]{subcaption}

\newcommand{\eins}{\boldsymbol{1}}
\newcommand{\vertiii}[1]{{\vert\kern-0.25ex\vert\kern-0.25ex\vert #1 
    \vert\kern-0.25ex\vert\kern-0.25ex\vert}}

\jmlrheading{}{}{~}{~}{~}{2016 Hanyuan Hang, Ingo Steinwart, Yunlong Feng and Johan A.K. Suykens}

\usepackage{tikz,pgfplots}
  \pgfplotsset{compat=newest}
  \pgfplotsset{plot coordinates/math parser=false,trim axis left}
     \newlength\figureheight
    \newlength\figurewidth

\begin{document}

\title{Kernel Density Estimation for Dynamical Systems }

\author{\name Hanyuan Hang \email hanyuan.hang@esat.kuleuven.be\\
\addr Department of Electrical Engineering, ESAT-STADIUS, KU Leuven\\
       Kasteelpark Arenberg 10, Leuven, B-3001, Belgium\\  
       \\     
       \name Ingo Steinwart \email ingo.steinwart@mathematik.uni-stuttgart.de\\
       \addr  Institute for Stochastics and Applications\\
              University of Stuttgart\\
              70569 Stuttgart, Germany\\
              \\
        \name Yunlong Feng \email yunlong.feng@esat.kuleuven.be\\	
        \name Johan A.K. Suykens \email johan.suykens@esat.kuleuven.be\\
\addr Department of Electrical Engineering, ESAT-STADIUS, KU Leuven\\
       Kasteelpark Arenberg 10, Leuven, B-3001, Belgium}


\maketitle

\begin{abstract}
We study the density estimation problem with observations generated by certain dynamical systems that admit a 
unique underlying invariant Lebesgue density. Observations drawn from dynamical systems are not independent and moreover, usual mixing concepts may not be appropriate for measuring the dependence among these observations. By employing the $\mathcal{C}$-mixing concept to measure the dependence, 
we conduct statistical analysis on the consistency and convergence of the kernel density estimator. 
Our main results are as follows: First, we show that with properly chosen bandwidth, the kernel density estimator is universally consistent under $L_1$-norm; Second, we establish convergence rates for the estimator with respect to several classes of dynamical systems under $L_1$-norm. 
In the analysis, the density function $f$ is only assumed to be H\"{o}lder continuous which is  
a weak assumption in the literature of nonparametric density estimation and also more realistic 
in the  dynamical system context. Last but not least, we prove that the same 
convergence rates of the estimator under $L_\infty$-norm and $L_1$-norm can be achieved 
when the density function is H\"{o}lder continuous, compactly supported and bounded.  The bandwidth selection problem of the kernel density estimator for dynamical system is also discussed in our study via numerical simulations.        
\end{abstract}

\begin{keywords}
Kernel density estimation, dynamical system, dependent observations, $\mathcal{C}$-mixing process, universal consistency, convergence rates, covering number, learning theory  
\end{keywords}

\section{Introduction}
Dynamical systems are now ubiquitous and are vital in modeling complex systems, especially when they admit recurrence relations. Statistical inference for dynamical systems has drawn continuous attention across various fields, the topics of which include parameter estimation, invariant measure estimation, forecasting, noise detection, among others. For instance, in the statistics and machine learning community, the statistical inference for certain dynamical systems have been recently studied in \cite{suykens2012artificial,suykens2000recurrent,suykens2002least,zoeter2005change,anghel2007forecasting,steinwart2009consistency,deisenroth2012expectation,mcgoff2015consistency,hang2015bernstein}, just to name a few. We refer the reader to a recent survey in \cite{mcgoff2015statistical} for a general depiction of this topic. The purpose of this study is to investigate the density estimation problem for dynamical systems via a classical nonparametric approach, i.e., kernel density estimation.  

The commonly considered density estimation problem can be stated as follows. Let $x_1$, $x_2$, $\ldots, x_n$ be observations drawn independently  from 
an unknown distribution $P$ on $\mathbb{R}^d$ with the density $f$. Density estimation is concerned with the estimation of the underlying density $f$. Accurate estimation of the density is important for many machine learning tasks such as regression, classification, and clustering problems and  also plays an important role in many real-world applications. Nonparametric density estimators are popular since weaker assumptions are applied to the underlying probability distribution. Typical nonparametric density estimators include the histogram and kernel density estimator. In this study, we are interested in the latter one, namely, \textit{kernel density estimator}, which is also termed as \textit{Parzen-Rosenblatt estimator} \citep{parzen1962estimation,rosenblatt1956remarks} and takes the following form
\begin{align}\label{general_kde}
f_n(x)= \frac{1}{nh^d}\sum_{i=1}^n K\left(\frac{x-x_i}{h}\right). 
\end{align}
Here, $h:=h_n>0$ is a bandwidth parameter and $K$ is a smoothing kernel. In the literature, point-wise and uniform consistency and convergence rates of the estimator $f_n$ to the unknown truth density $f$ under various distance measurements, e.g., $L_1, L_2, L_\infty$, have been established by resorting to the regularity assumptions on the smoothing kernel $\mathcal{K}$, the density $f$, as well as the decay of the bandwidth sequence $\{h_n\}$. Besides the theoretical concerns on the consistency and convergence rates, another practical issue one usually needs to address is the choice of the bandwidth parameter $h_n$, which is also called the \textit{smoothing parameter}. It plays a crucial role in the bias-variance trade-off in kernel density estimation. In the literature, approaches to choosing the smoothing parameter include least-squares cross-validation \citep{bowman1984alternative,rudemo1982empirical}, biased cross-validation \citep{scott1987biased}, plug-in method \citep{park1990comparison,sheather1991reliable}, the double kernel method \citep{devroye1989double},  as well as the method based on a discrepancy principle \citep{eggermont2001maximum}. We refer the reader to \cite{jones1996brief} for a general overview and to \cite{wand1994kernel,cao1994comparative,jones1996progress,devroye1997universal} for more detailed reviews. 

Note that studies on the kernel density estimator \eqref{general_kde} mentioned above heavily rely on the assumption that the observations are drawn in an i.i.d fashion. In the literature of statistics and machine learning, it is commonly accepted that the i.i.d assumption on the given data can be very much restrictive in real-world applications. Having realized this, researchers turn to weaken this i.i.d assumption by assuming that the observations are weakly dependent under various notions of weakly dependence which include $\alpha$-mixing, $\beta$-mixing, and $\phi$-mixing \citep{bradley2005basic}. 
There has been a flurry of work to attack this problem with theoretical and practical concerns, see e.g., 
\cite{masry1983probability,masry1986recursive,robinson1983nonparametric,tran1989recursive,tran1989l1,hart1990data,yu1993density} and \cite{hall1995bandwidth}, under the above notions of dependence. As a matter of fact, the assumed correlation among the observations complicates the  kernel density estimation problem from a technical as well as practical view and also brings inherent barriers. This is because, more frequently, the analysis on the consistency and convergence rates of the kernel density estimator  \eqref{general_kde}  is proceeded by decomposing the error term into bias and variance terms, which correspond to data-free and data-dependent error terms, respectively. The data-free error term can be tackled by using techniques from the approximation theory while the data-dependent error term is usually dealt with by exploiting arguments from the empirical process theory such as concentration inequalities. As a result, due to the existence of dependence among observations and various notions of the dependence measurement, the techniques, and results concerning the data-dependent error term are in general not universally applicable. On the other hand, it has been also pointed out that the bandwidth selection in kernel density estimation under dependence also departures from the independent case, see e.g., \cite{hart1990data,hall1995bandwidth}.   

In fact, when the observations  $x_1, x_2, \ldots, x_n \in\mathbb{R}^d$  
are generated by certain ergodic measure-preserving dynamical systems, 
the problem of kernel density estimation can be even more involved. 
To explain, let us consider a discrete-time ergodic measure-preserving dynamical system 
described by the sequence $(T^n)_{n\geq 1}$ of iterates of an unknown map $T: \Omega\rightarrow \Omega$ 
with $\Omega\subset \mathbb{R}^d$ and a unique invariant measure $P$ 
which possesses a density $f$ with respect to the Lebesgue measure
(rigorous definitions will be given in the sequel).  That is, we have 
\begin{align}\label{model_dynamical_systems}
x_i = T^i(x_0),\quad i=1,2,\ldots,n,
\end{align} 
where $x_0$ is the initial state. It is noticed that in this case the usual mixing concepts are not general enough 
to characterize the dependence among observations generated by \eqref{model_dynamical_systems} 
\citep{maume2006exponential,hang2015bernstein}. 
On the other hand, existing theoretical studies on the consistency and convergence rates of 
the kernel density estimator for i.i.d.~observations 
frequently assume that the density function $f$ is sufficiently smooth, e.g., 
first-order or even second-order smoothness. 
However, more often than not, this requirement can be stringent in the dynamical system context. 
For instance, the Lasota-Yorke map \citep{lasota1973existence} admits a density $f$ which only belongs to the space $BV$, 
i.e., functions of bounded variation. 
This is also the case for the $\beta$-map in Example \ref{beta_map} 
(see Subsection \ref{sec::subsec::dynamicalsys}). 
Therefore, studies on kernel density estimation mentioned above with dependent observations, 
in general, may not be applicable.
For more detailed comparison we refer to Section \ref{CandD}.

In this study, the kernel density estimation problem with observations generated by dynamical systems \eqref{model_dynamical_systems} is approached by making use of a more general concept for measuring the dependence of observations, namely, the so-called $\mathcal{C}$-mixing process (refer to Section \ref{sec::preliminary} for the definition). Proposed in \cite{maume2006exponential} and recently investigated in \cite{hang2015bernstein} and \cite{hanglearning2015}, the $\mathcal{C}$-mixing concept is shown to be more general and powerful in measuring dependence among observations generated by dynamical systems and can accommodate a large class of dynamical systems. Recently, a Bernstein-type exponential inequality for $\mathcal{C}$-mixing processes was established in \cite{hang2015bernstein} and its applications to some learning schemes were explored in \cite{hang2015bernstein} and \cite{hanglearning2015}.

Our main purpose in this paper is to conduct some theoretical analysis 
and practical implementations on the kernel density estimator for dynamical systems. 
The primary concern is the consistency 
and convergence rates of the kernel density estimator \eqref{general_kde} 
with observations generated by dynamical systems \eqref{model_dynamical_systems}. 
The consistency and convergence analysis is conducted under $L_1$-norm, and $L_\infty$-norm, respectively.  
We show that under mild assumptions on the smoothing kernel, 
with properly chosen bandwidth, the estimator is universally consistent under $L_1$-norm.
When the probability distribution $P$ 
possesses a polynomial or exponential decay outside of a radius-$r$ ball in its support, 
under the H\"{o}lder continuity assumptions on the kernel function and the density, 
we obtain almost optimal 
convergence rates under $L_1$-norm.
Moreover, when the probability distribution $P$ is compactly supported, 
which is a frequently encountered setting in the dynamical system context, 
we prove that stronger convergence results of the estimator can be developed, 
i.e., convergence results under $L_\infty$-norm 
which are shown to be of the same order with its $L_1$-norm convergence rates.
Finally, with regard to the practical implementation of the estimator, we also discuss the bandwidth selection problem by performing numerical comparisons among several typical existing selectors that include least squares cross-validation and its variants for dependent observations as well as the double kernel method. We show that the double kernel bandwidth selector proposed in \cite{devroye1989double} can in general work well. Moreover, according to our numerical experiments, we find that bandwidth selection for kernel density estimator of dynamical systems 
is usually ad-hoc in the sense that its performance may depend on the considered dynamical system.

The rest of this paper is organized as follows. Section \ref{sec::preliminary} is a warm-up section 
for the introduction of some notations, definitions and assumptions that are related 
to the kernel density estimation problem and dynamical systems. Section \ref{sec::consistency_convergence} is concerned with the consistency and convergence of the kernel 
density estimator and presents the main theoretical results of this study. 
We discuss the bandwidth selection problem in Section \ref{sec::bandwidth_selection}.
All the proofs of Section \ref{sec::consistency_convergence} 
can be found in Section \ref{proofs}.
We end this paper in Section \ref{sec::conclusion}.

\section{Preliminaries}\label{sec::preliminary}
 
\subsection{Notations}
Throughout this paper, $\lambda^d$ is denoted as the Lebesgue measure on $\mathbb{R}^d$ and   $\|\cdot\|$ is an arbitrary norm on $\mathbb{R}^d$. We denote $B_r$ as the centered ball of $\mathbb{R}^d$ with radius $r$, that is, 
\begin{align*}
 B_r := \{ x = (x_1, \ldots, x_d) \in \mathbb{R}^d : \|x\| \leq r \},
\end{align*}
and its complement $H_r$ as
\begin{align*}
 H_r := \mathbb{R}^d \big\backslash B_r
     = \{ x \in \mathbb{R}^d : \|x\| > r \}.
\end{align*}
Recall that for $1 \leq p < \infty$, the $\ell_p^d$-norm is defined as
$\|x\|_{\ell_p^d} := ( x_1^p + \cdots + x_d^p )^{1/p}$,
and the $\ell_{\infty}^d$-norm is defined as
$\|x\|_{\ell_{\infty}^d} := \max_{i = 1, \ldots, d} |x_i|$.  Let $(\Omega, \mathcal{A}, \mu)$ be a probability space.
We denote $L_p(\mu)$ as  the space of (equivalence classes of) measurable functions $g : \Omega \to \mathbb{R}$ with finite $L_p$-norm $\|g	\|_p$. Then $L_p(\mu)$ together with $\|g\|_p$ forms a Banach space. Moreover, if  $\mathcal{A}'\subset \mathcal{A}$ is a sub-$\sigma$-algebra, then $L_p(\mathcal A', \mu)$ denotes the space of all $\mathcal{A}'$-measurable functions $g\in L_p(\mu)$. Finally, for a Banach space $E$, we write $B_E$ for its closed unit ball.

In what follows, the notation $a_n \lesssim b_n$ means that there exists a positive constant $c$ 
such that $a_n \leq c \, b_n$, for all $n \in \mathbb{N}$. 
With a slight abuse of notation, in this paper, $c, c^\prime$ and $C$ are used interchangeably for positive constants while their values may vary across different lemmas, theorems, and corollaries.

\subsection{Dynamical Systems and $\mathcal{C}$-mixing Processes}\label{sec::subsec::dynamicalsys}
In this subsection, we first introduce the dynamical systems of interest, namely, ergodic measure-preserving dynamical systems. Mathematically, an ergodic measure-preserving \textit{dynamical system} is a system $(\Omega, \mathcal{A}, \mu, T)$ with a mapping $T: \Omega \rightarrow \Omega$ that is measure-preserving, i.e., $\mu(A)=\mu(T^{-1}A)$ for all $A\in\mathcal{A}$, and ergodic, i.e., $T^{-1} A = A$ implies $\mu(A)=0$ or $1$. In this study, we are confined to the dynamical systems in which $\Omega$ is a subset of $\mathbb{R}^d$, $\mu$ is a probability measure that is absolutely continuous with respect to the Lebesgue measure $\lambda$ and admits a unique invariant Lebesgue density $f$. 

In our study, it is assumed that the observations $x_1,x_2,\cdots,x_n$ are generated by the discrete-time dynamical system \eqref{model_dynamical_systems}. Below we list several typical examples of discrete-time dynamical systems that satisfy the above assumptions  \citep{lasota1985probabilistic}:

\begin{example}[Logistic Map]\label{logistic_map}
The Logistic map is defined by	
$$T(x)=\lambda x(1-x), \,\,\,x\in (0,1), \,\,\,\lambda\in [0,4],$$
with a unique invariant Lebesgue density
$$f(x)=\frac{1}{\pi\sqrt{x(1-x)}},\,\,\,0<x<1.$$
\end{example}

\begin{example}[Gauss Map]\label{gauss_map}
The Gauss map is defined by
$$T(x)=\frac{1}{x} \mod \,1,\,\,\,\,x\in(0,1),$$
with a unique invariant Lebesgue density
$$f(x)=\frac{1}{\log 2}\cdot\frac{1}{1+x},\,\,x\in(0,1).$$
\end{example}

\begin{example}[$\beta$-Map]\label{beta_map}
For $\beta>1$, the $\beta$-map is defined as 
$$T(x)=\beta x \mod \, 1,\,\,x\in (0,1),$$ 
with a unique invariant Lebesgue density given by 
$$f(x)=c_\beta\sum_{i\geq 0}\beta^{-(i+1)}\boldsymbol{1}_{[0, T^i (1)]}(x),$$
where $c_\beta$ is a constant chosen such that $f$ has integral $1$. 
\end{example}
 
We now introduce the notion for measuring the dependence among observations from dynamical systems, 
namely, $\mathcal{C}$-mixing process  \citep{maume2006exponential,hang2015bernstein}. 
To this end, let us assume that $(X, \mathcal{B})$ is a measurable space with $X\subset\mathbb{R}^d$. 
Let $\mathcal{X} := (X_n)_{n \geq 1}$ be an $X$-valued stochastic process on $(\Omega, \mathcal A, \mu)$, 
and for $1 \leq i \leq j \leq \infty$, denote by $\mathcal{A}_i^j$ the $\sigma$-algebra 
generated by $(X_i, \ldots, X_j)$. Let $\varGamma:\Omega\rightarrow X$ be a measurable map. 
$\mu_\mathsmaller{\varGamma}$ is denoted as the $\varGamma$-image measure of $\mu$, 
which is defined as $\mu_\mathsmaller{\varGamma}(B):= \mu(\varGamma^{-1}(B))$, $B\subset X$ measurable. 
The process  $\mathcal{X}$ is called \textit{stationary} if 
$\mu_{(X_{i_1+j}, \ldots, X_{i_n+j})} = \mu_{(X_{i_1}, \ldots, X_{i_n})}$ for all $n, j, i_1, \ldots, i_n \geq 1$. 
Denote $P := \mu_{X_1}$. Moreover, for $\psi, \varphi \in L_1(\mu)$ satisfying $\psi\varphi\in L_1(\mu)$, 
we denote the correlation of $\psi$ and $\varphi$ by 
\begin{align*}
\mathrm{cor} (\psi, \varphi) 
:= \int_{\Omega} \psi \, \varphi \, \mathrm{d} \mu -
   \int_{\Omega} \psi \, \mathrm{d} \mu \cdot \int_{\Omega} \varphi \, \mathrm{d} \mu\, .
\end{align*} 
It is shown that several dependency coefficients for $\mathcal{X}$ 
can be expressed in terms of such correlations for restricted sets of functions $\psi$ and $\varphi$ \citep{hang2015bernstein}. In order to introduce the  notion, we also need to define a new norm, which is taken from \cite{maume2006exponential} 
and introduces  restrictions on $\psi$ and $\varphi$ considered here. 
Let us assume that $\mathcal{C}(X)$ is a subspace 
of bounded measurable functions $g : X \rightarrow \mathbb{R}$ and that
we have a semi-norm $\vertiii{\cdot}$ on $\mathcal{C}(X)$. 
For $g \in \mathcal{C}(X)$, we define the $\mathcal{C}$-norm $\|\cdot\|_\mathcal{C}$ by
\begin{align} \label{lambdanorm}
\|g\|_{\mathcal{C}} := \|g\|_{\infty} + \vertiii{g}.
\end{align} 
Additionally, we need to introduce the following restrictions on the semi-norm $\vertiii{\cdot}$.
\begin{assumption}\label{seminorm_assump}
We assume that the following two restrictions on the semi-norm $\vertiii{\cdot}$ hold:
\begin{itemize}
\item [(i)]   $ \vertiii{ g }=0$
         for all constant functions $g \in \mathcal{C}(X)$;
\item [(ii)] $ \vertiii{ e^g }  \leq  \bigl\| e^g \bigr\|_{\infty} \vertiii{g},  \,\,\,\,\,\, \quad  g \in \mathcal{C}(X)$. 
\end{itemize}
\end{assumption}
Note that the first constraint on the semi-norm in Assumption \ref{seminorm_assump} implies its shift invariance on $\mathbb{R}$ while the inequality constraint can be viewed as an abstract  \textit{chain rule}  if one views the semi-norm as a norm describing aspects of the smoothness of $g$, as discussed in \cite{hang2015bernstein}.  In fact, it is easy to show that  the following function classes, which are probably also the most frequently considered in the dynamical system context, satisfy Condition $(i)$ in Assumption \ref{seminorm_assump}. Moreover, they also satisfy Condition $(ii)$ in Assumption \ref{seminorm_assump}, as shown in \citep{hang2015bernstein}:	
\begin{itemize}
\item $L_\infty(X)$: The class of bounded functions on $X$;
\item  $BV(X)$:    The  class of bounded variation functions on $X$;  
\item   $C_{b,\alpha}(X)$:    The   class of bounded and $\alpha$-H\"{o}lder continuous functions on  $X$; 
\item   $\text{Lip}(X)$:     The   class of Lipschitz continuous functions  on  $X$; 
\item  $C^1(X)$:    The class of continuously differentiable functions  on $X$.
\end{itemize}

\begin{definition}[\bf $\mathcal{C}$-mixing Process] 
Let $(\Omega, \mathcal A, \mu)$ be a probability space, $(X, \mathcal{B})$ be a measurable space,
$\mathcal{X}:=(X_i)_{i \geq 1}$ be an $X$-valued, stationary process on $\Omega$, and 
$\|\cdot\|_{\mathcal{C}}$ be defined by \eqref{lambdanorm} for some semi-norm $\vertiii{\cdot}$.
Then, for $n \geq 1$, we define the $\mathcal{C}$-mixing coefficients by
\begin{align*}  
\phi_{\mathcal{C}}(\mathcal{X}, n)
:= \sup \big \{ \mathrm{cor}(\psi, g(X_{k+n})) : k \geq 1, \psi \in B_{L_1(\mathcal A_1^k, \mu)}, g \in B_{\mathcal{C}(X)}
\big\},
\end{align*} 
and the time-reversed $\mathcal{C}$-mixing coefficients by
\begin{align*} 
\phi_{\mathcal{C}, \text{rev}}(\mathcal{X}, n)
:= \sup \big\{ 
\mathrm{cor}(g(X_k), \varphi) : 
k \geq 1, g \in B_{\mathcal{C}(X)}, \varphi \in B_{L_1(\mathcal A_{k+n}^{\infty}, \mu)}  \big\}.
\end{align*}
Let $(d_n)_{n \geq 1}$ be a strictly positive sequence converging to $0$. 
We say that \textbf{$\mathcal{X}$  is \emph{(time-reversed) $\mathcal{C}$-mixing}} with rate $(d_n)_{n \geq 1}$, 
if we have $\phi_{\mathcal{C},\text{rev}}(\mathcal{X}, n) \leq d_n$ for all $n \geq 1$.
Moreover, if $(d_n)_{n \geq 1}$ is of the form
\begin{align*}   
d_n := c_0 \exp \bigl( - b n^{\gamma} \bigr), ~~~~~~ n \geq 1, 
\end{align*}
for some constants $c_0 > 0$, $b > 0$, and $\gamma > 0$, then $\mathcal{X} $ 
is called  \textbf{\emph{geometrically} (time-reversed) $\mathcal{C}$-mixing}. 
\end{definition} 

From the above definition, we see that a $\mathcal{C}$-mixing process is defined in association with an underlying function space. For the above listed function spaces, i.e., $L_\infty(X)$, $BV(X)$, $C_{b,\alpha}(X)$, $\text{Lip}(X)$ and $C^1(X)$, the increase of the smoothness enlarges the class of the associated stochastic processes, as illustrated in \cite{hang2015bernstein}. Note that the classical $\phi$-mixing process is essentially a $\mathcal{C}$-mixing process associated with the function space $L_\infty(X)$. Note also that not all $\alpha$-mixing processes are $\mathcal{C}$-mixing, and vice versa.  We refer the reader to \cite{hang2015bernstein} for the relations among $\alpha$-, $\phi$- and $\mathcal{C}$-mixing processes.

On the other hand, under the above notations and definitions, from Theorem $4.7$ in \cite{maume2006exponential}, we know that Logistic map in Example \ref{logistic_map} is geometrically time-reversed $\mathcal{C}$-mixing with $\mathcal{C}=\text{Lip}(0,1)$ while Theorem $4.4$ in  \cite{maume2006exponential} (see also Chapter $3$ in \cite{baladi2000positive}) indicates that Gauss map in Example \ref{gauss_map} is geometrically time-reversed $\mathcal{C}$-mixing with $\mathcal{C}=BV(0,1)$. Example \ref{beta_map} is also geometrically time-reversed $\mathcal{C}$-mixing with $\mathcal{C}=BV(0,1)$ according to \cite{maume2006exponential}. For more examples of geometrically time-reversed $\mathcal{C}$-mixing dynamical systems, the reader is referred to Section $2$ in \cite{hang2015bernstein}.

\subsection{Kernel Density Estimation: Assumptions and Formulations} 
For the smoothing kernel $K$ in the kernel density estimator, in this paper we consider its following general form, namely, $d$-dimensional smoothing kernel: 
 
\begin{definition}\label{kernel}
A bounded, monotonically decreasing function $K : [0, \infty) \to [0, \infty)$ is  a \textbf{$d$-dimensional smoothing kernel} if 
\begin{align}  \label{kernelFormular}
\int_{\mathbb{R}^d} K(\|x\|) \, \mathrm{d}x =: \kappa \in (0, \infty).
\end{align}
\end{definition}

The choice of the norm in Definition \ref{kernel} does not matter since all norms on $\mathbb{R}^d$ are equivalent. To see this, let $\|\cdot\|'$ be another norm on $\mathbb{R}^d$ satisfying $\kappa \in (0, \infty)$. From the equivalence of the two norms on $\mathbb{R}^d$, one can find a positive constant  $c$ such that $\|x\| \leq c \|x\|'$ holds for all $x \in \mathbb{R}$. Therefore, easily we have
\begin{align*}
\int_{\mathbb{R}^d} K(\|x\|') \, \mathrm{d}x
\leq \int_{\mathbb{R}^d} K \left( \|x\|/c  \right) \, \mathrm{d}x
 = c^d \int_{\mathbb{R}^d} K(\|x\|) \, \mathrm{d}x < \infty.
\end{align*}
In what follows, without loss of generality, we assume that the constant $\kappa$ in Definition \ref{kernel} equals to $1$.

\begin{lemma}\label{lemma_kernel}
A bounded, monotonically decreasing function $K : [0, \infty) \to [0, \infty)$
is a $d$-dimensional smoothing kernel if and only if
\begin{align*}
\int_0^{\infty} K(r) r^{d-1} \, \mathrm{d}r \in (0, \infty).
\end{align*}
\end{lemma}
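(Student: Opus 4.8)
The plan is to pass to polar coordinates in $\mathbb{R}^d$, which immediately reduces the $d$-dimensional integral $\int_{\mathbb{R}^d} K(\|x\|)\,\mathrm{d}x$ to a one-dimensional integral against the weight $r^{d-1}$. Since Definition \ref{kernel} already notes that the value of $\kappa$ does not depend on the choice of norm (all norms on $\mathbb{R}^d$ being equivalent), it suffices to carry out the argument for the Euclidean norm $\|\cdot\|_{\ell_2^d}$, for which the polar-coordinate change of variables is classical: there is a finite positive constant $\omega_{d-1}$, the surface area of the unit sphere $S^{d-1}$, such that
\begin{align*}
\int_{\mathbb{R}^d} K(\|x\|_{\ell_2^d})\,\mathrm{d}x = \omega_{d-1} \int_0^\infty K(r)\, r^{d-1}\,\mathrm{d}r.
\end{align*}
Here one uses that $K$ is nonnegative and measurable (being monotone), so Tonelli's theorem applies and the iterated integral is well-defined in $[0,\infty]$.

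Given this identity, both implications are immediate. If $K$ is a $d$-dimensional smoothing kernel, then the left-hand side lies in $(0,\infty)$, hence so does $\int_0^\infty K(r)r^{d-1}\,\mathrm{d}r = \kappa/\omega_{d-1}$. Conversely, if $\int_0^\infty K(r)r^{d-1}\,\mathrm{d}r \in (0,\infty)$, then $\int_{\mathbb{R}^d}K(\|x\|_{\ell_2^d})\,\mathrm{d}x = \omega_{d-1}\int_0^\infty K(r)r^{d-1}\,\mathrm{d}r \in (0,\infty)$, so the defining condition \eqref{kernelFormular} holds (for the Euclidean norm, and therefore, by the norm-equivalence remark following Definition \ref{kernel}, for an arbitrary norm as well, up to changing the finite positive constant). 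The hypotheses that $K$ is bounded and monotonically decreasing are inherited in both directions since they are conditions on $K$ alone, not on the integral.

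The only genuine content is the polar-coordinate formula, and the only mild subtlety worth spelling out is measurability: a monotone function on $[0,\infty)$ is Borel measurable, so $x \mapsto K(\|x\|)$ is measurable on $\mathbb{R}^d$ and the integrals above make sense as Lebesgue integrals, with Tonelli justifying the reduction regardless of finiteness. I do not expect any real obstacle here; the statement is essentially a bookkeeping lemma that isolates the scalar integrability condition $\int_0^\infty K(r)r^{d-1}\,\mathrm{d}r < \infty$ (together with positivity) as the operative hypothesis, so that later results can be phrased directly in terms of it. If one wishes to avoid invoking the exact constant $\omega_{d-1}$, it is enough to note that $\{x : \|x\|_{\ell_2^d} \in (a,b]\}$ has Lebesgue measure $v_d(b^d - a^d)$ with $v_d$ the volume of the unit ball, and then approximate $K$ from below and above by simple functions constant on such annuli, using monotonicity of $K$; letting the mesh go to zero yields the same identity with $\omega_{d-1} = d\,v_d$.
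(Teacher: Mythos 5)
Your proof is correct and follows essentially the same route as the paper: reduce to the Euclidean norm via norm equivalence and apply the polar-coordinate identity $\int_{\mathbb{R}^d} K(\|x\|_{\ell_2^d})\,\mathrm{d}x = d\tau_d \int_0^\infty K(r)r^{d-1}\,\mathrm{d}r$. The extra remarks on Tonelli, measurability of monotone functions, and the annulus approximation are fine but not needed beyond what the paper already records.
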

\begin{proof}
From the above discussions, it suffices to consider
the integration constraint for the kernel function $K$ with respect to  the Euclidean norm
$\|\cdot\|_{\ell_2^d}$. We thus have
\begin{align*}
\int_{\mathbb{R}^d} K \big( \|x\|_{\ell_2^d} \big) \, \mathrm{d}x
= d \tau_d \int_0^{\infty} K(r) r^{d-1} \, \mathrm{d}r,
\end{align*}
where $\tau_d = \pi^{d/2}\big/\Gamma \big( \frac{d}{2} + 1 \big)$
is the volume of the unit ball $B_{\ell_2^d}$ of the Euclidean space $\ell_2^d$. This completes the proof of Lemma \ref{lemma_kernel}.
\end{proof}

Let $r\in [0,+\infty)$ and denote $\boldsymbol{1}_A$ as the indicator function. Several common examples of $d$-dimensional smoothing kernels $K(r)$ include the Naive kernel $\eins_{[0,1]}(r)$, the Triangle kernel $(1 - r) \eins_{[0,1]}(r)$, the Epanechnikov kernel $(1 - r^2) \eins_{[0,1]}(r)$, and the Gaussian kernel $e^{-r^2}$. In this paper, we are interested in the kernels that satisfy the following restrictions on their shape and regularity:
\begin{assumption}\label{assump_kernel}
For a fixed function space $\mathcal{C}(X)$, we make the following assumptions on the $d$-dimensional smoothing kernel $K$:
\begin{itemize}
\item [(i)] $K$ is H\"{o}lder continuous with exponent $\beta$ with $\beta\in[0,1]$;
\item [(ii)]
$ \int_0^{\infty} K(r) r^{\beta+d-1} \, \mathrm{d}r < \infty$;
\item [(iii)] For all $x \in \mathbb{R}^d$, we have $\vertiii{K(\|x - \cdot\|/h)} \in \mathcal{C}(X)$ and
there exists a function $\varphi : (0, \infty) \to (0, \infty)$ such that 
$$\sup_{x \in \mathbb{R}^d}\vertiii{K(\|x - \cdot\|/h)} \leq  \varphi(h).$$ 
\end{itemize}
\end{assumption}

It is easy to verify that for  $\mathcal{C} = \text{Lip}$,
Assumption \ref{assump_kernel} is met for 
the Triangle kernel,
the Epanechnikov kernel, and the Gaussian kernel.
Particularly, Condition $(iii)$ holds for all these kernels 
with $\vertiii{\cdot}$ being the Lipschitz norm and $\varphi(h)\leq \mathcal{O}(h^{-1})$. 
Moreover, as we shall see below, not all the conditions in Assumption  \ref{assump_kernel} 
are required for the analysis conducted in this study and conditions assumed on the kernel will be specified explicitly. 

We now show that given a $d$-dimensional smoothing kernel $K$ as in Definition \ref{kernel}, one can easily construct a probability density on $\mathbb{R}^d$.

\begin{definition}[$K$-Smoothing of a Measure]   
Let $K$ be a $d$-dimensional smoothing kernel and $Q$ be a probability measure on $\mathbb{R}^d$.
Then, for $h > 0$,
\begin{align*}
f_{Q,h}(x):=f_{Q,K,h}(x) :  
 = h^{-d} \int_{\mathbb{R}^d} K \left(  \|x-x'\|/h \right) \,\mathrm{d}Q(x'),
 \,\,\,\,\,\,\,\,
 x\in\mathbb{R}^d,
\end{align*}
is called a \textbf{$K$-smoothing of $Q$}.
\end{definition}

It is not difficult to see that $f_{Q,h}$ defines a probability density on $\mathbb{R}^d$, since
Fubini's theorem yields that
\begin{align*}
\int_{\mathbb{R}^d} f_{Q,h}(x) \, \mathrm{d}x
& = \int_{\mathbb{R}^d} \int_{\mathbb{R}^d} h^{-d} K \left(  \|x-x'\|/h \right) \, \mathrm{d}Q(x') \, \mathrm{d}x
\\
& = \int_{\mathbb{R}^d} \int_{\mathbb{R}^d} K(\|x\|) \, \mathrm{d}x \, \mathrm{d}Q(x') = 1.
\end{align*}
Let us denote $K_h: \mathbb{R}^d\rightarrow [0,+\infty)$ as 
\begin{align} \label{K_h}
K_h(x) : = h^{-d}K\left(  \|x\|/h\right),\,\,x\in\mathbb{R}^d.
\end{align} 
Note that $K_h$ also induces a density function on $\mathbb{R}^d$ since there holds $\|K_h\|_1 = 1$.

For the sake of notational simplification, in what follows, we introduce the convolution operator $*$. 
Under this notation, we then see that $f_{Q,h}$ is the density of the measure 
that is the convolution of the  measure $Q$ and $\nu_h = K_h \, \mathrm{d}\lambda^d$. 
Recalling that $P$ is a probability measure on $\mathbb{R}^d$ 
with the corresponding density function $f$, by taking $Q:=P$ with $\mathrm{d} P = f \, \mathrm{d}\lambda^d$, we have 
\begin{align}\label{KDE_population_version}
f_{P,h} = K_h * f = f* K_h=K_h*\mathrm{d}P.
\end{align}
Since $K_h \in L_\infty(\mathbb{R}^d)$ and $f \in L_1(\mathbb{R}^d)$, 
from Proposition (8.8) in \cite{Folland99}
we know that $f_{P,h}$ is uniformly continuous and bounded. Specifically, when $Q$ is the empirical measure $D_n  = \frac{1}{n} \sum_{i=1}^n \delta_{x_i}$, the \textit{kernel density estimator for dynamical systems} in this study can be expressed as
\begin{align}\label{KDE_formal}
\begin{split}
f_{D_n,h}(x)   = K_h* \mathrm{d} D_n(x) 
             = \frac{1}{n h^d} \sum_{i=1}^n K \bigg( \frac{\|x-x_i\|}{h} \bigg).
\end{split}
\end{align}
From now on, for notational simplicity, we will suppress the subscript $n$ of $D_n$ and denote $D:=D_n$, e.g., $f_{D,h} := f_{D_n,h}$.

\section{Consistency and Convergence Analysis}\label{sec::consistency_convergence}
In this section, we study the consistency and convergence rates of $f_{D,h}$ to the true density $f$ under $L_1$-norm and also $L_\infty$-norm for some special cases. Recall that $f_{D,h}$ is a nonparametric density estimator and so the criterion that measures its goodness-of-fit matters, which, for instance, includes $L_1$-distance, $L_2$-distance and $L_\infty$-distance.

In the literature of kernel density estimation, probably the most frequently employed criterion is the $L_2$-distance of the difference between $f_{D,h}$ and $f$, since it entails an exact bias-variance decomposition and can be analyzed relatively easily by using Taylor expansion involved arguments. However, it is 	argued in \cite{devroye1985nonparametric} (see also \cite{devroye2001combinatorial}) that  $L_1$-distance could be a more reasonable choice since: it is invariant under monotone transformations; it is always well-defined as a metric on the space of density functions; it is also proportional to the total variation metric and so leads to better visualization of the closeness to the true density function than $L_2$-distance. The downside of using $L_1$-distance is that it does not admit an exact bias-variance decomposition and the usual Taylor expansion involved techniques for error estimation may not apply directly. Nonetheless, if we introduce the intermediate estimator $f_{P,h}$ in \eqref{KDE_population_version}, obviously the following inequality holds 
\begin{align}\label{error decomposition}
\|f_{D,h} - f\|_1 \leq \|f_{D,h} - f_{P,h}\|_1 + \|f_{P,h} - f\|_1.
\end{align}
The consistency and convergence analysis in our study will be mainly conducted in the $L_1$ sense with the help of inequality \eqref{error decomposition}. Besides, for some specific case, i.e., when the density $f$ is compactly supported, we are also concerned with the consistency and convergence of $f_{D,h}$ to $f$ under $L_\infty$-norm. In this case, there also holds the following inequality
\begin{align}\label{error decomposition_infinity}
\|f_{D,h} - f\|_\infty \leq \|f_{D,h} - f_{P,h}\|_\infty + \|f_{P,h} - f\|_\infty.
\end{align}

It is easy to see that the first error term on the right-hand side  of \eqref{error decomposition}  
or \eqref{error decomposition_infinity}  is stochastic due to the empirical measure $D$ while the second one is deterministic because of its sampling-free nature. Loosely speaking, the first error term corresponds to the variance of the estimator $f_{D,h}$, while the second one can be treated as its bias although \eqref{error decomposition} or \eqref{error decomposition_infinity} is not an exact error decomposition. In our study, we proceed with the consistency and convergence analysis on $f_{D,h}$ by bounding the two error terms, respectively.

\subsection{Bounding the Deterministic Error Term}

Our first theoretical result on bounding the deterministic error term shows that, given a $d$-dimensional kernel $K$, the $L_1$-distance between its $K$-smooth of the measure $P$, i.e., $f_{P,h}$, and $f$ can be arbitrarily small by choosing the bandwidth appropriately.  Moreover, under mild assumptions on the regularity of $f$ and $K$, the $L_\infty$-distance between the two quantities possesses a polynomial decay with respect to the bandwidth $h$. 
	
\begin{theorem} \label{ApproximationError}
Let $K$ be a $d$-dimensional smoothing kernel. 
\begin{itemize}
 \item [(i)]
 For any $\varepsilon > 0$, there exists $0<h_\varepsilon \leq 1$
such that for any $h \in (0, h_\varepsilon]$ we have
\begin{align*}
\|f_{P,h} - f\|_1 \leq \varepsilon.
\end{align*}
 \item [(ii)]
 If $K$ satisfies Condition $(ii)$ in Assumption \ref{assump_kernel} and
$f$ is $\alpha$-H\"{o}lder continuous with $\alpha\leq \beta$,
then there holds
\begin{align*}
\|f_{P,h} - f\|_\infty \lesssim   h^{\alpha}.
\end{align*}
\end{itemize}

\end{theorem}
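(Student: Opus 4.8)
The plan is to write $f_{P,h} = K_h * \mathrm{d}P$ and use the fact that $\|K_h\|_1 = 1$ to rewrite the difference $f_{P,h}(x) - f(x)$ as an average of the translates $f(x - hy) - f(x)$ against the probability density $K(\|\cdot\|)\,\mathrm{d}\lambda^d$. Concretely, after the change of variables $x' = x - hy$ inside the convolution,
\begin{align*}
f_{P,h}(x) - f(x) = \int_{\mathbb{R}^d} K(\|y\|)\,\bigl( f(x - hy) - f(x) \bigr)\,\mathrm{d}y.
\end{align*}
This representation reduces both parts to controlling the $L_1$- or $L_\infty$-norm of the shift $f(\cdot - hy) - f(\cdot)$, weighted and integrated over $y$.

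For part (i), I would invoke the standard continuity-of-translation property in $L_1$: for $f \in L_1(\mathbb{R}^d)$, $\|f(\cdot - z) - f\|_1 \to 0$ as $z \to 0$. Applying Minkowski's inequality for integrals to the displayed representation gives
\begin{align*}
\|f_{P,h} - f\|_1 \leq \int_{\mathbb{R}^d} K(\|y\|)\,\|f(\cdot - hy) - f\|_1\,\mathrm{d}y,
\end{align*}
and since the integrand is bounded by $2\|f\|_1 K(\|y\|)$, which is integrable, and tends to $0$ pointwise in $y$ as $h \to 0$, dominated convergence yields $\|f_{P,h} - f\|_1 \to 0$. Choosing $h_\varepsilon$ small enough (and capping it at $1$) gives the claim. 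One should double-check the cosmetic point that the density $f$ is only defined $\lambda^d$-a.e., but this causes no trouble since all the quantities are integrals.

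For part (ii), I would use the $\alpha$-Hölder continuity of $f$: there is a constant $c_f$ with $|f(x - hy) - f(x)| \leq c_f \|hy\|^\alpha = c_f h^\alpha \|y\|^\alpha$. Plugging into the representation and taking the supremum over $x$,
\begin{align*}
\|f_{P,h} - f\|_\infty \leq c_f h^\alpha \int_{\mathbb{R}^d} K(\|y\|)\,\|y\|^\alpha\,\mathrm{d}y,
\end{align*}
and by the polar-coordinates computation already used in Lemma \ref{lemma_kernel}, this last integral is a constant multiple of $\int_0^\infty K(r)\,r^{\alpha + d - 1}\,\mathrm{d}r$, which is finite: for $r \leq 1$ it is dominated by $\int_0^\infty K(r) r^{d-1}\,\mathrm{d}r < \infty$ (boundedness of $K$ near $0$ and $\alpha \geq 0$), and for $r \geq 1$ it is dominated by Condition $(ii)$ in Assumption \ref{assump_kernel}, namely $\int_0^\infty K(r) r^{\beta + d - 1}\,\mathrm{d}r < \infty$, since $\alpha \leq \beta$. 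This gives $\|f_{P,h} - f\|_\infty \lesssim h^\alpha$.

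The only genuinely delicate point is part (i): unlike part (ii), there is no quantitative modulus available (we assume nothing beyond $f \in L_1$), so the argument is necessarily a soft compactness/density argument, and the care needed is in justifying the interchange of the $L_1$-norm with the integral over $y$ (Minkowski's integral inequality) and the passage to the limit (dominated convergence with the envelope $2\|f\|_1 K(\|y\|)$). Everything else is a routine change of variables plus the polar-coordinate identity already established.
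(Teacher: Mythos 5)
Your proof is correct, and part (ii) is essentially identical to the paper's argument (Hölder continuity inside the convolution representation, then polar coordinates; your justification of the finiteness of $\int_0^\infty K(r)r^{\alpha+d-1}\,\mathrm{d}r$ by splitting at $r=1$ and using $\alpha\le\beta$ is actually spelled out more carefully than in the paper). Part (i), however, takes a genuinely different route. The paper argues directly by density: it picks $\bar f\in C_c(\mathbb{R}^d)$ with $\|f-\bar f\|_1\le\varepsilon/3$, controls $\|f*K_h-\bar f*K_h\|_1$ by Young's inequality, truncates the kernel to a ball $B_r$ chosen so that the tail $\int_{H_r}K(\|x\|)\,\mathrm{d}x$ is small, and then uses the uniform continuity and compact support of $\bar f$ to handle the remaining term explicitly. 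You instead apply Minkowski's integral inequality to the translate representation and invoke the continuity of translations in $L_1$ together with dominated convergence against the envelope $2\|f\|_1K(\|\cdot\|)$. Your version is shorter and is the textbook approximate-identity argument; the price is that it outsources the density step to the translation-continuity lemma (which is itself proved via density of $C_c$ in $L_1$), whereas the paper's self-contained decomposition makes the dependence on the kernel tail and the modulus of continuity of $\bar f$ explicit. Both are valid and yield the same qualitative conclusion, with the existence of $h_\varepsilon$ following in your argument from the convergence $\|f_{P,h}-f\|_1\to0$ as $h\to0$.
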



We now show that the $L_1$-distance between $f_{P,h}$ and $f$ can be upper bounded by their difference (in the sense of $L_\infty$-distance) on a compact domain of $\mathbb{R}^d$ together with their difference (in the sense of $L_1$-distance) outside this domain. As we shall see later, this observation will entail us to consider different classes of the true densities $f$. The following result is crucial in our subsequent analysis on the consistency and convergence rates of $f_{D,h}$.  

\begin{theorem}\label{ComparisionNorms}
Assume that $K$ is a $d$-dimensional smoothing kernel that satisfies Conditions $(i)$ and $(ii)$ in Assumption \ref{assump_kernel}. For $h \leq 1$ and $r \geq 1$, we have
\begin{align*}
\|f_{P,h} - f\|_1
\lesssim   r^d \|f_{P,h} - f\|_{\infty} +   P ( H_{r/2} )
     +   ( h/r )^{\beta}.
\end{align*}
\end{theorem}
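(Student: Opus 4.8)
The plan is to split the integral $\|f_{P,h}-f\|_1 = \int_{\mathbb{R}^d}|f_{P,h}-f|\,\mathrm{d}\lambda^d$ according to a ball $B_r$ and its complement $H_r = \mathbb{R}^d\setminus B_r$, and then control each of the two pieces separately. On $B_r$ one bounds $|f_{P,h}-f|$ pointwise by the $L_\infty$-distance and integrates, which costs a factor $\lambda^d(B_r)\lesssim r^d$; this gives the first term $r^d\|f_{P,h}-f\|_\infty$. On $H_r$ we use the triangle inequality $\int_{H_r}|f_{P,h}-f| \le \int_{H_r} f + \int_{H_r} f_{P,h} = P(H_r) + \int_{H_r} f_{P,h}\,\mathrm{d}\lambda^d$, so it remains to show $\int_{H_r} f_{P,h}\,\mathrm{d}\lambda^d \lesssim P(H_{r/2}) + (h/r)^\beta$.

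For the tail mass of $f_{P,h}$, recall $f_{P,h} = K_h * \mathrm{d}P$, so by Fubini
\begin{align*}
\int_{H_r} f_{P,h}(x)\,\mathrm{d}x
= \int_{\mathbb{R}^d}\int_{H_r} K_h(x-x')\,\mathrm{d}x\,\mathrm{d}P(x')
= \int_{\mathbb{R}^d} \Big(\int_{H_r} K_h(x-x')\,\mathrm{d}x\Big)\,\mathrm{d}P(x').
\end{align*}
Split the outer integral over $x' \in B_{r/2}$ and $x' \in H_{r/2}$. On $H_{r/2}$ the inner integral is at most $\|K_h\|_1 = 1$, contributing $P(H_{r/2})$. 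On $B_{r/2}$, if $\|x\|>r$ and $\|x'\|\le r/2$ then $\|x-x'\|>r/2$, so the inner integral is bounded by $\int_{\{\|y\|>r/2\}} K_h(y)\,\mathrm{d}y = \int_{\{\|z\|>r/(2h)\}} K(\|z\|)\,\mathrm{d}z$ after substituting $z=y/h$. The point is to show this tail of $K$ is $\lesssim (h/r)^\beta$: writing it in polar-type coordinates as $\lesssim \int_{r/(2h)}^\infty K(s)s^{d-1}\,\mathrm{d}s$ and using $s^{d-1} = s^{-\beta}\cdot s^{\beta+d-1} \le (2h/r)^\beta s^{\beta+d-1}$ on the range $s > r/(2h)$, Condition $(ii)$ of Assumption \ref{assump_kernel} ($\int_0^\infty K(r)r^{\beta+d-1}\,\mathrm{d}r<\infty$) makes the remaining integral finite, giving the factor $(h/r)^\beta$. (Since $r\ge 1$ and $h\le 1$, the monotone decrease of $K$ also keeps everything well-behaved.)

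Collecting the three contributions — $r^d\|f_{P,h}-f\|_\infty$ from $B_r$, and $P(H_{r/2})$ plus $(h/r)^\beta$ from $H_r$ — yields the claimed bound. I expect the main technical point to be the tail estimate for $K$: converting the $d$-dimensional tail integral of $K_h$ into a one-dimensional integral against $K(s)s^{d-1}$ and then trading the polynomial gap $s>r/(2h)$ for the factor $(h/r)^\beta$ while invoking Condition $(ii)$. Everything else is the routine ball/complement decomposition and Fubini; note that Condition $(i)$ (Hölder continuity of $K$) is presumably used only to ensure the constant in $(ii)$ and the measurability/continuity needed for Fubini, so the real work is in step three.
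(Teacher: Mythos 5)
Your proposal is correct and follows essentially the same route as the paper: the $B_r$/$H_r$ split with $\lambda^d(B_r)\lesssim r^d$ on the ball, the triangle inequality plus Fubini on $H_r$, and the tail estimate trading $s^{-\beta}\le (2h/r)^{\beta}$ against Condition $(ii)$. The only cosmetic difference is that the paper changes variables and splits the \emph{kernel} variable at radius $r/(2h)$ (its Lemma \ref{GeneralRandEstimate}) whereas you split the \emph{data} variable $x'$ at radius $r/2$; the two splits are dual and lead to the identical bound $P(H_{r/2})+(h/r)^{\beta}$.
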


\subsection{Bounding the Stochastic Error Term}
We now proceed with the estimation of the stochastic error term $\|f_{D,h} - f_{P,h}\|_1$ 
by establishing probabilistic oracle inequalities. For the sake of readability, 
let us start with an overview of the analysis conducted in this subsection for bounding the stochastic error term. 

\subsubsection{An Overview of the Analysis}
In this study, the stochastic error term is tackled by using capacity-involved arguments 
and the Bernstein-type inequality established in \cite{hang2015bernstein}. 
In the sequel, for any fixed $x\in\Omega \subset \mathbb{R}^d$, we write 
\begin{align}\label{k_x_h}
k_{x,h}:= h^{-d} K  (  \|x - \cdot\|/h ),
\end{align}
and we further denote the centered random variable $\widetilde{k}_{x,h}$ on $\Omega$ as 
\begin{align}\label{k_x_h_mean}
\widetilde{k}_{x,h}  := k_{x,h} - \mathbb{E}_P k_{x,h}.
\end{align}
It thus follows that
\begin{align*}
\mathbb{E}_D \widetilde{k}_{x,h} 
= \mathbb{E}_D k_{x,h} - \mathbb{E}_P k_{x,h} 
= f_{D,h}(x) - f_{P,h}(x),
\end{align*}
and consequently we have
\begin{align*}
\|f_{D,h} - f_{P,h}\|_1
= \int_{\mathbb{R}^d} |\mathbb{E}_D \widetilde{k}_{x,h}| \, \mathrm{d}x,
\end{align*}
and
\begin{align*}
\|f_{D,h} - f_{P,h}\|_{\infty}
= \sup_{x\in\Omega} |\mathbb{E}_D \widetilde{k}_{x,h}|.
\end{align*}
As a result, in order to bound $\|f_{D,h} - f_{P,h}\|_1$, it suffices to bound the supremum of the empirical process  $ \mathbb{E}_D \widetilde{k}_{x,h} $ indexed by $x\in \mathbb{R}^d$. For any $r>0$, there holds
\begin{align*}
\|f_{D,h} - f_{P,h}\|_1
= \int_{B_r} |\mathbb{E}_D \widetilde{k}_{x,h}| \, \mathrm{d}x+\int_{H_r} |\mathbb{E}_D \widetilde{k}_{x,h}| \, \mathrm{d}x.	 
\end{align*}
The second term of the right-hand side of the above equality can be similarly dealt with 
as in the proof of Theorem \ref{ComparisionNorms}. 
In order to bound the first term, we define $\widetilde{\mathcal{K}}_{h,r}$ as the function set of $\widetilde{k}_{x,h}$ that corresponds to $x$ which lies on a radius-$r$ ball of $\mathbb{R}^d$:
\begin{align*} 
\widetilde{\mathcal{K}}_{h,r} := \bigl\{ \widetilde{k}_{x,h} : x \in B_r \bigr\} \subset L_\infty(\mathbb{R}^d).
\end{align*}
The idea here is to apply capacity-involved arguments and the Bernstein-type exponential inequality 
in \cite{hang2015bernstein}
to the function set $\widetilde{\mathcal{K}}_{h,r}$ and the associated empirical process $\mathbb{E}_{D}\widetilde{k}_{x,h}$. The difference between $f_{D,h}$ and $f_{P,h}$ under the $L_\infty$-norm can be bounded analogously. Therefore, to further our analysis, we first need to bound the capacity of $\widetilde{\mathcal{K}}_{h,r}$ in terms of covering numbers.  

\subsubsection{Bounding the Capacity of the Function Set $\widetilde{\mathcal{K}}_{h,r}$}

\begin{definition}[\bf Covering Number]
Let $(X, d)$ be a metric space and $A \subset X$.
For $\varepsilon > 0$, the \textbf{$\varepsilon$-covering number} of $A$ is denoted as 
\begin{align*}
\mathcal{N}(A,d,\varepsilon)
:= \min \left\{ n \geq 1 : \exists \,x_1, \cdots, x_n \in X   \, \,\text{such that}  \,\, \, A \subset \bigcup_{i=1}^n B_d(x_i,\varepsilon) \right\},
\end{align*}
where  $B_d(x,\varepsilon)
:= \left\{ x' \in X : d(x, x') \leq \varepsilon \right\}$.
\end{definition}

For a fixed $r \geq 1$, we consider the function set
\begin{align*}
\mathcal{K}_{h,r} := \{ k_{x,h} : x \in B_r \} \subset L_\infty(\mathbb{R}^d).
\end{align*}
The following proposition provides an estimate of the covering number of $\mathcal{K}_{h,r}$.

\begin{proposition}\label{kernelSpaceCN}
Let $K$ be a $d$-dimensional smoothing kernel that satisfies Conditions $(i)$ in Assumption \ref{assump_kernel} 
and $h \in (0,1]$.
 Then there exists a positive constant $c^\prime$ such that
 for all $\varepsilon \in (0,1]$, we have 
\begin{align*}
\mathcal{N} (\mathcal{K}_{h,r}, \|\cdot\|_\infty, \varepsilon)
\leq c^\prime r^d h^{-d-\frac{d^2}{\beta}} \varepsilon^{-\frac{d}{\beta}}.  
\end{align*}
\end{proposition}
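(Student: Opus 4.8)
The plan is to bound the covering number of $\mathcal{K}_{h,r}$ by first reducing the $L_\infty$-metric on the function set to the metric on the index set $B_r$, and then counting how many balls of an appropriate radius are needed to cover $B_r$. The key analytic input is a Lipschitz-type (here H\"older-type) estimate: for $x, y \in \mathbb{R}^d$, I would show that
\begin{align*}
\|k_{x,h} - k_{y,h}\|_\infty
= h^{-d} \sup_{z \in \mathbb{R}^d} \bigl| K(\|x-z\|/h) - K(\|y-z\|/h) \bigr|
\lesssim h^{-d} \bigl( \|x-y\|/h \bigr)^\beta
= h^{-d-\beta} \|x-y\|^\beta,
\end{align*}
using that $K$ is $\beta$-H\"older (Condition $(i)$ of Assumption \ref{assump_kernel}) together with the reverse triangle inequality $\bigl|\,\|x-z\| - \|y-z\|\,\bigr| \le \|x-y\|$ and monotonicity is not even needed here, only the modulus of continuity. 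Thus the map $x \mapsto k_{x,h}$ from $(B_r, \|\cdot\|)$ to $(\mathcal{K}_{h,r}, \|\cdot\|_\infty)$ is $\beta$-H\"older with constant of order $h^{-d-\beta}$.

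Next I would invoke the standard fact that if $\varphi : (A, d_A) \to (B, d_B)$ is surjective and $\beta$-H\"older with constant $L$, then $\mathcal{N}(B, d_B, \varepsilon) \le \mathcal{N}\bigl(A, d_A, (\varepsilon/L)^{1/\beta}\bigr)$. Applying this with $A = B_r \subset \mathbb{R}^d$, $d_A = \|\cdot\|$, $L \asymp h^{-d-\beta}$, and $\beta$ the H\"older exponent gives
\begin{align*}
\mathcal{N}(\mathcal{K}_{h,r}, \|\cdot\|_\infty, \varepsilon)
\le \mathcal{N}\Bigl( B_r, \|\cdot\|, c\,(\varepsilon h^{d+\beta})^{1/\beta} \Bigr)
= \mathcal{N}\Bigl( B_r, \|\cdot\|, c\,\varepsilon^{1/\beta} h^{d/\beta + 1} \Bigr).
\end{align*}
It then remains to bound the covering number of a radius-$r$ ball in $(\mathbb{R}^d, \|\cdot\|)$: by a standard volumetric argument $\mathcal{N}(B_r, \|\cdot\|, \delta) \le (3r/\delta)^d$ for $\delta \le r$ (the constant $3$ can be replaced by any number $>1$ up to adjusting constants, since all norms on $\mathbb{R}^d$ are equivalent and the argument compares volumes of $\delta/2$-balls packed inside a slightly enlarged $r$-ball). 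Substituting $\delta = c\,\varepsilon^{1/\beta} h^{d/\beta+1}$ yields
\begin{align*}
\mathcal{N}(\mathcal{K}_{h,r}, \|\cdot\|_\infty, \varepsilon)
\le \left( \frac{3r}{c\,\varepsilon^{1/\beta} h^{d/\beta+1}} \right)^d
= c'\, r^d\, h^{-d - d^2/\beta}\, \varepsilon^{-d/\beta},
\end{align*}
which is exactly the claimed bound; the restriction $\varepsilon \in (0,1]$ and $h \in (0,1]$ (together with $r \ge 1$) is what guarantees $\delta \le r$ so the volumetric estimate applies, and also absorbs the edge cases into the constant $c'$.

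The main obstacle, such as it is, is getting the H\"older constant in the first step to come out with the correct power $h^{-d-\beta}$ rather than something weaker: one must be careful to pull the $h^{-d}$ prefactor out of $k_{x,h}$ and then apply H\"older continuity to the argument $\|x-z\|/h$, which contributes the extra $h^{-\beta}$, before combining with the norm-equivalence constants. A secondary point to handle cleanly is the case where the H\"older exponent $\beta = 0$, but in that regime the statement should be read with the convention that makes the exponents degenerate appropriately, or simply excluded; for $\beta \in (0,1]$ the argument above is complete. Everything else — the volumetric covering bound for Euclidean balls, the H\"older-map covering-number inequality — is routine and can be cited or stated without detailed proof.
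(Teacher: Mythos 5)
Your proposal is correct and follows essentially the same route as the paper: establish that $x \mapsto k_{x,h}$ is $\beta$-H\"older from $(B_r,\|\cdot\|)$ into $(L_\infty(\mathbb{R}^d),\|\cdot\|_\infty)$ with constant of order $h^{-(\beta+d)}$ (the paper's Lemma \ref{HolderConstant}), transfer covering numbers through this map (Lemma \ref{transNumber}), and then bound $\mathcal{N}(B_r,\|\cdot\|,\delta)\lesssim (r/\delta)^d$. The only cosmetic difference is that you justify the last step by a direct volumetric packing argument on $B_r$, whereas the paper rescales to $B_1$ and cites standard entropy-number results; the exponent bookkeeping is identical in both cases.
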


\subsubsection{Oracle Inequalities under $L_1$-Norm, and $L_\infty$-Norm}\label{oracle_inequalites}

We now establish oracle inequalities for the kernel density estimator \eqref{KDE_formal} under $L_1$-norm, and $L_\infty$-norm, respectively. These oracle inequalities will be crucial in establishing the consistency and convergence results of the estimator. Recall that the considered kernel density estimation problem is based on samples from an $X$-valued $\mathcal{C}$-mixing process which is associated with an underlying function class $\mathcal{C}(X)$. As shown below, the established oracle inequality holds without further restrictions on the support of the density function $f$.

\begin{theorem}\label{OracleInequalityL1}
Suppose that Assumption \ref{assump_kernel} holds. Let $\mathcal X := (X_n)_{n \geq 1}$ 
be an $X$-valued stationary geometrically (time-reversed) $\mathcal{C}$-mixing process on $(\Omega, \mathcal A, \mu)$ with $\|\cdot\|_{\mathcal{C}}$ being defined for some semi-norm $\vertiii{\cdot}$ 
that satisfies Assumption \ref{seminorm_assump}. Then  for all $0 < h \leq 1$, $r \geq 1$ and $\tau \geq 1$, there exists an $n_0 \in \mathbb{N}$ such that for all $n \geq n_0$, 
with probability $\mu$  at least $1 - 3e^{-\tau}$, there holds
\begin{align*}
\|f_{D,h} - f_{P,h}\|_1
\lesssim       \sqrt{\frac{(\log n)^{2/\gamma} r^d (\tau + \log \frac{nr}{h})}{h^d n}}
        + \frac{(\log n)^{2/\gamma} r^d (\tau + \log \frac{nr}{h})}{h^d n}  
\\
\phantom{=}        
     +  P (H_{r/4})
     + \sqrt{\frac{32\tau(\log n)^{2/\gamma}}{n}} +  \biggl( \frac{h}{r} \biggr)^{\beta}.
	\end{align*}
Here $n_0$ will be given explicitly in the proof.
\end{theorem}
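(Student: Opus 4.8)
The plan is to establish the oracle inequality by combining the decomposition of $\|f_{D,h}-f_{P,h}\|_1$ over $B_r$ and $H_r$ that was set up in the overview, the covering number bound of Proposition \ref{kernelSpaceCN}, and the Bernstein-type inequality for $\mathcal{C}$-mixing processes from \cite{hang2015bernstein}. First I would write $\|f_{D,h}-f_{P,h}\|_1 = \int_{B_r}|\mathbb{E}_D\widetilde{k}_{x,h}|\,\mathrm{d}x + \int_{H_r}|\mathbb{E}_D\widetilde{k}_{x,h}|\,\mathrm{d}x$ and handle the tail integral over $H_r$ exactly as in the proof of Theorem \ref{ComparisionNorms}: since $k_{x,h}$ is supported (up to kernel tails) near $x$, one bounds $\int_{H_r}|\mathbb{E}_D k_{x,h}|\,\mathrm{d}x$ and $\int_{H_r}|f_{P,h}(x)|\,\mathrm{d}x$ by $P(H_{r/4})$ plus a $(h/r)^\beta$-type remainder coming from Condition $(ii)$ in Assumption \ref{assump_kernel}. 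The empirical part of the tail, $\int_{H_r}|\mathbb{E}_D k_{x,h}|\,\mathrm{d}x$, should be controlled in probability by noting it is dominated by $D_n(H_{r/2})$ and applying the Bernstein inequality to the single bounded function $\mathbf{1}_{H_{r/2}}$, which contributes the $\sqrt{32\tau(\log n)^{2/\gamma}/n}$ term (the $(\log n)^{2/\gamma}$ factor is the effective-sample-size penalty for geometric $\mathcal{C}$-mixing).

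Next I would treat the main term $\int_{B_r}|\mathbb{E}_D\widetilde{k}_{x,h}|\,\mathrm{d}x \le \lambda^d(B_r)\cdot\sup_{x\in B_r}|\mathbb{E}_D\widetilde{k}_{x,h}|$, so it suffices to bound $\sup_{x\in B_r}|\mathbb{E}_D\widetilde{k}_{x,h}| = \sup_{g\in\widetilde{\mathcal{K}}_{h,r}}|\mathbb{E}_D g|$ uniformly. Each $\widetilde{k}_{x,h}$ is centered, bounded by $\|\widetilde{k}_{x,h}\|_\infty \lesssim h^{-d}$, and has variance $\le \mathbb{E}_P k_{x,h}^2 \lesssim h^{-d}\|f_{P,h}\|_\infty \lesssim h^{-d}$; moreover $\widetilde{k}_{x,h}\in\mathcal{C}(X)$ with $\|\widetilde{k}_{x,h}\|_\mathcal{C}$ controlled via Condition $(iii)$ and $K_h$ being a scaled kernel (the $\mathcal{C}$-norm enters only the constants and $n_0$, not the rate). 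The standard route is a chaining/union-bound argument: fix an $\varepsilon$-net of $\widetilde{\mathcal{K}}_{h,r}$ in $\|\cdot\|_\infty$ of cardinality $\mathcal{N}(\mathcal{K}_{h,r},\|\cdot\|_\infty,\varepsilon) \le c'r^d h^{-d-d^2/\beta}\varepsilon^{-d/\beta}$ by Proposition \ref{kernelSpaceCN}, apply the Bernstein inequality of \cite{hang2015bernstein} to each net point with confidence level adjusted by the log of the covering number, and absorb the discretization error by choosing $\varepsilon$ polynomially small in $n$ (e.g. $\varepsilon\sim 1/n$), which only adds lower-order terms and forces $n\ge n_0$. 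This yields, with probability at least $1-3e^{-\tau}$, a bound on $\sup_{x\in B_r}|\mathbb{E}_D\widetilde{k}_{x,h}|$ of order $\sqrt{(\log n)^{2/\gamma}(\tau+\log\frac{nr}{h})/(h^d n)} + (\log n)^{2/\gamma}(\tau+\log\frac{nr}{h})/(h^d n)$; multiplying by $\lambda^d(B_r)\lesssim r^d$ gives the first two terms of the claimed bound.

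Finally I would collect the pieces: the three applications of the Bernstein inequality (one for each net and two for tail indicators, or a suitable regrouping) account for the $3e^{-\tau}$, and combining the $B_r$-estimate, the deterministic $H_r$-remainder $(h/r)^\beta$, the probabilistic $P(H_{r/4})$ term, and the $\sqrt{32\tau(\log n)^{2/\gamma}/n}$ term yields the statement; $n_0$ is whatever threshold makes the discretization level valid and the Bernstein inequality applicable (it will depend on $h,r,\tau,\gamma$ and the constants $c_0,b$ in the mixing rate, and on $\varphi(h)$ from Assumption \ref{assump_kernel}).

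The main obstacle I anticipate is the careful bookkeeping in the chaining/union-bound step: one must verify that every net element $\widetilde{k}_{x,h}$ genuinely lies in $B_{\mathcal{C}(X)}$ after appropriate rescaling so that $\phi_{\mathcal{C},\mathrm{rev}}$ applies, track how the $\mathcal{C}$-norm $\|k_{x,h}\|_\mathcal{C} \lesssim h^{-d} + \varphi(h)$ and the variance proxy $h^{-d}$ feed into the three parameters of the Bernstein inequality, and check that substituting $\varepsilon \sim n^{-1}$ into the covering number bound $c'r^d h^{-d-d^2/\beta}\varepsilon^{-d/\beta}$ leaves only $\log$-factors inside the square root — so that the $h^{-d^2/\beta}$ and $\varepsilon^{-d/\beta}$ factors do not corrupt the rate but merely shift $n_0$. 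Getting the exact form $\tau+\log\frac{nr}{h}$ (rather than a worse polynomial dependence) requires this discretization to be done with some care.
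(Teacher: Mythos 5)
Your overall architecture (split into $B_r$ and $H_r$, net plus union bound plus the Bernstein inequality of \cite{hang2015bernstein} on $B_r$, Lemma \ref{GeneralRandEstimate}-style tail control on $H_r$) matches the paper, but the central step is wrong in a way that either breaks the proof or degrades the bound. You reduce the $B_r$ integral to $\lambda^d(B_r)\cdot\sup_{x\in B_r}|\mathbb{E}_D\widetilde{k}_{x,h}|$ and feed the Bernstein inequality a uniform variance proxy $\mathbb{E}_P k_{x,h}^2\lesssim h^{-d}\|f_{P,h}\|_\infty\lesssim h^{-d}$. First, $\|f_{P,h}\|_\infty\lesssim 1$ requires $\|f\|_\infty<\infty$, which Theorem \ref{OracleInequalityL1} does not assume (and which fails for the Logistic map density $f(x)=1/(\pi\sqrt{x(1-x)})$); without it the only uniform bound is $\mathbb{E}_P k_{x,h}^2\leq\|k_{x,h}\|_\infty^2\lesssim h^{-2d}$, which is too crude. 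Second, even granting $\sigma^2\lesssim h^{-d}$, multiplying the resulting $\sqrt{a h^{-d}}$ by $r^d$ yields $\sqrt{a\,r^{2d}h^{-d}}$, not the claimed $\sqrt{a\,r^{d}h^{-d}}$ — you lose a factor $r^{d/2}$ in the leading term, which matters because $r=r_n\to\infty$ in all the downstream rate results. The paper avoids both problems by \emph{not} taking a supremum: it keeps the $x$-dependent variance $\int k_{x,h}^2\,\mathrm{d}P$ inside the pointwise Bernstein bound, integrates over $B_r$, applies Cauchy--Schwarz in the form
\begin{align*}
\int_{B_r}\sqrt{a\int_{\mathbb{R}^d} k_{x,h}^2\,\mathrm{d}P}\,\mathrm{d}x
\leq\sqrt{\lambda^d(B_r)}\cdot\sqrt{a\int_{B_r}\int_{\mathbb{R}^d} k_{x,h}^2\,\mathrm{d}P\,\mathrm{d}x},
\end{align*}
and then uses Fubini to compute $\int_{B_r}\int k_{x,h}^2\,\mathrm{d}P\,\mathrm{d}x\leq K(0)h^{-d}$ with no boundedness assumption on $f$. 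This averaging of the variance over $x$ is the key idea your proposal is missing.

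A second, smaller gap: on $H_r$ you propose applying the Bernstein inequality directly to $\eins_{H_{r/2}}$. That function need not lie in $\mathcal{C}(X)$ (e.g.\ for $\mathcal{C}=\mathrm{Lip}$ it has infinite semi-norm), so the $\mathcal{C}$-mixing Bernstein inequality does not apply to it. The paper instead sandwiches $\eins_{H_{r/2}}\leq g\leq\eins_{H_{r/4}}$ with a smooth $g$ satisfying $\vertiii{g}\leq\psi(r)$ and applies Bernstein to $g$; this mollification is precisely why the final bound contains $P(H_{r/4})$ rather than $P(H_{r/2})$, a term you reproduce without the mechanism that produces it.
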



Our next result shows that when the density function $f$ is compactly supported and bounded, 
an oracle inequality under $L_\infty$-norm can be also derived.
\begin{theorem}\label{OracleInequalityInftyNorm}
Let $K$ be a $d$-dimensional kernel function that satisfies Conditions $(i)$ and $(iii)$ in Assumption \ref{assump_kernel}. 
Let $\mathcal{X} := (X_n)_{n \geq 1}$ be an $X$-valued stationary geometrically (time-reversed) 
$\mathcal{C}$-mixing process on $(\Omega, \mathcal A, \mu)$ with $\|\cdot\|_{\mathcal{C}}$ 
being defined for some semi-norm $\vertiii{\cdot}$ that satisfies Assumption \ref{seminorm_assump}. 
Assume that there exists a constant $r_0\geq 1$ such that $\Omega \subset B_{r_0} \subset \mathbb{R}^d$ and the density function $f$ satisfies $\|f\|_\infty < \infty$. Then for all $0 < h \leq 1$ and $\tau > 0$, there exists an $n_0^* \in \mathbb{N}$ such that for all $n \geq n_0^*$, with probability $\mu$ at least $1 - e^{-\tau}$, there holds
\begin{align*}
\|f_{D,h} - f_{P,h}\|_\infty
\lesssim  \sqrt{\frac{  \|f\|_{\infty} (\tau + \log ( \frac{n r_0 }{h}))(\log n)^{2/\gamma}}{h^d n}}
  + \frac{  K(0) (\tau + \log (\frac{n r_0 }{h}	))(\log n)^{2/\gamma}}{h^d n}.
\end{align*}
Here $n_0^*$ will be given explicitly in the proof.
\end{theorem}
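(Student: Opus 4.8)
The plan is to bound $\|f_{D,h}-f_{P,h}\|_\infty = \sup_{x\in\Omega}|\mathbb{E}_D\widetilde k_{x,h}|$ by combining the covering-number estimate of Proposition \ref{kernelSpaceCN} with the Bernstein-type inequality for $\mathcal{C}$-mixing processes from \cite{hang2015bernstein}. First I would fix the compact support: since $\Omega\subset B_{r_0}$, the relevant index set is the ball $B_{r_0}$, so I work with $\widetilde{\mathcal{K}}_{h,r_0}=\{\widetilde k_{x,h}:x\in B_{r_0}\}$ and note that no separate ``outside $B_r$'' term is needed here — this is the simplification compactness buys us over the $L_1$ case. Each $\widetilde k_{x,h}$ is centered, satisfies $\|\widetilde k_{x,h}\|_\infty\le 2 h^{-d}K(0)$, and has variance bounded via $\mathbb{E}_P k_{x,h}^2 \le h^{-d}K(0)\,\mathbb{E}_P k_{x,h} \le h^{-d}K(0)\,\|f_{P,h}\|_\infty \lesssim h^{-d}K(0)\|f\|_\infty$ (using that $f_{P,h}=K_h*\mathrm{d}P$ with $\|f_{P,h}\|_\infty\le\|f\|_\infty$ since $K_h$ is a probability density and $f$ is bounded). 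These are the variance proxy $\sigma^2\asymp h^{-d}\|f\|_\infty$ and the sup-bound $B\asymp h^{-d}K(0)$ that will appear in the final bound.

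Next I would run the standard chaining-free ``covering + union bound'' argument. Choose a grid of centers $x_1,\dots,x_m\in B_{r_0}$ realizing an $\varepsilon$-net of $\mathcal{K}_{h,r_0}$ in $\|\cdot\|_\infty$, with $m=\mathcal{N}(\mathcal{K}_{h,r_0},\|\cdot\|_\infty,\varepsilon)\le c' r_0^d h^{-d-d^2/\beta}\varepsilon^{-d/\beta}$ by Proposition \ref{kernelSpaceCN}. For each net point, apply the Bernstein inequality of \cite{hang2015bernstein}: for a geometrically $\mathcal{C}$-mixing process the deviation $|\mathbb{E}_D\widetilde k_{x_j,h}|$ exceeds $\sqrt{\tfrac{2\sigma^2(\tau+\log m)(\log n)^{2/\gamma}}{n}}\cdot c + \tfrac{cB(\tau+\log m)(\log n)^{2/\gamma}}{n}$ with probability at most $e^{-(\tau+\log m)}$, where the $(\log n)^{2/\gamma}$ factor is the effective-sample-size penalty from the mixing rate $d_n=c_0\exp(-bn^\gamma)$. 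A union bound over the $m$ net points then gives failure probability $\le e^{-\tau}$. For a general $x\in B_{r_0}$, pick the nearest net point $x_j$; then $\|\widetilde k_{x,h}-\widetilde k_{x_j,h}\|_\infty\le 2\varepsilon$ (the centering drops out up to a factor $2$), so $|\mathbb{E}_D\widetilde k_{x,h}|\le|\mathbb{E}_D\widetilde k_{x_j,h}|+2\varepsilon$. Choosing $\varepsilon$ of order $1/n$ (say $\varepsilon = h^d/(nr_0)$ or similar) makes the approximation term negligible and, crucially, turns $\log m$ into $\log(c' r_0^d h^{-d-d^2/\beta}\varepsilon^{-d/\beta}) \lesssim \log(n r_0/h)$ up to constants absorbing $d,\beta$. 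Substituting $\sigma^2\asymp h^{-d}\|f\|_\infty$ and $B\asymp h^{-d}K(0)$ yields exactly the claimed bound
\[
\|f_{D,h}-f_{P,h}\|_\infty \lesssim \sqrt{\frac{\|f\|_\infty(\tau+\log(nr_0/h))(\log n)^{2/\gamma}}{h^d n}} + \frac{K(0)(\tau+\log(nr_0/h))(\log n)^{2/\gamma}}{h^d n}.
\]

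The main obstacle — and where Condition $(iii)$ of Assumption \ref{assump_kernel} is really used — is verifying the hypotheses of the Bernstein inequality of \cite{hang2015bernstein}: that inequality is stated for $\mathcal{C}$-mixing processes and requires the functions being averaged to lie in $\mathcal{C}(X)$ with controlled $\|\cdot\|_{\mathcal{C}}$-norm, not merely in $L_\infty$. Condition $(iii)$ guarantees $\widetilde k_{x,h}\in\mathcal{C}(X)$ with $\vertiii{\widetilde k_{x,h}}=\vertiii{k_{x,h}}\le h^{-d}\varphi(h)$ (the semi-norm is shift-invariant by Assumption \ref{seminorm_assump}(i), so centering does not affect it), hence $\|\widetilde k_{x,h}\|_{\mathcal{C}}\lesssim h^{-d}(K(0)+\varphi(h))$; one then has to check that the resulting $\mathcal{C}$-norm bound only enters the inequality inside a logarithm (or can be absorbed into the constants and the $\log(nr_0/h)$ term), which is how \cite{hang2015bernstein} packages it, and to define $n_0^*$ as the threshold on $n$ ensuring the Bernstein inequality's regime conditions (relating $n$, the variance, and $\log n$) are in force. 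The bookkeeping of the explicit $n_0^*$ and tracking that all $h$- and $r_0$-dependent polynomial factors land inside logarithms after the choice of $\varepsilon$ is the routine-but-delicate part; everything else is the textbook covering-plus-Bernstein template.
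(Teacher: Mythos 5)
Your proposal follows essentially the same route as the paper's proof: reduce to $\sup_{x\in B_{r_0}}|\mathbb{E}_D\widetilde k_{x,h}|$, verify the Bernstein hypotheses with $B\asymp h^{-d}K(0)$, $\sigma^2\lesssim h^{-d}\|f\|_{\infty}$ and $\vertiii{\widetilde k_{x,h}}\le h^{-d}\varphi(h)$ (the latter entering only through the sample-size threshold $n_0^*$), then union-bound over an $\varepsilon$-net of $\mathcal{K}_{h,r_0}$ via Proposition \ref{kernelSpaceCN}, pass from net points to arbitrary $x$ at cost $O(\varepsilon)$, and take $\varepsilon\sim 1/n$ so that $\log m\lesssim\log(nr_0/h)$. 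The minor differences (deriving the variance proxy via $k_{x,h}^2\le h^{-d}K(0)k_{x,h}$ rather than directly bounding the integral, and a factor-of-two in the net approximation) are immaterial.
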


In Theorem \ref{OracleInequalityInftyNorm}, the kernel $K$ is only required to satisfy Conditions $(i)$ and $(iii)$  in Assumption \ref{assump_kernel} whereas the condition that $ \int_0^{\infty} K(r) r^{\beta+d-1} \, \mathrm{d}r < \infty$ for some $\beta>0$ is not needed. This is again due to the compact support assumption of the density function $f$ as stated in Theorem \ref{OracleInequalityInftyNorm}.  

\subsection{Results on Universal Consistency}\label{sec::subsec::universal_consis}

We now present results on the universal consistency property of the kernel density estimator $f_{D,h}$ 
in the sense of $L_1$-norm. A kernel density estimator $f_{D,h}$ is said to be \textit{universally consistent} in the sense of $L_1$-norm if $f_{D,h}$ converges to $f$ almost surely under  $L_1$-norm without any further restrictions on the probability distribution $P$.  
\begin{theorem} \label{ConsistencyL1}
Let $K$ be a $d$-dimensional smoothing kernel that satisfies Conditions $(i)$ and $(iii)$ in Assumption \ref{assump_kernel}.  Let $\mathcal X := (X_n)_{n \geq 1}$ be an $X$-valued stationary geometrically (time-reversed) $\mathcal{C}$-mixing process on $(\Omega, \mathcal A, \mu)$ with $\|\cdot\|_{\mathcal{C}}$ being defined for some semi-norm $\vertiii{\cdot}$ that satisfies Assumption \ref{seminorm_assump}. If 
\begin{align*}
h_n \to 0 \quad \hbox{and} \quad \frac{n h_n^d}{(\log n)^{(2+\gamma)/\gamma}} \to \infty, \quad\hbox{as}\quad n\to \infty,
\end{align*}
then the kernel density estimator $f_{D,h_n}$ is universally consistent in the sense of $L_1$-norm.
\end{theorem}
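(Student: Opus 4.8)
The plan is to combine the deterministic bound from Theorem \ref{ApproximationError}(i), the comparison inequality in the guise of the $L_1$ oracle inequality of Theorem \ref{OracleInequalityL1}, and the Borel--Cantelli lemma. First I would fix $\varepsilon > 0$. By Theorem \ref{ApproximationError}(i) there is an $h_\varepsilon \in (0,1]$ with $\|f_{P,h} - f\|_1 \leq \varepsilon$ for all $h \in (0, h_\varepsilon]$; since $h_n \to 0$, this handles the deterministic term $\|f_{P,h_n} - f\|_1$ for all $n$ large enough. So by \eqref{error decomposition} it remains to show $\|f_{D,h_n} - f_{P,h_n}\|_1 \to 0$ almost surely. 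Since $\varepsilon$ is arbitrary, putting these together yields $\|f_{D,h_n} - f\|_1 \to 0$ almost surely.

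For the stochastic term I would invoke Theorem \ref{OracleInequalityL1}. The strategy is to choose, for each $n$, a confidence level $\tau = \tau_n$ growing slowly (e.g. $\tau_n := 2\log n$, so that $3 e^{-\tau_n} = 3 n^{-2}$ is summable), and a truncation radius $r = r_n \to \infty$ slowly enough that $r_n^d$ does not spoil the variance term but $P(H_{r_n/4}) \to 0$ (the latter holds automatically for any $r_n \to \infty$ since $P$ is a probability measure, and the tail term $(h_n/r_n)^\beta \to 0$ trivially). The remaining terms on the right-hand side of Theorem \ref{OracleInequalityL1} are, up to constants,
\begin{align*}
\sqrt{\frac{(\log n)^{2/\gamma} r_n^d (\tau_n + \log\frac{n r_n}{h_n})}{h_n^d n}}
 + \frac{(\log n)^{2/\gamma} r_n^d (\tau_n + \log\frac{n r_n}{h_n})}{h_n^d n}
 + \sqrt{\frac{32\tau_n (\log n)^{2/\gamma}}{n}}.
\end{align*}
The last summand clearly tends to $0$. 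For the first two I would exploit the hypothesis $n h_n^d / (\log n)^{(2+\gamma)/\gamma} \to \infty$: note $(\log n)^{2/\gamma}\cdot(\tau_n + \log\frac{n r_n}{h_n})$ is polylogarithmic in $n$ (using $\log(1/h_n) = O(\log n)$, which follows since $n h_n^d \to\infty$ forces $h_n \geq n^{-1/d}$ eventually, and choosing $r_n$ polylogarithmic), so the dominant part of the denominator is $h_n^d n$, which by hypothesis beats the polylogarithmic numerator; hence both terms go to $0$. Thus, with probability at least $1 - 3n^{-2}$, $\|f_{D,h_n} - f_{P,h_n}\|_1 \leq \eta_n$ for a deterministic sequence $\eta_n \to 0$, and since $\sum_n 3 n^{-2} < \infty$, Borel--Cantelli gives $\|f_{D,h_n} - f_{P,h_n}\|_1 \to 0$ almost surely.

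The main obstacle is bookkeeping the rate conditions: one must verify that the truncation radius $r_n$ can be chosen simultaneously to (a) make $P(H_{r_n/4}) \to 0$, (b) keep $r_n^d$ subpolynomial (indeed polylogarithmic) in $n$ so it is absorbed by the $n h_n^d / (\log n)^{(2+\gamma)/\gamma} \to \infty$ margin, and (c) control $\log \frac{n r_n}{h_n}$. Choosing $r_n$ to grow like a power of $\log n$ works for (b) and (c), but (a) then requires only that $P$ has no atom at infinity, which is automatic; no moment assumption on $P$ is needed precisely because universal consistency permits an arbitrarily slow (distribution-dependent) choice of how $r_n \to \infty$ relative to the vanishing of the other terms — strictly speaking one first picks $\varepsilon$, then picks $r$ so that $P(H_{r/4}) < \varepsilon$, then lets $n\to\infty$ with that fixed $r$, and finally lets $\varepsilon \to 0$ along a countable sequence, again using Borel--Cantelli. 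A careful write-up should present the argument in this ``fix $\varepsilon$, fix $r$, send $n \to\infty$'' order to avoid circularity, and then note that the exceptional null sets over a countable sequence $\varepsilon_k \downarrow 0$ have union still null.
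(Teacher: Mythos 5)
Your proposal follows the same route as the paper's proof: decompose via \eqref{error decomposition}, handle the deterministic term with Theorem \ref{ApproximationError}(i), handle the stochastic term with the $L_1$ oracle inequality of Theorem \ref{OracleInequalityL1} with $\tau_n \asymp \log n$, and conclude almost-sure convergence by Borel--Cantelli (the paper is terser on this last step, fixing $\tau = \log(1/\delta)$ and leaving the summability argument implicit). One caution: your claim that a polylogarithmic $r_n$ is ``absorbed by the margin'' is not justified by the hypothesis alone, since $n h_n^d/(\log n)^{(2+\gamma)/\gamma}$ could diverge as slowly as $\log\log n$, which absorbs no positive power of $\log n$; however, the ``fix $\varepsilon$, fix $r$, send $n\to\infty$, then $\varepsilon\to 0$ along a countable sequence'' argument you correctly identify as the careful write-up is sound and suffices (the paper instead couples the radius to the bandwidth by setting $r_n^d = n h_n^d/(\log n)^{(2+2\gamma)/\gamma}$, which makes the variance term exactly $1/\log n$).
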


\subsection{Convergence Rates under $L_1$-Norm}\label{subsec::l1_convergence}
The consistency result in Theorem \ref{ConsistencyL1} is independent of the probability distribution $P$ and is therefore said to be universal. In this subsection, we will show that if certain  tail assumptions on $P$ are applied, convergence rates can be obtained under $L_1$-norm. Here, we consider three different situations, namely, the tail of the probability distribution $P$ has a polynomial decay, exponential decay and disappears, respectively.

\begin{theorem}\label{ConvergenceRatesL1}
Let $K$ be a $d$-dimensional smoothing kernel that satisfies Assumption \ref{assump_kernel}. 
Assume that the density $f$ is $\alpha$-H\"{o}lder continuous with $\alpha \leq \beta$. 
Let $\mathcal X := (X_n)_{n \geq 1}$ be an $X$-valued stationary geometrically (time-reversed) 
$\mathcal{C}$-mixing process on $(\Omega, \mathcal A, \mu)$ with $\|\cdot\|_{\mathcal{C}}$ 
being defined for some semi-norm $\vertiii{\cdot}$ that satisfies Assumption \ref{seminorm_assump}. 
We consider the following cases:
\begin{enumerate}
 \item[(i)] $P \bigl( H_r \bigr) 
    \lesssim  r^{- \eta d}$ for some $\eta>0$ and for all $ r \geq 1$;
 \item[(ii)] $P \bigl( H_r \bigr)
    \lesssim   e^{- a r^\eta}$  for some $a>0$, $\eta>0$ and for all $ r \geq 1$;
 \item[(iii)] $P \bigl( H_{r_0} \bigr) = 0$ for some $r_0 \geq 1$.
\end{enumerate} 
For the above cases, if $n\geq n_0$ with $n_0$ the same as in Theorem \ref{OracleInequalityL1}, and the sequences $h_n$ are of the following forms:
\begin{enumerate}
 \item[(i)] $h_n = \left( \frac{(\log n)^{(2+\gamma)/\gamma}}{n} \right)^{\frac{1+\eta}{(1+\eta)(2\alpha+d)-\alpha}}$;
 \item[(ii)] $h_n = \left( \frac{(\log n)^{(2+\gamma)/\gamma}}{n} \right)^{\frac{1}{2\alpha+d}}
     (\log n)^{- \frac{d}{\gamma} \cdot \frac{1}{2\alpha+d}}$;
 \item[(iii)] $h_n = \left(  (\log n)^{(2+\gamma)/\gamma}/n \right)^{\frac{1}{2\alpha+d}}$;
\end{enumerate}
then with probability $\mu$ at least $1 - \frac{1}{n}$, there holds
\begin{align*}
  \|f_{D,h_n} - f\|_1 \leq \varepsilon_n, 
\end{align*}
where the convergence rates  
\begin{enumerate}
 \item[(i)] $\varepsilon_n \lesssim \left( \frac{(\log n)^{(2+\gamma)/\gamma}}{n} \right)^{\frac{\alpha\eta}{(1+\eta)(2\alpha+d)-\alpha}}$;
 \item[(ii)] $\varepsilon_n \lesssim \left( \frac{(\log n)^{(2+\gamma)/\gamma}}{n} \right)^{\frac{\alpha}{2\alpha+d}}
     (\log n)^{\frac{d}{\gamma} \cdot \frac{\alpha+d}{2\alpha+d}}$;
 \item[(iii)] $\varepsilon_n \lesssim \left( (\log n)^{(2+\gamma)/\gamma}/n\right)^{\frac{\alpha}{2\alpha+d}}$.
\end{enumerate}
\end{theorem}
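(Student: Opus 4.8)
The plan is to combine the two error-term bounds, namely the deterministic bound from Theorem~\ref{ComparisionNorms} (together with Theorem~\ref{ApproximationError}(ii)) and the probabilistic bound from Theorem~\ref{OracleInequalityL1}, and then optimize the free parameters $r$ and $h=h_n$ case by case. Starting from the decomposition \eqref{error decomposition}, I would feed $\|f_{P,h}-f\|_\infty \lesssim h^\alpha$ (valid since $f$ is $\alpha$-H\"older with $\alpha\le\beta$ and $K$ satisfies Condition $(ii)$) into Theorem~\ref{ComparisionNorms} to get
\begin{align*}
\|f_{P,h}-f\|_1 \lesssim r^d h^\alpha + P(H_{r/2}) + (h/r)^\beta,
\end{align*}
and combine this with the high-probability bound on $\|f_{D,h}-f_{P,h}\|_1$ from Theorem~\ref{OracleInequalityL1} with $\tau = \log n$ (so that $3e^{-\tau}\le 1/n$ for $n$ large, and for all $n\ge n_0$ after possibly enlarging $n_0$). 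This yields, with probability at least $1-1/n$, an aggregate bound consisting of: a stochastic term $\approx \sqrt{(\log n)^{2/\gamma} r^d (\log n)/(h^d n)}$ (the linear-in-$1/(h^dn)$ term and the $\sqrt{(\log n)^{2/\gamma}/n}$ term being of smaller or comparable order once $h_n$ is chosen), the approximation term $h^\alpha$ (absorbing $r^d h^\alpha$ — this is where the exponent $r^d$ matters), the tail term $P(H_{r/4})$, and the term $(h/r)^\beta$.

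Next I would balance the dominant pieces. The two ``genuine'' competing quantities are the variance-type term $V(h,r)\asymp \sqrt{(\log n)^{2/\gamma}\,r^d\,\log(n/h)/(h^d n)}$ and the bias-type term $h^\alpha$ (note $r\ge 1$ and $h\le 1$ make $(h/r)^\beta \le h^\beta \le h^\alpha$, so that term is harmless), while the tail term $P(H_{r/4})$ is controlled purely by choosing $r=r_n$ large enough. In case $(iii)$, $P(H_{r_0})=0$ lets me fix $r=r_0$ a constant, so balancing $\sqrt{(\log n)^{2/\gamma}\log n/(h^d n)}$ against $h^\alpha$ gives $h_n^{2\alpha+d}\asymp (\log n)^{(2+\gamma)/\gamma}/n$, i.e. the stated $h_n$, and rate $h_n^\alpha \asymp ((\log n)^{(2+\gamma)/\gamma}/n)^{\alpha/(2\alpha+d)}$; here one checks the logarithmic factor inside $\log(n/h)$ only contributes lower-order powers of $\log n$ that are swallowed by $(\log n)^{(2+\gamma)/\gamma}$ up to constants. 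In case $(ii)$, with $P(H_r)\lesssim e^{-ar^\eta}$ I would pick $r_n \asymp (\log n)^{1/\eta}$ (up to a constant large enough that $e^{-ar_n^\eta}\lesssim$ the other terms), so $r_n^d$ contributes an extra $(\log n)^{d/\eta}$ inside the variance term and an extra $(\log n)^{d/\eta}$-type factor multiplying $h^\alpha$; re-balancing then produces the correction factor $(\log n)^{-(d/\gamma)/(2\alpha+d)}$ in $h_n$ and the factor $(\log n)^{(d/\gamma)(\alpha+d)/(2\alpha+d)}$ in $\varepsilon_n$ — bookkeeping the powers of $\log n$ is the fiddly part. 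In case $(i)$, with only polynomial tail decay $P(H_r)\lesssim r^{-\eta d}$, one cannot take $r$ merely logarithmic; instead one balances $r^{-\eta d}$ simultaneously against $h^\alpha r^d$ and the variance term, which couples $r_n$ to $h_n$ (roughly $r_n \asymp h_n^{-\alpha/((1+\eta)d)}$-ish), and solving the resulting two-parameter system gives the exponent $\frac{1+\eta}{(1+\eta)(2\alpha+d)-\alpha}$ for $h_n$ and $\frac{\alpha\eta}{(1+\eta)(2\alpha+d)-\alpha}$ for $\varepsilon_n$.

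The main obstacle I anticipate is purely the optimization bookkeeping in case $(i)$: because the tail term forces $r_n$ and $h_n$ to be chosen jointly, one has to set up the right system of ``balance'' equations (equating the exponents of $n$ in the variance term $\sqrt{r_n^d/(h_n^d n)}$, the bias term $r_n^d h_n^\alpha$, and the tail term $r_n^{-\eta d}$, modulo the $(\log n)$ factors) and verify that the stated $h_n$ indeed makes all three of comparable order, with no term dominating. A secondary technical point is confirming that the ``small-order'' pieces in Theorem~\ref{OracleInequalityL1} — the term linear in $1/(h^d n)$ and the term $\sqrt{(\log n)^{2/\gamma}/n}$ — are genuinely dominated by the square-root variance term under each chosen $h_n$ (this holds because $h_n^d n \to \infty$ faster than any fixed power of $\log n$, so $1/(h_n^d n) \ll (h_n^d n)^{-1/2}$, and $1/n \ll 1/(h_n^d n)$ since $h_n\le 1$), and that replacing $\log(nr_n/h_n)$ by $C\log n$ is legitimate for the chosen $r_n, h_n$. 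Once these routine dominations are in place, substituting the prescribed $h_n$ into $V(h_n,r_n)+h_n^\alpha$ and simplifying yields exactly the claimed $\varepsilon_n$, and the probability statement follows from the choice $\tau=\log n$ in Theorem~\ref{OracleInequalityL1}.
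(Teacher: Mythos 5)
Your proposal is correct and follows essentially the same route as the paper: combine Theorem \ref{ComparisionNorms} (with $\|f_{P,h}-f\|_\infty\lesssim h^\alpha$ from Theorem \ref{ApproximationError}(ii)) with the oracle inequality of Theorem \ref{OracleInequalityL1} at $\tau=\log n$, then balance the variance term, the bias term $r^d h^\alpha$, and the tail term case by case, with $r$ constant in (iii), $r_n\asymp(\log n)^{1/\eta}$ in (ii), and $r_n$ coupled to $h_n$ via $r_n\asymp h_n^{-\alpha/((1+\eta)d)}$ in (i) — which is exactly the paper's choice. Your exponent bookkeeping in case (i) and your dominance checks for the lower-order terms match the paper's computation.
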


\subsection{Convergence Rates under $L_\infty$-Norm}\label{subsec::l_infty_convergence}
In Subsection  \ref{oracle_inequalites}, when the density function $f$ is bounded and compactly supported, we establish oracle inequality of $f_{D,h}$ under $L_\infty$-norm. Combining this with the estimate of the deterministic error term in Theorem \ref{ApproximationError} \textit{(ii)} under $L_\infty$-norm, we arrive at the following result that characterizes the convergence of $f_{D,h}$ to $f$ under $L_\infty$-norm.

\begin{theorem}\label{ConvergenceRatesLInfty}
Let $K$ be a $d$-dimensional smoothing kernel that satisfies Conditions $(i)$ and $(iii)$ in Assumption \ref{assump_kernel}. 
Let $\mathcal X := (X_n)_{n \geq 1}$ be an $X$-valued stationary geometrically (time-reversed) 
$\mathcal{C}$-mixing process on $(\Omega, \mathcal A, \mu)$ with $\|\cdot\|_{\mathcal{C}}$ 
being defined for some semi-norm $\vertiii{\cdot}$ that satisfies Assumption \ref{seminorm_assump}. 
Assume that there exists a constant $r_0\geq 1$ such that $\Omega \subset B_{r_0} \subset \mathbb{R}^d$ 
and the density function $f$ is $\alpha$-H\"older continuous with $\alpha\leq \beta$ and $\|f\|_\infty < \infty$. 
Then for all  $n \geq n_0^*$ with $n_0^*$ as in Theorem \ref{OracleInequalityInftyNorm}, by choosing 
\begin{align*}
h_n = \left(  (\log n)^{(2+\gamma)/\gamma}/n\right)^{\frac{1}{2\alpha+d}},
\end{align*}
with probability $\mu$ at least $1 - \frac{1}{n}$, there holds
\begin{align}\label{RatesInfty}
\|f_{D,h_n} - f\|_{\infty} \lesssim  \left(  (\log n)^{(2+\gamma)/\gamma}/n \right)^{\frac{\alpha}{2\alpha+d}}.	
\end{align}	
\end{theorem}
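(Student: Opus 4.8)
The plan is to combine the deterministic bound from Theorem~\ref{ApproximationError}~\textit{(ii)} with the stochastic oracle inequality from Theorem~\ref{OracleInequalityInftyNorm} via the triangle-inequality decomposition \eqref{error decomposition_infinity}. Since $\Omega \subset B_{r_0}$ with $r_0$ a fixed constant, the radius parameter plays no role here and only the bandwidth $h_n$ needs to be balanced. Concretely, I would start from
\begin{align*}
\|f_{D,h_n} - f\|_\infty \leq \|f_{D,h_n} - f_{P,h_n}\|_\infty + \|f_{P,h_n} - f\|_\infty,
\end{align*}
bound the second (deterministic) term by $\|f_{P,h_n} - f\|_\infty \lesssim h_n^{\alpha}$ using Theorem~\ref{ApproximationError}~\textit{(ii)} — note Condition~$(ii)$ of Assumption~\ref{assump_kernel} is needed there, but since $f$ is compactly supported and $K$ monotone decreasing this integrability is automatic or can be absorbed — and bound the first (stochastic) term using Theorem~\ref{OracleInequalityInftyNorm} with $\tau = \log n$, so that the failure probability is $e^{-\tau} = 1/n$.

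Next I would substitute $\tau = \log n$ into the oracle inequality, absorbing $\|f\|_\infty$ and $K(0)$ (both finite constants under the hypotheses) and $\log(n r_0/h_n)$ (which is $\mathcal{O}(\log n)$ once $h_n$ is a negative power of $n$ up to log factors) into the $\lesssim$ notation. This leaves the stochastic term dominated by the square-root piece
\begin{align*}
\sqrt{\frac{(\log n)\,(\log n)^{2/\gamma}}{h_n^d n}} = \sqrt{\frac{(\log n)^{(2+\gamma)/\gamma}}{h_n^d n}},
\end{align*}
the linear $\frac{1}{h_n^d n}$-type term being of smaller order for the chosen $h_n$. So up to logarithmic and constant factors the overall bound is $h_n^\alpha + \big((\log n)^{(2+\gamma)/\gamma}/(h_n^d n)\big)^{1/2}$, and I would then verify that the stated choice $h_n = \big((\log n)^{(2+\gamma)/\gamma}/n\big)^{1/(2\alpha+d)}$ equates the two terms: indeed $h_n^\alpha = \big((\log n)^{(2+\gamma)/\gamma}/n\big)^{\alpha/(2\alpha+d)}$, and $h_n^d n = n \cdot \big((\log n)^{(2+\gamma)/\gamma}/n\big)^{d/(2\alpha+d)}$, so the square-root term also reduces to $\big((\log n)^{(2+\gamma)/\gamma}/n\big)^{\alpha/(2\alpha+d)}$ after a short exponent computation. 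This yields \eqref{RatesInfty}.

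Finally I would check the admissibility side-conditions: that $h_n \leq 1$ for $n$ large (true since the base is eventually $\leq 1$), that $h_n \to 0$ and $n h_n^d \to \infty$ up to the log factor (needed so that $n \geq n_0^*$ and the oracle inequality applies), and that the linear term $\frac{K(0)(\tau + \log(n r_0/h_n))(\log n)^{2/\gamma}}{h_n^d n}$ is indeed $o$ of the square-root term for this $h_n$ — this follows because $h_n^d n \to \infty$ faster than any power of $\log n$. The main obstacle, modest as it is, is purely bookkeeping: carefully tracking that all the logarithmic factors appearing inside $\log(n r_0 / h_n)$ and $\tau = \log n$ collapse into a single $(\log n)^{(2+\gamma)/\gamma}$ factor with the correct exponent, and confirming that the threshold $n_0^*$ from Theorem~\ref{OracleInequalityInftyNorm} is compatible with the chosen bandwidth sequence. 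There is no genuine analytic difficulty beyond what Theorems~\ref{ApproximationError} and~\ref{OracleInequalityInftyNorm} already provide.
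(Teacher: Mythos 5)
Your proposal is correct and follows exactly the route the paper takes: the paper's own proof is a one-line remark that the result follows by combining Theorem \ref{OracleInequalityInftyNorm} with Theorem \ref{ApproximationError} \textit{(ii)} under the stated choice of $h_n$, and your decomposition via \eqref{error decomposition_infinity}, the choice $\tau=\log n$, and the exponent balancing are precisely the omitted details. Your side remark about Condition $(ii)$ of Assumption \ref{assump_kernel} being implicitly needed for Theorem \ref{ApproximationError} \textit{(ii)} is a fair observation the paper glosses over, but it does not change the argument.
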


In Theorems \ref{ConvergenceRatesL1} and \ref{ConvergenceRatesLInfty}, 
one needs to ensure that 
$n\geq n_0$ with $n_0$ as in Theorem \ref{OracleInequalityL1} and
$n\geq n_0^*$ with $n_0^*$ as in Theorem \ref{OracleInequalityInftyNorm}, respectively. 
One may also note that due to the involvement of the term $\varphi(h_n)$, 
the numbers $n_0$ and $n_0^*$ depend on the $h_n$. 
However, recalling that for the Triangle kernel,
the Epanechnikov kernel, and the Gaussian kernel,  
we have $\varphi(h_n) \leq \mathcal{O}(h_n^{-1})$, which, 
together with the choices of $h_n$ in Theorems \ref{ConvergenceRatesL1} and \ref{ConvergenceRatesLInfty}, 
implies that $n_0$ and $n_0^*$ are well-defined. 
It should be also remarked that in the scenario where the density function $f$ is compactly supported and bounded,  
the convergence rate of $f_{D,h}$ to $f$ is not only obtainable, but also the same  with that derived under $L_1$-norm. 
This is indeed an interesting observation since convergence under $L_\infty$-norm implies convergence under  $L_1$-norm.

\subsection{Comments and Discussions} \label{CandD}
This section presents some comments on the obtained theoretical results on the consistency and convergence rates of $f_{D,h}$ and compares them with related findings in the literature. 

 
We highlight that in our analysis the density function $f$ is only assumed to be H\"{o}lder continuous. 
As pointed out in the introduction, in the context of dynamical systems, this seems to be more than a reasonable assumption. 
On the other hand, the consistency, as well as the convergence results obtained in our study, are of type ``with high probability" due to the use of 
the Bernstein-type exponential inequality that takes into account the variance information of the random variables. 
From our analysis and the obtained theoretical results, one can also easily observe the influence of the dependence among observations. 
For instance, from Theorem \ref{ConsistencyL1} we see that with increasing dependence among observations (corresponding to smaller $\gamma$), 
in order to ensure the universal consistency of $f_{D,h_n}$, the decay of $h_n$ (with respect to $n^{-1}$) is required to be faster. 
This is in fact also the case if we look at results on the convergence rates in Theorems \ref{ConvergenceRatesL1} and \ref{ConvergenceRatesLInfty}. 
Moreover, the influence of the dependence among observations is also indicated there. That is, an increase of the dependence among observations 
may slow down the convergence of $f_{D,h}$ in the sense of both $L_1$-norm and $L_\infty$-norm. It is also interesting to note that when $\gamma$ tends to infinity, 
which corresponds to the case where observations can be roughly treated  as independent ones, meaningful convergence rates can be also deduced. It turns out that, up to a logarithmic factor, the established convergence rates \eqref{RatesInfty} under $L_\infty$-norm, namely, 
$\mathcal{O} ( (  (\log n)^{(2+\gamma)/\gamma}/n )^{\alpha/(2\alpha+d)})$, match the optimal rates in the i.i.d.~case, 
see, e.g., \cite{khas1979lower} and \cite{stone1983optimal}.

As mentioned in the introduction, there exist several studies in the literature that address the kernel density estimation problem for dynamical systems. For example, \cite{bosq1995nonparametric} conducted some first studies and showed the point-wise consistency as well as convergence (in expectation) of the kernel density estimator. 
The convergence rates obtained in their study are of the type $\mathcal{O}(n^{-4/(4+2d)})$, which are conducted in terms of the variance of $f_{D,h}$. 
The notion they used for measuring the dependence among observations is $\alpha$-mixing coefficient (see $\text{A}_3$ in \cite{bosq1995nonparametric}). 
Considering the density estimation problem for one-dimensional dynamical systems, \cite{prieur2001density} presented some studies 
on the kernel density estimator $f_{D,h}$ by developing a central limit theorem and apply it to bound the variance of the estimator.  
Further some studies on the kernel density estimation of the invariant Lebesgue density for dynamical systems were conducted in \cite{blanke2003modelization}. 
By considering both dynamical noise and observational noise, point-wise convergence of the estimator $f_{D,h}$ in expectation was established, 
i.e., the convergence of $\mathbb{E}f_{D,h}(x)-f(x)$ for any $x\in\mathbb{R}^d$. 
Note further that these results rely on the second-order smoothness and boundedness of $f$. 
Therefore, the second-order smoothness assumption on the density function together with the point-wise convergence 
in expectation makes it different from our work. In particular, under the additional assumption on the tail of the noise distribution, 
the convergence of $\mathbb{E}(f_{D,h}(x)-f(x))^2$ for any fixed $x\in\mathbb{R}^d$ is of the order $\mathcal{O}(n^{-2/(2+\beta d)})$ with $\beta\geq 1$. Concerning the convergence of $f_{D,h}$ in a dynamical system setup, \cite{maume2006exponential} also presented some interesting studies which in some sense also motivated  our work here. 
By using also the $\mathcal{C}$-mixing concept as adopted in our study to measure the dependence among observations from dynamical systems, 
she presented the point-wise convergence of $f_{D,h}$ with the help of Hoeffding-type exponential inequality (see Proposition 3.1 in \cite{maume2006exponential}). 
The assumption applied on $f$ is that it is bounded from below and also $\alpha$-H\"{o}lder continuous (more precisely, $f$ is assumed to be \textit{$\alpha$-regular}, 
see Assumption 2.3 in  \cite{maume2006exponential}). 
Hence, from the above discussions, we suggest that the work we present in this study is essentially different from that in \cite{maume2006exponential}.

\section{Bandwidth Selection and Simulation Studies}\label{sec::bandwidth_selection}
This section discusses the model selection problem of the kernel density estimator \eqref{KDE_formal} by performing numerical simulation studies. In the context of kernel density estimation, model selection is mainly referred to the choice of the smoothing kernel $K$ and the selection of the kernel bandwidth $h$, which are of crucial importance for the practical implementation of the data-driven density estimator. According to our experimental experience and also the empirical observations reported in \cite{maume2006exponential}, it seems that the choice of the kernel or the noise does not have a significant influence on the performance of the estimator. Therefore, our emphasis will be placed on the bandwidth selection problem in our simulation studies. 

\subsection{Several Bandwidth Selectors}
In the literature of kernel density estimation, various bandwidth selectors have been proposed, several typical examples of which have been alluded to in the introduction. When turning to the case with dependent observations, the bandwidth selection problem has been also drawing much attention, see e.g., \cite{hart1990data,chu1991comparison,hall1995bandwidth,yao1998cross}. Among existing bandwidth selectors, probably the most frequently employed ones are based on the cross-validation ideas. For cross-validation bandwidth selectors, one tries to minimize the integrated squared error (ISE) of the empirical estimator $f_{D,h}$ where 
\begin{align*}
\text{ISE}(h):=\int (f_{D,h}-f)^2  = \int f^2_{D,h} - 2\int f_{D,h} \cdot \int f + \int f^2.
\end{align*}  
Note that on the right-hand side of the above equality, the last term $\int f^2$ is independent of $h$ and so the minimization of $\text{ISE}(h)$ is equivalent to minimize $$\int f^2_{D,h} - 2\int f_{D,h} \cdot \int f.$$ It is shown that with i.i.d observations, an unbiased estimator of the above quantity, which is termed as least squares cross-validation (LSCV), is given as follows:
\begin{align}\label{LSCV}
\text{LSCV}(h):= \int f_{D,h}^2 - \frac{2}{n}\sum_{i=1}^n \hat{f}_{-i,h}(x_i),
\end{align} 
where the leave-one-out density estimator $\hat{f}_{-i,h}$ is defined as $$\hat{f}_{-i,h}(x):=\frac{1}{n-1}\sum_{j\neq i}^n K_h(x-x_j).$$ When the observations are dependent, it is shown that cross-validation can produce much under-smoothed estimates, see e.g., \cite{hart1986kernel,hart1990data}. Observing this, \cite{hart1990data} proposed the modified least squares cross-validation (MLSCV), which is defined as follows 	
\begin{align}\label{MLSVC}
\text{MLSCV}(h):= \int f_{D,h}^2 - \frac{2}{n}\sum_{i=1}^n \hat{f}_{-i,h,l_n}(x_i), 
\end{align} 
where $l_n$ is set to $1$ or $2$ as suggested in \cite{hart1990data} and 	
\begin{align*}
\hat{f}_{-i,h,l_n}(x) := \frac{1}{\#\{j:|j-i|>l_n\}}\sum_{|j-i|>l_n}K_h(x-x_j).
\end{align*}  
The underlying intuition of proposing MLSCV is that when estimating the density of a fixed point, ignoring observations in the vicinity of this point may be help in reducing the influence of dependence among observations. However, when turning to the $L_1$ point of view, the above bandwidth selectors may not work well due to the use of the least squares criterion. Alternatively, \cite{devroye1989double} proposed the double kernel bandwidth selector that minimizes the following quantity
\begin{align}\label{DKM}
\text{DKM}(h):= \int |f_{D,h,K}-f_{D,h,L}|,
\end{align}   
where $f_{D,h,K}$ and $f_{D,h,L}$ are kernel density estimators based on the kernels $K$ and $L$, respectively. Some rigorous theoretical treatments on the effectiveness of the above bandwidth selector were made in \cite{devroye1989double}.

Our purpose in simulation studies is to conduct empirical comparisons among the above bandwidth selectors in the dynamical system context instead of proposing new approaches.

\subsection{Experimental Setup}
In our experiments, observations $x_1,\cdots, x_n$ are generated from the  following model\footnote{Note that here the observational noise is assumed for the considered dynamical system \eqref{experiment::model}, which differs from \eqref{model_dynamical_systems} and can be a more realistic setup from an empirical and experimental viewpoint. In fact, it is observed also in \cite{maume2006exponential} that the influence of   low SNR noise is not obvious in density estimation. We therefore adopt this setup in our experiments. All the observations reported in this experimental section apply to the noiseless case \eqref{model_dynamical_systems}.}
\begin{align}\label{experiment::model}
\begin{cases}
\tilde{x}_i = T^i(x_0),\\
x_i = \tilde{x}	_i + \varepsilon_i,
\end{cases}
\,\, i=1,\cdots,n,
\end{align} 
where $\varepsilon_i \sim \mathcal{N}(0,\sigma^2)$, $\sigma$ is set to $0.01$ and the initial state $x_0$ is randomly generated based on the density $f$. For the map $T$ in \eqref{experiment::model}, we choose Logistic map in Example \ref{logistic_map} and Gauss map in Example \ref{gauss_map}. We vary the sample size among $\{5\times10^2,10^3,5\times10^3,10^4\}$, implement bandwidth selection procedures over $2	0$ replications and select the bandwidth from a grid of values in the interval $[h_L,h_U]$ with $100$ equispaced points. Here, $h_L$ is set as the minimum distance between consecutive points $x_i,\,i=1,\cdots,n$ \citep{devroye1997nonasymptotic}, while $h_U$ is chosen according to the \textit{maximal smoothing principle} proposed in \cite{terrell1990maximal}. Throughout our experiments, we use the Gaussian kernel for the kernel density estimators.

In our experiments, we conduct comparisons among the above-mentioned bandwidth selectors which are, respectively, denoted as follows:
\begin{itemize}
\item LSCV: the least squares cross-validation given in \eqref{LSCV};  
\item MLSCV-1: the modified least squares cross-validation in \eqref{MLSVC} with $l_n=1$;
\item MLSCV-2: the modified least squares cross-validation in \eqref{MLSVC} with $l_n=2$;
\item DKM: the double kernel method defined in \eqref{DKM} where the two kernels used here are the Epanechnikov kernel and the Triangle kernel, respectively.
\end{itemize}
In the experiments, due to the known density functions for Logistic map and Gauss map, and in accordance with our previous analysis from the $L_1$ point of view, the criterion of comparing different selected bandwidths is the following absolute mean error (AME):
$$\text{AME}(h)= \frac{1}{m}\sum_{i=1}^m|f_{D,h}(u_i)-f(u_i)|,$$
where $u_1, \cdots,u_m$ are $m$ equispaced points in the interval $[0,1]$ and $m$ is set to $10000$. We also compare the selected bandwidth with the one that has the minimum absolute mean error which serves as a \textbf{\textit{baseline}} method in our experiments. 
 
\subsection{Simulation Results and Observations}
The AMEs of the above bandwidth selectors for Logistic map in Example \ref{logistic_map} and Gauss map in Example \ref{gauss_map} over $20$ replications are averaged and recorded in Tables \ref{logistic_map_table} and \ref{gauss_map_table} below.
 
In Figs.\,\ref{logistic_map_figure}  and \ref{gauss_map_figure}, we also plot the kernel density estimators for Logistic map in Example \ref{logistic_map} and Gauss map in Example \ref{gauss_map} with different bandwidths and their true density functions with different sample sizes. The sample size of each panel, in Figs.\,\ref{logistic_map_figure}  and \ref{gauss_map_figure},  from up to bottom, is $10^3$, $10^4$ and $10^5$, respectively. In each panel, the densely dashed black curve represents the true density, the dotted blue curve is the estimated density function with the bandwidth selected by the baseline method while the solid red curve stands for the estimated density with the bandwidth selected by the double kernel method. All density functions in Figs.\,\ref{logistic_map_figure}  and \ref{gauss_map_figure} are plotted with $100$ equispaced points in the interval $(0,1)$.
 
\begin{table}[h]
 \setlength{\tabcolsep}{18pt}
   \centering
     \captionsetup{justification=centering}
\caption{The AMEs of Different Bandwidth Selectors for Logistic Map in Example \ref{logistic_map}}
\label{logistic_map_table}
\begin{tabular}{@{}c cccc|c@{}}
\toprule
  \text{sample size}& \text{LSCV} & \text{MLSCV-1} & \text{MLSCV-2} &\text{DKM}  & \text{Baseline} \\ \midrule
 $5\times 10^2$ & .3372 & .3369 & .3372 &  .3117  & .3013 \\
 $1\times 10^3$ & .2994 & .2994 & .2994 &  .2804  & .2770 \\
 $5\times 10^3$ & .2422 & .2422 & .2422 &  .2340  &  .2326 \\
 $1\times 10^4$ & .2235 & .2235 & .2235 &  .2220  & .2192 \\ \bottomrule
\end{tabular}
\end{table}

\begin{table}[h]
 \setlength{\tabcolsep}{18pt}
   \centering
     \captionsetup{justification=centering}
\caption{The AMEs of Different Bandwidth Selectors for Gauss Map in Example \ref{gauss_map}}
\label{gauss_map_table}
\begin{tabular}{@{}c cccc|c@{}}
\toprule
  \text{sample size}& \text{LSCV} & \text{MLSCV-1} & \text{MLSCV-2} &\text{DKM}  & \text{Baseline} \\ \midrule
 $5\times 10^2$ & .1027 & .1026 & .1059 &  .1181 & .0941 \\
 $1\times 10^3$ & .0925  & .0933  & .0926  &  .0925   & .0878  \\
 $5\times 10^3$ & .0626  & .0626  & .0626  &  .0586   &  .0585  \\
 $1\times 10^4$ & .0454  & .0454  & .0454  &  .0440   & .0439  \\ \bottomrule
\end{tabular}
\end{table}

\begin{figure}
\center
 \hspace{-6cm}\begin{minipage}[b]{0.3\textwidth}
            \centering
%
%
\begin{tikzpicture}

\begin{axis}[%
width=4.2in,
height=2.0in,
at={(0.758in,0.474in)},
scale only axis,
separate axis lines,
every outer x axis line/.append style={black},
every x tick label/.append style={font=\color{black}},
xmin=0.000,
xmax=1.000,
xlabel ={$x$},
ylabel = {$f_{D,h}(x)$},
every outer y axis line/.append style={black},
every y tick label/.append style={font=\color{black}},
ymin=0.000,
ymax=12.000,
axis background/.style={fill=white}
]
\addplot [ color=black, thick, densely dashed, forget plot]
  table[row sep=crcr]{%
0.001	10.071\\
0.011	3.042\\
0.021	2.213\\
0.031	1.831\\
0.041	1.600\\
0.051	1.442\\
0.061	1.326\\
0.071	1.235\\
0.082	1.163\\
0.092	1.103\\
0.102	1.053\\
0.112	1.010\\
0.122	0.973\\
0.132	0.941\\
0.142	0.912\\
0.152	0.886\\
0.162	0.864\\
0.172	0.843\\
0.182	0.824\\
0.192	0.808\\
0.202	0.792\\
0.212	0.778\\
0.223	0.765\\
0.233	0.753\\
0.243	0.742\\
0.253	0.732\\
0.263	0.723\\
0.273	0.715\\
0.283	0.707\\
0.293	0.699\\
0.303	0.693\\
0.313	0.686\\
0.323	0.681\\
0.333	0.675\\
0.343	0.670\\
0.353	0.666\\
0.364	0.662\\
0.374	0.658\\
0.384	0.655\\
0.394	0.651\\
0.404	0.649\\
0.414	0.646\\
0.424	0.644\\
0.434	0.642\\
0.444	0.641\\
0.454	0.639\\
0.464	0.638\\
0.474	0.637\\
0.484	0.637\\
0.494	0.637\\
0.505	0.637\\
0.515	0.637\\
0.525	0.637\\
0.535	0.638\\
0.545	0.639\\
0.555	0.640\\
0.565	0.642\\
0.575	0.644\\
0.585	0.646\\
0.595	0.648\\
0.605	0.651\\
0.615	0.654\\
0.625	0.658\\
0.635	0.661\\
0.646	0.665\\
0.656	0.670\\
0.666	0.675\\
0.676	0.680\\
0.686	0.686\\
0.696	0.692\\
0.706	0.699\\
0.716	0.706\\
0.726	0.714\\
0.736	0.722\\
0.746	0.731\\
0.756	0.741\\
0.766	0.752\\
0.776	0.764\\
0.787	0.777\\
0.797	0.791\\
0.807	0.806\\
0.817	0.823\\
0.827	0.841\\
0.837	0.861\\
0.847	0.884\\
0.857	0.909\\
0.867	0.938\\
0.877	0.970\\
0.887	1.006\\
0.897	1.049\\
0.907	1.098\\
0.917	1.157\\
0.928	1.228\\
0.938	1.316\\
0.948	1.429\\
0.958	1.582\\
0.968	1.803\\
0.978	2.163\\
0.988	2.915\\
0.998	7.125\\
};
\addplot [  color=red, solid, forget plot]
  table[row sep=crcr]{%
0.001	1.789\\
0.011	2.032\\
0.021	2.080\\
0.031	1.961\\
0.041	1.757\\
0.051	1.551\\
0.061	1.385\\
0.071	1.258\\
0.082	1.155\\
0.092	1.072\\
0.102	1.008\\
0.112	0.958\\
0.122	0.908\\
0.132	0.849\\
0.142	0.784\\
0.152	0.731\\
0.162	0.701\\
0.172	0.697\\
0.182	0.711\\
0.192	0.733\\
0.202	0.755\\
0.212	0.775\\
0.223	0.793\\
0.233	0.814\\
0.243	0.839\\
0.253	0.860\\
0.263	0.871\\
0.273	0.868\\
0.283	0.846\\
0.293	0.804\\
0.303	0.750\\
0.313	0.702\\
0.323	0.683\\
0.333	0.701\\
0.343	0.746\\
0.353	0.787\\
0.364	0.794\\
0.374	0.757\\
0.384	0.693\\
0.394	0.629\\
0.404	0.586\\
0.414	0.569\\
0.424	0.571\\
0.434	0.581\\
0.444	0.590\\
0.454	0.593\\
0.464	0.587\\
0.474	0.578\\
0.484	0.574\\
0.494	0.583\\
0.505	0.604\\
0.515	0.632\\
0.525	0.662\\
0.535	0.687\\
0.545	0.699\\
0.555	0.692\\
0.565	0.668\\
0.575	0.636\\
0.585	0.609\\
0.595	0.592\\
0.605	0.584\\
0.615	0.582\\
0.625	0.590\\
0.635	0.611\\
0.646	0.646\\
0.656	0.687\\
0.666	0.726\\
0.676	0.756\\
0.686	0.773\\
0.696	0.774\\
0.706	0.761\\
0.716	0.740\\
0.726	0.721\\
0.736	0.711\\
0.746	0.714\\
0.756	0.729\\
0.766	0.753\\
0.776	0.780\\
0.787	0.803\\
0.797	0.823\\
0.807	0.849\\
0.817	0.890\\
0.827	0.945\\
0.837	1.001\\
0.847	1.039\\
0.857	1.053\\
0.867	1.052\\
0.877	1.058\\
0.887	1.091\\
0.897	1.158\\
0.907	1.243\\
0.917	1.325\\
0.928	1.394\\
0.938	1.462\\
0.948	1.551\\
0.958	1.673\\
0.968	1.809\\
0.978	1.911\\
0.988	1.909\\
0.998	1.744\\
};
\addplot [dotted, thick, color=blue, forget plot]
  table[row sep=crcr]{%
0.001	2.039\\
0.011	2.389\\
0.021	2.373\\
0.031	2.057\\
0.041	1.681\\
0.051	1.441\\
0.061	1.336\\
0.071	1.256\\
0.082	1.142\\
0.092	1.029\\
0.102	0.971\\
0.112	0.963\\
0.122	0.947\\
0.132	0.875\\
0.142	0.761\\
0.152	0.677\\
0.162	0.660\\
0.172	0.677\\
0.182	0.701\\
0.192	0.735\\
0.202	0.769\\
0.212	0.782\\
0.223	0.780\\
0.233	0.797\\
0.243	0.846\\
0.253	0.882\\
0.263	0.885\\
0.273	0.886\\
0.283	0.884\\
0.293	0.841\\
0.303	0.749\\
0.313	0.650\\
0.323	0.605\\
0.333	0.644\\
0.343	0.746\\
0.353	0.855\\
0.364	0.894\\
0.374	0.813\\
0.384	0.668\\
0.394	0.572\\
0.404	0.548\\
0.414	0.549\\
0.424	0.558\\
0.434	0.582\\
0.444	0.605\\
0.454	0.612\\
0.464	0.600\\
0.474	0.569\\
0.484	0.543\\
0.494	0.559\\
0.505	0.602\\
0.515	0.632\\
0.525	0.659\\
0.535	0.702\\
0.545	0.736\\
0.555	0.729\\
0.565	0.681\\
0.575	0.618\\
0.585	0.581\\
0.595	0.585\\
0.605	0.590\\
0.615	0.572\\
0.625	0.562\\
0.635	0.586\\
0.646	0.637\\
0.656	0.696\\
0.666	0.740\\
0.676	0.769\\
0.686	0.796\\
0.696	0.804\\
0.706	0.775\\
0.716	0.729\\
0.726	0.703\\
0.736	0.700\\
0.746	0.696\\
0.756	0.705\\
0.766	0.748\\
0.776	0.799\\
0.787	0.820\\
0.797	0.816\\
0.807	0.817\\
0.817	0.853\\
0.827	0.936\\
0.837	1.036\\
0.847	1.096\\
0.857	1.089\\
0.867	1.044\\
0.877	1.003\\
0.887	1.017\\
0.897	1.117\\
0.907	1.267\\
0.917	1.382\\
0.928	1.414\\
0.938	1.414\\
0.948	1.475\\
0.958	1.623\\
0.968	1.821\\
0.978	2.034\\
0.988	2.182\\
0.998	2.072\\
};
\end{axis}
\end{tikzpicture}%
\end{minipage}
\\
\hspace{-6cm}\begin{minipage}[b]{0.3\textwidth}
                   \centering
%
%
\begin{tikzpicture}

\begin{axis}[%
width=4.2in,
height=2.0in,
at={(0.758in,0.474in)},
scale only axis,
separate axis lines,
every outer x axis line/.append style={black},
every x tick label/.append style={font=\color{black}},
xmin=0.000,
xmax=1.000,
xlabel ={$x$},
ylabel = {$f_{D,h}(x)$},
every outer y axis line/.append style={black},
every y tick label/.append style={font=\color{black}},
ymin=0.000,
ymax=12.000,
axis background/.style={fill=white}
]
\addplot [ color=black, thick, densely dashed, forget plot]
  table[row sep=crcr]{%
0.001	10.071\\
0.011	3.042\\
0.021	2.213\\
0.031	1.831\\
0.041	1.600\\
0.051	1.442\\
0.061	1.326\\
0.071	1.235\\
0.082	1.163\\
0.092	1.103\\
0.102	1.053\\
0.112	1.010\\
0.122	0.973\\
0.132	0.941\\
0.142	0.912\\
0.152	0.886\\
0.162	0.864\\
0.172	0.843\\
0.182	0.824\\
0.192	0.808\\
0.202	0.792\\
0.212	0.778\\
0.223	0.765\\
0.233	0.753\\
0.243	0.742\\
0.253	0.732\\
0.263	0.723\\
0.273	0.715\\
0.283	0.707\\
0.293	0.699\\
0.303	0.693\\
0.313	0.686\\
0.323	0.681\\
0.333	0.675\\
0.343	0.670\\
0.353	0.666\\
0.364	0.662\\
0.374	0.658\\
0.384	0.655\\
0.394	0.651\\
0.404	0.649\\
0.414	0.646\\
0.424	0.644\\
0.434	0.642\\
0.444	0.641\\
0.454	0.639\\
0.464	0.638\\
0.474	0.637\\
0.484	0.637\\
0.494	0.637\\
0.505	0.637\\
0.515	0.637\\
0.525	0.637\\
0.535	0.638\\
0.545	0.639\\
0.555	0.640\\
0.565	0.642\\
0.575	0.644\\
0.585	0.646\\
0.595	0.648\\
0.605	0.651\\
0.615	0.654\\
0.625	0.658\\
0.635	0.661\\
0.646	0.665\\
0.656	0.670\\
0.666	0.675\\
0.676	0.680\\
0.686	0.686\\
0.696	0.692\\
0.706	0.699\\
0.716	0.706\\
0.726	0.714\\
0.736	0.722\\
0.746	0.731\\
0.756	0.741\\
0.766	0.752\\
0.776	0.764\\
0.787	0.777\\
0.797	0.791\\
0.807	0.806\\
0.817	0.823\\
0.827	0.841\\
0.837	0.861\\
0.847	0.884\\
0.857	0.909\\
0.867	0.938\\
0.877	0.970\\
0.887	1.006\\
0.897	1.049\\
0.907	1.098\\
0.917	1.157\\
0.928	1.228\\
0.938	1.316\\
0.948	1.429\\
0.958	1.582\\
0.968	1.803\\
0.978	2.163\\
0.988	2.915\\
0.998	7.125\\
};
\addplot [  color=red, solid, forget plot]
  table[row sep=crcr]{%
0.001	2.356\\
0.011	2.681\\
0.021	2.411\\
0.031	2.019\\
0.041	1.726\\
0.051	1.507\\
0.061	1.363\\
0.071	1.303\\
0.082	1.215\\
0.092	1.104\\
0.102	1.039\\
0.112	0.990\\
0.122	0.946\\
0.132	0.924\\
0.142	0.899\\
0.152	0.868\\
0.162	0.833\\
0.172	0.804\\
0.182	0.799\\
0.192	0.820\\
0.202	0.835\\
0.212	0.813\\
0.223	0.780\\
0.233	0.776\\
0.243	0.769\\
0.253	0.745\\
0.263	0.715\\
0.273	0.675\\
0.283	0.664\\
0.293	0.694\\
0.303	0.737\\
0.313	0.747\\
0.323	0.700\\
0.333	0.652\\
0.343	0.648\\
0.353	0.669\\
0.364	0.668\\
0.374	0.678\\
0.384	0.713\\
0.394	0.708\\
0.404	0.667\\
0.414	0.625\\
0.424	0.610\\
0.434	0.629\\
0.444	0.649\\
0.454	0.650\\
0.464	0.633\\
0.474	0.620\\
0.484	0.614\\
0.494	0.622\\
0.505	0.636\\
0.515	0.650\\
0.525	0.677\\
0.535	0.695\\
0.545	0.669\\
0.555	0.621\\
0.565	0.601\\
0.575	0.596\\
0.585	0.591\\
0.595	0.604\\
0.605	0.641\\
0.615	0.652\\
0.625	0.640\\
0.635	0.676\\
0.646	0.731\\
0.656	0.728\\
0.666	0.700\\
0.676	0.712\\
0.686	0.720\\
0.696	0.685\\
0.706	0.659\\
0.716	0.696\\
0.726	0.745\\
0.736	0.762\\
0.746	0.755\\
0.756	0.714\\
0.766	0.698\\
0.776	0.736\\
0.787	0.791\\
0.797	0.806\\
0.807	0.781\\
0.817	0.761\\
0.827	0.808\\
0.837	0.888\\
0.847	0.944\\
0.857	0.974\\
0.867	0.979\\
0.877	1.006\\
0.887	1.037\\
0.897	1.067\\
0.907	1.098\\
0.917	1.170\\
0.928	1.304\\
0.938	1.402\\
0.948	1.468\\
0.958	1.632\\
0.968	1.944\\
0.978	2.397\\
0.988	2.777\\
0.998	2.488\\
};
\addplot [dotted, thick, color=blue, forget plot]
  table[row sep=crcr]{%
0.001	2.579\\
0.011	2.855\\
0.021	2.412\\
0.031	1.985\\
0.041	1.706\\
0.051	1.501\\
0.061	1.317\\
0.071	1.330\\
0.082	1.224\\
0.092	1.081\\
0.102	1.039\\
0.112	0.992\\
0.122	0.934\\
0.132	0.929\\
0.142	0.901\\
0.152	0.867\\
0.162	0.836\\
0.172	0.795\\
0.182	0.791\\
0.192	0.821\\
0.202	0.849\\
0.212	0.819\\
0.223	0.763\\
0.233	0.781\\
0.243	0.774\\
0.253	0.744\\
0.263	0.722\\
0.273	0.666\\
0.283	0.648\\
0.293	0.693\\
0.303	0.744\\
0.313	0.769\\
0.323	0.699\\
0.333	0.637\\
0.343	0.639\\
0.353	0.682\\
0.364	0.665\\
0.374	0.661\\
0.384	0.734\\
0.394	0.715\\
0.404	0.668\\
0.414	0.617\\
0.424	0.598\\
0.434	0.632\\
0.444	0.653\\
0.454	0.660\\
0.464	0.627\\
0.474	0.620\\
0.484	0.609\\
0.494	0.619\\
0.505	0.639\\
0.515	0.644\\
0.525	0.678\\
0.535	0.710\\
0.545	0.679\\
0.555	0.606\\
0.565	0.601\\
0.575	0.595\\
0.585	0.589\\
0.595	0.589\\
0.605	0.653\\
0.615	0.665\\
0.625	0.618\\
0.635	0.668\\
0.646	0.755\\
0.656	0.739\\
0.666	0.675\\
0.676	0.719\\
0.686	0.736\\
0.696	0.684\\
0.706	0.634\\
0.716	0.695\\
0.726	0.760\\
0.736	0.758\\
0.746	0.776\\
0.756	0.701\\
0.766	0.682\\
0.776	0.729\\
0.787	0.806\\
0.797	0.818\\
0.807	0.783\\
0.817	0.736\\
0.827	0.799\\
0.837	0.901\\
0.847	0.945\\
0.857	0.992\\
0.867	0.961\\
0.877	1.011\\
0.887	1.036\\
0.897	1.069\\
0.907	1.093\\
0.917	1.137\\
0.928	1.324\\
0.938	1.421\\
0.948	1.434\\
0.958	1.592\\
0.968	1.910\\
0.978	2.364\\
0.988	2.993\\
0.998	2.722\\
};
\end{axis}
\end{tikzpicture}%
\end{minipage}
\\
\hspace{-6cm}\begin{minipage}[b]{0.3\textwidth}
                          \centering
%
%
\begin{tikzpicture}

\begin{axis}[%
width=4.2in,
height=2.0in,
at={(2.578in,1.098in)},
scale only axis,
separate axis lines,
every outer x axis line/.append style={black},
every x tick label/.append style={font=\color{black}},
xmin=0.000,
xmax=1.000,
xlabel ={$x$},
ylabel = {$f_{D,h}(x)$},
every outer y axis line/.append style={black},
every y tick label/.append style={font=\color{black}},
ymin=0.000,
ymax=12.000,
axis background/.style={fill=white}
]
\addplot [ color=black, thick, densely dashed, forget plot]
  table[row sep=crcr]{%
0.001	10.071\\
0.011	3.042\\
0.021	2.213\\
0.031	1.831\\
0.041	1.600\\
0.051	1.442\\
0.061	1.326\\
0.071	1.235\\
0.082	1.163\\
0.092	1.103\\
0.102	1.053\\
0.112	1.010\\
0.122	0.973\\
0.132	0.941\\
0.142	0.912\\
0.152	0.886\\
0.162	0.864\\
0.172	0.843\\
0.182	0.824\\
0.192	0.808\\
0.202	0.792\\
0.212	0.778\\
0.223	0.765\\
0.233	0.753\\
0.243	0.742\\
0.253	0.732\\
0.263	0.723\\
0.273	0.715\\
0.283	0.707\\
0.293	0.699\\
0.303	0.693\\
0.313	0.686\\
0.323	0.681\\
0.333	0.675\\
0.343	0.670\\
0.353	0.666\\
0.364	0.662\\
0.374	0.658\\
0.384	0.655\\
0.394	0.651\\
0.404	0.649\\
0.414	0.646\\
0.424	0.644\\
0.434	0.642\\
0.444	0.641\\
0.454	0.639\\
0.464	0.638\\
0.474	0.637\\
0.484	0.637\\
0.494	0.637\\
0.505	0.637\\
0.515	0.637\\
0.525	0.637\\
0.535	0.638\\
0.545	0.639\\
0.555	0.640\\
0.565	0.642\\
0.575	0.644\\
0.585	0.646\\
0.595	0.648\\
0.605	0.651\\
0.615	0.654\\
0.625	0.658\\
0.635	0.661\\
0.646	0.665\\
0.656	0.670\\
0.666	0.675\\
0.676	0.680\\
0.686	0.686\\
0.696	0.692\\
0.706	0.699\\
0.716	0.706\\
0.726	0.714\\
0.736	0.722\\
0.746	0.731\\
0.756	0.741\\
0.766	0.752\\
0.776	0.764\\
0.787	0.777\\
0.797	0.791\\
0.807	0.806\\
0.817	0.823\\
0.827	0.841\\
0.837	0.861\\
0.847	0.884\\
0.857	0.909\\
0.867	0.938\\
0.877	0.970\\
0.887	1.006\\
0.897	1.049\\
0.907	1.098\\
0.917	1.157\\
0.928	1.228\\
0.938	1.316\\
0.948	1.429\\
0.958	1.582\\
0.968	1.803\\
0.978	2.163\\
0.988	2.915\\
0.998	7.125\\
};
\addplot [  color=red, solid, forget plot]
  table[row sep=crcr]{%
0.001	2.550\\
0.011	2.897\\
0.021	2.533\\
0.031	2.037\\
0.041	1.693\\
0.051	1.462\\
0.061	1.306\\
0.071	1.233\\
0.082	1.184\\
0.092	1.128\\
0.102	1.071\\
0.112	1.009\\
0.122	0.974\\
0.132	0.944\\
0.142	0.916\\
0.152	0.893\\
0.162	0.886\\
0.172	0.869\\
0.182	0.840\\
0.192	0.801\\
0.202	0.799\\
0.212	0.785\\
0.223	0.753\\
0.233	0.741\\
0.243	0.722\\
0.253	0.712\\
0.263	0.730\\
0.273	0.722\\
0.283	0.695\\
0.293	0.694\\
0.303	0.693\\
0.313	0.690\\
0.323	0.688\\
0.333	0.674\\
0.343	0.655\\
0.353	0.653\\
0.364	0.656\\
0.374	0.668\\
0.384	0.681\\
0.394	0.670\\
0.404	0.647\\
0.414	0.646\\
0.424	0.652\\
0.434	0.637\\
0.444	0.633\\
0.454	0.643\\
0.464	0.642\\
0.474	0.647\\
0.484	0.652\\
0.494	0.645\\
0.505	0.647\\
0.515	0.643\\
0.525	0.625\\
0.535	0.614\\
0.545	0.631\\
0.555	0.637\\
0.565	0.638\\
0.575	0.659\\
0.585	0.670\\
0.595	0.647\\
0.605	0.623\\
0.615	0.642\\
0.625	0.661\\
0.635	0.664\\
0.646	0.659\\
0.656	0.662\\
0.666	0.676\\
0.676	0.696\\
0.686	0.701\\
0.696	0.704\\
0.706	0.688\\
0.716	0.669\\
0.726	0.681\\
0.736	0.695\\
0.746	0.709\\
0.756	0.724\\
0.766	0.739\\
0.776	0.767\\
0.787	0.789\\
0.797	0.789\\
0.807	0.796\\
0.817	0.804\\
0.827	0.809\\
0.837	0.857\\
0.847	0.910\\
0.857	0.927\\
0.867	0.932\\
0.877	0.973\\
0.887	1.014\\
0.897	1.055\\
0.907	1.110\\
0.917	1.163\\
0.928	1.225\\
0.938	1.317\\
0.948	1.459\\
0.958	1.658\\
0.968	1.973\\
0.978	2.462\\
0.988	2.862\\
0.998	2.603\\
};
\addplot [dotted, thick, color=blue, forget plot]
  table[row sep=crcr]{%
0.001	2.746\\
0.011	3.055\\
0.021	2.536\\
0.031	1.996\\
0.041	1.675\\
0.051	1.450\\
0.061	1.285\\
0.071	1.232\\
0.082	1.188\\
0.092	1.121\\
0.102	1.082\\
0.112	0.994\\
0.122	0.980\\
0.132	0.937\\
0.142	0.922\\
0.152	0.883\\
0.162	0.890\\
0.172	0.867\\
0.182	0.852\\
0.192	0.783\\
0.202	0.809\\
0.212	0.790\\
0.223	0.743\\
0.233	0.747\\
0.243	0.720\\
0.253	0.701\\
0.263	0.738\\
0.273	0.727\\
0.283	0.685\\
0.293	0.698\\
0.303	0.691\\
0.313	0.694\\
0.323	0.686\\
0.333	0.680\\
0.343	0.649\\
0.353	0.653\\
0.364	0.654\\
0.374	0.667\\
0.384	0.685\\
0.394	0.676\\
0.404	0.640\\
0.414	0.643\\
0.424	0.664\\
0.434	0.630\\
0.444	0.630\\
0.454	0.648\\
0.464	0.637\\
0.474	0.649\\
0.484	0.655\\
0.494	0.645\\
0.505	0.647\\
0.515	0.643\\
0.525	0.628\\
0.535	0.605\\
0.545	0.636\\
0.555	0.641\\
0.565	0.627\\
0.575	0.662\\
0.585	0.678\\
0.595	0.652\\
0.605	0.609\\
0.615	0.643\\
0.625	0.667\\
0.635	0.666\\
0.646	0.656\\
0.656	0.660\\
0.666	0.674\\
0.676	0.703\\
0.686	0.698\\
0.696	0.709\\
0.706	0.692\\
0.716	0.655\\
0.726	0.686\\
0.736	0.696\\
0.746	0.707\\
0.756	0.726\\
0.766	0.734\\
0.776	0.769\\
0.787	0.795\\
0.797	0.788\\
0.807	0.792\\
0.817	0.813\\
0.827	0.794\\
0.837	0.856\\
0.847	0.921\\
0.857	0.932\\
0.867	0.915\\
0.877	0.979\\
0.887	1.015\\
0.897	1.053\\
0.907	1.111\\
0.917	1.161\\
0.928	1.220\\
0.938	1.305\\
0.948	1.453\\
0.958	1.638\\
0.968	1.925\\
0.978	2.467\\
0.988	2.994\\
0.998	2.818\\
};
\end{axis}
\end{tikzpicture}%
\end{minipage}
\caption{Plots of the kernel density estimators $f_{D,h}$ for Logistic map in Example \ref{logistic_map} with different bandwidths and its true density with different sample sizes. The sample size of each panel, from up to bottom, is $10^3$, $10^4$ and $10^5$, respectively. In each panel, the dashed black curve represents the true density of Logistic map, the dotted blue curve is the estimated density of Logistic map with the bandwidth selected by the baseline method while the solid red curve stands for the estimated density of Logistic map with the bandwidth selected by the double kernel method. All curves are plotted with $100$ equispaced points in the interval $(0,1)$.}\label{logistic_map_figure}  
\end{figure}  

\begin{figure}
\center
 \hspace{-6cm}\begin{minipage}[b]{0.3\textwidth}
            \centering
%
%
\begin{tikzpicture}

\begin{axis}[%
width=4.2in,
height=2.0in,
at={(0.758in,0.474in)},
scale only axis,
separate axis lines,
every outer x axis line/.append style={black},
every x tick label/.append style={font=\color{black}},
xmin=0.000,
xmax=1.000,
xlabel ={$x$},
ylabel = {$f_{D,h}(x)$},
every outer y axis line/.append style={black},
every y tick label/.append style={font=\color{black}},
ymin=0.400,
ymax=1.800,
axis background/.style={fill=white}
]
\addplot [ color=black, thick, densely dashed, forget plot]
  table[row sep=crcr]{%
0.001	1.441\\
0.011	1.427\\
0.021	1.413\\
0.031	1.399\\
0.041	1.385\\
0.051	1.372\\
0.061	1.359\\
0.071	1.346\\
0.082	1.334\\
0.092	1.322\\
0.102	1.310\\
0.112	1.298\\
0.122	1.286\\
0.132	1.275\\
0.142	1.263\\
0.152	1.252\\
0.162	1.241\\
0.172	1.231\\
0.182	1.220\\
0.192	1.210\\
0.202	1.200\\
0.212	1.190\\
0.223	1.180\\
0.233	1.170\\
0.243	1.161\\
0.253	1.152\\
0.263	1.142\\
0.273	1.133\\
0.283	1.124\\
0.293	1.116\\
0.303	1.107\\
0.313	1.099\\
0.323	1.090\\
0.333	1.082\\
0.343	1.074\\
0.353	1.066\\
0.364	1.058\\
0.374	1.050\\
0.384	1.043\\
0.394	1.035\\
0.404	1.028\\
0.414	1.020\\
0.424	1.013\\
0.434	1.006\\
0.444	0.999\\
0.454	0.992\\
0.464	0.985\\
0.474	0.979\\
0.484	0.972\\
0.494	0.965\\
0.505	0.959\\
0.515	0.953\\
0.525	0.946\\
0.535	0.940\\
0.545	0.934\\
0.555	0.928\\
0.565	0.922\\
0.575	0.916\\
0.585	0.910\\
0.595	0.904\\
0.605	0.899\\
0.615	0.893\\
0.625	0.888\\
0.635	0.882\\
0.646	0.877\\
0.656	0.871\\
0.666	0.866\\
0.676	0.861\\
0.686	0.856\\
0.696	0.851\\
0.706	0.846\\
0.716	0.841\\
0.726	0.836\\
0.736	0.831\\
0.746	0.826\\
0.756	0.821\\
0.766	0.817\\
0.776	0.812\\
0.787	0.808\\
0.797	0.803\\
0.807	0.799\\
0.817	0.794\\
0.827	0.790\\
0.837	0.785\\
0.847	0.781\\
0.857	0.777\\
0.867	0.773\\
0.877	0.769\\
0.887	0.764\\
0.897	0.760\\
0.907	0.756\\
0.917	0.752\\
0.928	0.748\\
0.938	0.745\\
0.948	0.741\\
0.958	0.737\\
0.968	0.733\\
0.978	0.729\\
0.988	0.726\\
0.998	0.722\\
};
\addplot [  color=red, solid, forget plot]
  table[row sep=crcr]{%
0.001	0.744\\
0.011	0.860\\
0.021	0.968\\
0.031	1.063\\
0.041	1.142\\
0.051	1.202\\
0.061	1.242\\
0.071	1.264\\
0.082	1.270\\
0.092	1.263\\
0.102	1.247\\
0.112	1.225\\
0.122	1.200\\
0.132	1.176\\
0.142	1.154\\
0.152	1.135\\
0.162	1.120\\
0.172	1.110\\
0.182	1.106\\
0.192	1.107\\
0.202	1.113\\
0.212	1.125\\
0.223	1.142\\
0.233	1.163\\
0.243	1.186\\
0.253	1.208\\
0.263	1.228\\
0.273	1.243\\
0.283	1.251\\
0.293	1.251\\
0.303	1.243\\
0.313	1.226\\
0.323	1.203\\
0.333	1.175\\
0.343	1.145\\
0.353	1.114\\
0.364	1.085\\
0.374	1.060\\
0.384	1.038\\
0.394	1.021\\
0.404	1.007\\
0.414	0.996\\
0.424	0.988\\
0.434	0.982\\
0.444	0.977\\
0.454	0.974\\
0.464	0.971\\
0.474	0.968\\
0.484	0.965\\
0.494	0.962\\
0.505	0.958\\
0.515	0.952\\
0.525	0.945\\
0.535	0.937\\
0.545	0.928\\
0.555	0.918\\
0.565	0.908\\
0.575	0.898\\
0.585	0.888\\
0.595	0.880\\
0.605	0.874\\
0.615	0.871\\
0.625	0.870\\
0.635	0.872\\
0.646	0.877\\
0.656	0.883\\
0.666	0.889\\
0.676	0.895\\
0.686	0.899\\
0.696	0.902\\
0.706	0.902\\
0.716	0.900\\
0.726	0.896\\
0.736	0.892\\
0.746	0.885\\
0.756	0.877\\
0.766	0.867\\
0.776	0.855\\
0.787	0.840\\
0.797	0.823\\
0.807	0.803\\
0.817	0.781\\
0.827	0.758\\
0.837	0.736\\
0.847	0.716\\
0.857	0.698\\
0.867	0.684\\
0.877	0.673\\
0.887	0.667\\
0.897	0.663\\
0.907	0.660\\
0.917	0.658\\
0.928	0.653\\
0.938	0.642\\
0.948	0.625\\
0.958	0.599\\
0.968	0.562\\
0.978	0.517\\
0.988	0.463\\
0.998	0.404\\
};
\addplot [dotted, thick, color=blue, forget plot]
  table[row sep=crcr]{%
0.001	0.754\\
0.011	0.907\\
0.021	1.046\\
0.031	1.163\\
0.041	1.252\\
0.051	1.311\\
0.061	1.342\\
0.071	1.348\\
0.082	1.333\\
0.092	1.304\\
0.102	1.267\\
0.112	1.226\\
0.122	1.189\\
0.132	1.157\\
0.142	1.131\\
0.152	1.113\\
0.162	1.099\\
0.172	1.089\\
0.182	1.081\\
0.192	1.078\\
0.202	1.082\\
0.212	1.094\\
0.223	1.116\\
0.233	1.146\\
0.243	1.182\\
0.253	1.220\\
0.263	1.255\\
0.273	1.284\\
0.283	1.301\\
0.293	1.305\\
0.303	1.293\\
0.313	1.267\\
0.323	1.230\\
0.333	1.185\\
0.343	1.139\\
0.353	1.095\\
0.364	1.059\\
0.374	1.032\\
0.384	1.014\\
0.394	1.002\\
0.404	0.994\\
0.414	0.987\\
0.424	0.982\\
0.434	0.976\\
0.444	0.971\\
0.454	0.969\\
0.464	0.968\\
0.474	0.969\\
0.484	0.970\\
0.494	0.969\\
0.505	0.966\\
0.515	0.960\\
0.525	0.951\\
0.535	0.942\\
0.545	0.931\\
0.555	0.921\\
0.565	0.910\\
0.575	0.897\\
0.585	0.884\\
0.595	0.870\\
0.605	0.858\\
0.615	0.851\\
0.625	0.850\\
0.635	0.855\\
0.646	0.866\\
0.656	0.881\\
0.666	0.896\\
0.676	0.908\\
0.686	0.916\\
0.696	0.917\\
0.706	0.913\\
0.716	0.906\\
0.726	0.899\\
0.736	0.893\\
0.746	0.888\\
0.756	0.884\\
0.766	0.880\\
0.776	0.872\\
0.787	0.859\\
0.797	0.840\\
0.807	0.815\\
0.817	0.787\\
0.827	0.756\\
0.837	0.727\\
0.847	0.700\\
0.857	0.677\\
0.867	0.660\\
0.877	0.650\\
0.887	0.646\\
0.897	0.648\\
0.907	0.655\\
0.917	0.665\\
0.928	0.676\\
0.938	0.682\\
0.948	0.678\\
0.958	0.660\\
0.968	0.625\\
0.978	0.570\\
0.988	0.500\\
0.998	0.418\\
};
\end{axis}
\end{tikzpicture}%
\end{minipage}
\\
\hspace{-6cm}\begin{minipage}[b]{0.3\textwidth}
                   \centering
%
%
\begin{tikzpicture}

\begin{axis}[%
width=4.2in,
height=2.0in,
at={(0.758in,0.474in)},
scale only axis,
separate axis lines,
every outer x axis line/.append style={black},
every x tick label/.append style={font=\color{black}},
xmin=0.000,
xmax=1.000,
xlabel ={$x$},
ylabel = {$f_{D,h}(x)$},
every outer y axis line/.append style={black},
every y tick label/.append style={font=\color{black}},
ymin=0.200,
ymax=1.600,
axis background/.style={fill=white}
]
\addplot [ color=black, thick, densely dashed, forget plot]
  table[row sep=crcr]{%
0.001	1.441\\
0.011	1.427\\
0.021	1.413\\
0.031	1.399\\
0.041	1.385\\
0.051	1.372\\
0.061	1.359\\
0.071	1.346\\
0.082	1.334\\
0.092	1.322\\
0.102	1.310\\
0.112	1.298\\
0.122	1.286\\
0.132	1.275\\
0.142	1.263\\
0.152	1.252\\
0.162	1.241\\
0.172	1.231\\
0.182	1.220\\
0.192	1.210\\
0.202	1.200\\
0.212	1.190\\
0.223	1.180\\
0.233	1.170\\
0.243	1.161\\
0.253	1.152\\
0.263	1.142\\
0.273	1.133\\
0.283	1.124\\
0.293	1.116\\
0.303	1.107\\
0.313	1.099\\
0.323	1.090\\
0.333	1.082\\
0.343	1.074\\
0.353	1.066\\
0.364	1.058\\
0.374	1.050\\
0.384	1.043\\
0.394	1.035\\
0.404	1.028\\
0.414	1.020\\
0.424	1.013\\
0.434	1.006\\
0.444	0.999\\
0.454	0.992\\
0.464	0.985\\
0.474	0.979\\
0.484	0.972\\
0.494	0.965\\
0.505	0.959\\
0.515	0.953\\
0.525	0.946\\
0.535	0.940\\
0.545	0.934\\
0.555	0.928\\
0.565	0.922\\
0.575	0.916\\
0.585	0.910\\
0.595	0.904\\
0.605	0.899\\
0.615	0.893\\
0.625	0.888\\
0.635	0.882\\
0.646	0.877\\
0.656	0.871\\
0.666	0.866\\
0.676	0.861\\
0.686	0.856\\
0.696	0.851\\
0.706	0.846\\
0.716	0.841\\
0.726	0.836\\
0.736	0.831\\
0.746	0.826\\
0.756	0.821\\
0.766	0.817\\
0.776	0.812\\
0.787	0.808\\
0.797	0.803\\
0.807	0.799\\
0.817	0.794\\
0.827	0.790\\
0.837	0.785\\
0.847	0.781\\
0.857	0.777\\
0.867	0.773\\
0.877	0.769\\
0.887	0.764\\
0.897	0.760\\
0.907	0.756\\
0.917	0.752\\
0.928	0.748\\
0.938	0.745\\
0.948	0.741\\
0.958	0.737\\
0.968	0.733\\
0.978	0.729\\
0.988	0.726\\
0.998	0.722\\
};
\addplot [  color=red, solid, forget plot]
  table[row sep=crcr]{%
0.001	0.678\\
0.011	0.762\\
0.021	0.843\\
0.031	0.920\\
0.041	0.989\\
0.051	1.050\\
0.061	1.103\\
0.071	1.146\\
0.082	1.181\\
0.092	1.207\\
0.102	1.225\\
0.112	1.237\\
0.122	1.243\\
0.132	1.245\\
0.142	1.243\\
0.152	1.238\\
0.162	1.231\\
0.172	1.222\\
0.182	1.213\\
0.192	1.204\\
0.202	1.194\\
0.212	1.186\\
0.223	1.177\\
0.233	1.169\\
0.243	1.162\\
0.253	1.156\\
0.263	1.150\\
0.273	1.144\\
0.283	1.139\\
0.293	1.134\\
0.303	1.128\\
0.313	1.123\\
0.323	1.117\\
0.333	1.110\\
0.343	1.103\\
0.353	1.095\\
0.364	1.087\\
0.374	1.078\\
0.384	1.069\\
0.394	1.059\\
0.404	1.049\\
0.414	1.040\\
0.424	1.030\\
0.434	1.021\\
0.444	1.011\\
0.454	1.003\\
0.464	0.994\\
0.474	0.985\\
0.484	0.977\\
0.494	0.969\\
0.505	0.961\\
0.515	0.953\\
0.525	0.946\\
0.535	0.938\\
0.545	0.931\\
0.555	0.924\\
0.565	0.918\\
0.575	0.912\\
0.585	0.906\\
0.595	0.901\\
0.605	0.897\\
0.615	0.893\\
0.625	0.890\\
0.635	0.887\\
0.646	0.884\\
0.656	0.882\\
0.666	0.879\\
0.676	0.877\\
0.686	0.874\\
0.696	0.871\\
0.706	0.867\\
0.716	0.863\\
0.726	0.859\\
0.736	0.853\\
0.746	0.847\\
0.756	0.841\\
0.766	0.834\\
0.776	0.827\\
0.787	0.820\\
0.797	0.813\\
0.807	0.807\\
0.817	0.800\\
0.827	0.793\\
0.837	0.786\\
0.847	0.779\\
0.857	0.771\\
0.867	0.761\\
0.877	0.750\\
0.887	0.737\\
0.897	0.721\\
0.907	0.701\\
0.917	0.679\\
0.928	0.652\\
0.938	0.621\\
0.948	0.587\\
0.958	0.548\\
0.968	0.506\\
0.978	0.461\\
0.988	0.414\\
0.998	0.366\\
};
\addplot [dotted, thick, color=blue, forget plot]
  table[row sep=crcr]{%
0.001	0.711\\
0.011	0.928\\
0.021	1.105\\
0.031	1.224\\
0.041	1.289\\
0.051	1.312\\
0.061	1.309\\
0.071	1.299\\
0.082	1.291\\
0.092	1.291\\
0.102	1.295\\
0.112	1.302\\
0.122	1.309\\
0.132	1.311\\
0.142	1.305\\
0.152	1.285\\
0.162	1.254\\
0.172	1.217\\
0.182	1.181\\
0.192	1.153\\
0.202	1.142\\
0.212	1.149\\
0.223	1.166\\
0.233	1.179\\
0.243	1.179\\
0.253	1.164\\
0.263	1.142\\
0.273	1.124\\
0.283	1.116\\
0.293	1.120\\
0.303	1.127\\
0.313	1.132\\
0.323	1.134\\
0.333	1.137\\
0.343	1.138\\
0.353	1.133\\
0.364	1.117\\
0.374	1.092\\
0.384	1.066\\
0.394	1.045\\
0.404	1.030\\
0.414	1.020\\
0.424	1.015\\
0.434	1.012\\
0.444	1.009\\
0.454	1.004\\
0.464	0.996\\
0.474	0.985\\
0.484	0.972\\
0.494	0.963\\
0.505	0.961\\
0.515	0.962\\
0.525	0.957\\
0.535	0.942\\
0.545	0.921\\
0.555	0.905\\
0.565	0.901\\
0.575	0.902\\
0.585	0.900\\
0.595	0.891\\
0.605	0.881\\
0.615	0.877\\
0.625	0.881\\
0.635	0.886\\
0.646	0.887\\
0.656	0.881\\
0.666	0.873\\
0.676	0.865\\
0.686	0.864\\
0.696	0.871\\
0.706	0.883\\
0.716	0.893\\
0.726	0.896\\
0.736	0.892\\
0.746	0.880\\
0.756	0.862\\
0.766	0.839\\
0.776	0.813\\
0.787	0.791\\
0.797	0.778\\
0.807	0.774\\
0.817	0.776\\
0.827	0.781\\
0.837	0.790\\
0.847	0.798\\
0.857	0.804\\
0.867	0.804\\
0.877	0.792\\
0.887	0.772\\
0.897	0.748\\
0.907	0.732\\
0.917	0.726\\
0.928	0.726\\
0.938	0.724\\
0.948	0.714\\
0.958	0.691\\
0.968	0.648\\
0.978	0.578\\
0.988	0.481\\
0.998	0.365\\
};
\end{axis}
\end{tikzpicture}%
\end{minipage}
\\
\hspace{-6cm}\begin{minipage}[b]{0.3\textwidth}
                          \centering
%
%
\begin{tikzpicture}

\begin{axis}[%
width=4.2in,
height=2.0in,
at={(2.578in,1.098in)},
scale only axis,
separate axis lines,
every outer x axis line/.append style={black},
every x tick label/.append style={font=\color{black}},
xmin=0.000,
xmax=1.000,
xlabel ={$x$},
ylabel = {$f_{D,h}(x)$},
every outer y axis line/.append style={black},
every y tick label/.append style={font=\color{black}},
ymin=0.200,
ymax=1.600,
axis background/.style={fill=white}
]
\addplot [ color=black, thick, densely dashed, forget plot]
  table[row sep=crcr]{%
0.001	1.441\\
0.011	1.427\\
0.021	1.413\\
0.031	1.399\\
0.041	1.385\\
0.051	1.372\\
0.061	1.359\\
0.071	1.346\\
0.082	1.334\\
0.092	1.322\\
0.102	1.310\\
0.112	1.298\\
0.122	1.286\\
0.132	1.275\\
0.142	1.263\\
0.152	1.252\\
0.162	1.241\\
0.172	1.231\\
0.182	1.220\\
0.192	1.210\\
0.202	1.200\\
0.212	1.190\\
0.223	1.180\\
0.233	1.170\\
0.243	1.161\\
0.253	1.152\\
0.263	1.142\\
0.273	1.133\\
0.283	1.124\\
0.293	1.116\\
0.303	1.107\\
0.313	1.099\\
0.323	1.090\\
0.333	1.082\\
0.343	1.074\\
0.353	1.066\\
0.364	1.058\\
0.374	1.050\\
0.384	1.043\\
0.394	1.035\\
0.404	1.028\\
0.414	1.020\\
0.424	1.013\\
0.434	1.006\\
0.444	0.999\\
0.454	0.992\\
0.464	0.985\\
0.474	0.979\\
0.484	0.972\\
0.494	0.965\\
0.505	0.959\\
0.515	0.953\\
0.525	0.946\\
0.535	0.940\\
0.545	0.934\\
0.555	0.928\\
0.565	0.922\\
0.575	0.916\\
0.585	0.910\\
0.595	0.904\\
0.605	0.899\\
0.615	0.893\\
0.625	0.888\\
0.635	0.882\\
0.646	0.877\\
0.656	0.871\\
0.666	0.866\\
0.676	0.861\\
0.686	0.856\\
0.696	0.851\\
0.706	0.846\\
0.716	0.841\\
0.726	0.836\\
0.736	0.831\\
0.746	0.826\\
0.756	0.821\\
0.766	0.817\\
0.776	0.812\\
0.787	0.808\\
0.797	0.803\\
0.807	0.799\\
0.817	0.794\\
0.827	0.790\\
0.837	0.785\\
0.847	0.781\\
0.857	0.777\\
0.867	0.773\\
0.877	0.769\\
0.887	0.764\\
0.897	0.760\\
0.907	0.756\\
0.917	0.752\\
0.928	0.748\\
0.938	0.745\\
0.948	0.741\\
0.958	0.737\\
0.968	0.733\\
0.978	0.729\\
0.988	0.726\\
0.998	0.722\\
};
\addplot [  color=red, solid, forget plot]
  table[row sep=crcr]{%
0.001	0.719\\
0.011	0.983\\
0.021	1.187\\
0.031	1.304\\
0.041	1.347\\
0.051	1.350\\
0.061	1.341\\
0.071	1.334\\
0.082	1.328\\
0.092	1.317\\
0.102	1.303\\
0.112	1.287\\
0.122	1.275\\
0.132	1.267\\
0.142	1.264\\
0.152	1.262\\
0.162	1.258\\
0.172	1.250\\
0.182	1.240\\
0.192	1.231\\
0.202	1.220\\
0.212	1.206\\
0.223	1.187\\
0.233	1.169\\
0.243	1.154\\
0.253	1.145\\
0.263	1.139\\
0.273	1.131\\
0.283	1.120\\
0.293	1.111\\
0.303	1.110\\
0.313	1.115\\
0.323	1.120\\
0.333	1.116\\
0.343	1.101\\
0.353	1.080\\
0.364	1.059\\
0.374	1.040\\
0.384	1.027\\
0.394	1.020\\
0.404	1.019\\
0.414	1.019\\
0.424	1.015\\
0.434	1.006\\
0.444	0.992\\
0.454	0.977\\
0.464	0.963\\
0.474	0.954\\
0.484	0.950\\
0.494	0.949\\
0.505	0.948\\
0.515	0.945\\
0.525	0.942\\
0.535	0.941\\
0.545	0.942\\
0.555	0.942\\
0.565	0.936\\
0.575	0.925\\
0.585	0.911\\
0.595	0.899\\
0.605	0.892\\
0.615	0.890\\
0.625	0.891\\
0.635	0.890\\
0.646	0.885\\
0.656	0.878\\
0.666	0.873\\
0.676	0.867\\
0.686	0.859\\
0.696	0.851\\
0.706	0.845\\
0.716	0.843\\
0.726	0.843\\
0.736	0.843\\
0.746	0.840\\
0.756	0.834\\
0.766	0.827\\
0.776	0.820\\
0.787	0.812\\
0.797	0.803\\
0.807	0.795\\
0.817	0.790\\
0.827	0.789\\
0.837	0.789\\
0.847	0.790\\
0.857	0.791\\
0.867	0.791\\
0.877	0.788\\
0.887	0.781\\
0.897	0.772\\
0.907	0.763\\
0.917	0.753\\
0.928	0.743\\
0.938	0.734\\
0.948	0.723\\
0.958	0.707\\
0.968	0.676\\
0.978	0.616\\
0.988	0.517\\
0.998	0.386\\
};
\addplot [dotted, thick, color=blue, forget plot]
  table[row sep=crcr]{%
0.001	0.718\\
0.011	1.073\\
0.021	1.309\\
0.031	1.388\\
0.041	1.379\\
0.051	1.350\\
0.061	1.328\\
0.071	1.328\\
0.082	1.340\\
0.092	1.328\\
0.102	1.299\\
0.112	1.283\\
0.122	1.270\\
0.132	1.260\\
0.142	1.261\\
0.152	1.266\\
0.162	1.266\\
0.172	1.252\\
0.182	1.235\\
0.192	1.232\\
0.202	1.227\\
0.212	1.212\\
0.223	1.191\\
0.233	1.162\\
0.243	1.142\\
0.253	1.142\\
0.263	1.144\\
0.273	1.138\\
0.283	1.119\\
0.293	1.099\\
0.303	1.097\\
0.313	1.114\\
0.323	1.137\\
0.333	1.135\\
0.343	1.103\\
0.353	1.076\\
0.364	1.060\\
0.374	1.036\\
0.384	1.015\\
0.394	1.012\\
0.404	1.020\\
0.414	1.023\\
0.424	1.022\\
0.434	1.013\\
0.444	0.993\\
0.454	0.974\\
0.464	0.959\\
0.474	0.944\\
0.484	0.946\\
0.494	0.953\\
0.505	0.951\\
0.515	0.945\\
0.525	0.937\\
0.535	0.935\\
0.545	0.945\\
0.555	0.951\\
0.565	0.945\\
0.575	0.927\\
0.585	0.907\\
0.595	0.894\\
0.605	0.884\\
0.615	0.882\\
0.625	0.896\\
0.635	0.901\\
0.646	0.883\\
0.656	0.872\\
0.666	0.873\\
0.676	0.872\\
0.686	0.864\\
0.696	0.846\\
0.706	0.836\\
0.716	0.841\\
0.726	0.847\\
0.736	0.845\\
0.746	0.842\\
0.756	0.838\\
0.766	0.826\\
0.776	0.819\\
0.787	0.816\\
0.797	0.804\\
0.807	0.789\\
0.817	0.783\\
0.827	0.787\\
0.837	0.793\\
0.847	0.789\\
0.857	0.787\\
0.867	0.798\\
0.877	0.798\\
0.887	0.781\\
0.897	0.770\\
0.907	0.765\\
0.917	0.752\\
0.928	0.742\\
0.938	0.736\\
0.948	0.724\\
0.958	0.714\\
0.968	0.704\\
0.978	0.669\\
0.988	0.570\\
0.998	0.396\\
};
\end{axis}
\end{tikzpicture}%
\end{minipage}
\caption{Plots of the kernel density estimators $f_{D,h}$ for Gauss map  in Example \ref{gauss_map} with different bandwidths and its true density with different sample sizes. The sample size of each panel, from up to bottom, is $10^3$, $10^4$ and $10^5$, respectively. In each panel, the dashed black curve represents the true density of Gauss map, the dotted blue curve is the estimated density of Gauss map with the bandwidth selected by the baseline method while the solid red curve stands for the estimated density of Gauss map with the bandwidth selected by the double kernel method. All curves are plotted with $100$ equispaced points in the interval $(0,1)$.}\label{gauss_map_figure}  
\end{figure}  

From Tables \ref{logistic_map_table} and \ref{gauss_map_table}, and Figs.\,\ref{logistic_map_figure} and \ref{gauss_map_figure}, we see that the true density functions of Logistic map and Gauss map can be approximated well with enough observations and the double kernel method works slightly better than the other three methods for the two dynamical systems. In fact, according to our experimental experience, we find that the bandwidth selector of the kernel density estimator for a dynamical system is usually ad-hoc. That is, for existing bandwidth selectors, there seems no a universal optimal one that can be applicable to all dynamical systems and outperforms the others. Therefore, further exploration and insights on the bandwidth selection problem in the dynamical system context certainly deserve future study. On the other hand, we also notice that due to the presence of dependence among observations generated by dynamical systems, the sample size usually needs to be large enough to approximate the density function well. This can be also seen from the plotted density functions in Figs.\,\ref{logistic_map_figure} and \ref{gauss_map_figure} with varying sample sizes. 

Aside from the above observations, not surprisingly, from Figs.\,\ref{logistic_map_figure} and \ref{gauss_map_figure}, we also observe the \textit{boundary effect} \citep{gasser1985kernels} from the kernel density estimators for dynamical systems, which seems to be even more significant than the i.i.d case. From a practical implementation view, some special studies are arguably called for addressing this problem.

\section{Proofs of Section \ref{sec::consistency_convergence}}\label{proofs}

\begin{proof}[of Theorem \ref{ApproximationError}]
\textit{(i)}
Since the space of continuous and compactly supported functions  $C_c(\mathbb{R}^d)$  is dense in $L_1(\mathbb{R}^d)$, we can find $\bar{f} \in C_c(\mathbb{R}^d)$ such that 
\begin{align*}
\|f-\bar{f}\|_1 \leq \varepsilon/3, \,\,\,	\forall \varepsilon >0.
\end{align*}
Therefore, for any $\varepsilon>0$, we have
\begin{align}\label{I_1}
\begin{split}
\|f_{P,h} - f\|_1
& = \int_{\mathbb{R}^d} |f * K_h - f| \, \mathrm{d}x
\\
& \leq \int_{\mathbb{R}^d} |f * K_h - \bar{f} * K_h| \, \mathrm{d}x
       + \int_{\mathbb{R}^d} |\bar{f} * K_h - \bar{f}| \, \mathrm{d}x
       + \int_{\mathbb{R}^d} |f-\bar{f}| \, \mathrm{d}x
\\
& \leq \frac{2 \varepsilon}{3} + \int_{\mathbb{R}^d} |\bar{f} * K_h - \bar{f}| \, \mathrm{d}x,
\end{split}
\end{align}
where $K_h$ is defined in \eqref{K_h} and
the last inequality follows from the fact that
\begin{align*}
\|f * K_h - \bar{f} * K_h\|_1\leq \|f-\bar{f}\|_1\leq \varepsilon/3.
\end{align*}
The above inequality is due to 
Young's inequality (8.7) in \cite{Folland99}. 
Moreover, there exist a  constant $M>0$ such that $\mathrm{supp}(\bar{f}) \subset B_M$ and a constant $r > 0$ such that
\begin{align*}
\int_{H_r} K(\|x\|) \, \mathrm{d}x
\leq \frac{\varepsilon}{9 \|\bar{f}\|_1}.
\end{align*}

Now we define $L : \mathbb{R}^d \to [0, \infty)$ by
\begin{align*}
L(x) := \eins_{[-r,r]}(\|x\|) K(\|x\|)
\end{align*}
and $L_h : \mathbb{R}^d \to [0, \infty)$ by
\begin{align*}
L_h(x) := h^{-d} L (x/h).
\end{align*}
Then we have
\begin{align*} 
\begin{split}
\int_{\mathbb{R}^d} |\bar{f} * K_h - \bar{f}| \, \mathrm{d}x
& \leq \int_{\mathbb{R}^d} |\bar{f} * K_h - \bar{f} * L_h| \, \mathrm{d}x
       + \int_{\mathbb{R}^d} | \bar{f} * L_h - \bar{f}| \, \mathrm{d}x
\\
& \leq \|\bar{f}\|_1 \|K_h - L_h\|_1
       + \int_{\mathbb{R}^d} \biggl| \bar{f} * L_h - \bar{f} \int_{\mathbb{R}^d} L_h \, \mathrm{d}x \biggr| \, \mathrm{d}x
\\
& \phantom{=}        
       + \int_{\mathbb{R}^d} \biggl| \bar{f} * \int_{\mathbb{R}^d} (L_h - K_h) \, \mathrm{d}x \biggr| \, \mathrm{d}x
\\
& \leq 2 \|\bar{f}\|_1 \|K_h - L_h\|_1
       + \int_{\mathbb{R}^d} \biggl| \bar{f} * L_h	 - \bar{f} \int_{\mathbb{R}^d} L_h \, \mathrm{d}x \biggr| \, \mathrm{d}x.
\end{split}
\end{align*}
Moreover, we have
\begin{align*} 
\begin{split}
\|K_h - L_h\|_1
& = \int_{\mathbb{R}^d} \frac{1}{h^d} \biggl| \eins_{[-r,r]} \bigg( \frac{\|x\|}{h} \bigg) K \bigg( \frac{\|x\|}{h} \bigg)
    - K \bigg( \frac{\|x\|}{h} \bigg) \biggr| \, \mathrm{d}x
\\
& = \int_{\mathbb{R}^d} \bigl| \eins_{[-r,r]}(\|x\|) K(\|x\|) - K(\|x\|) \bigr| \, \mathrm{d}x
\\
& = \int_{H_r} K(\|x\|) \, \mathrm{d}x
\leq \frac{\varepsilon}{9 \|\bar{f}\|_1}.
\end{split}
\end{align*}
Finally, for $h \leq 1$, we have
\begin{align*}
\int_{\mathbb{R}^d} \biggl| \bar{f} * L_h - \bar{f} \int_{\mathbb{R}^d} L_h \, \mathrm{d}x \biggr| \, \mathrm{d}x
& = \int_{\mathbb{R}^d} \biggl| \int_{\mathbb{R}^d}
    \bigl( \bar{f}(x-x') - \bar{f}(x) \bigr) L_h(x') \, \mathrm{d}x' \biggr| \, \mathrm{d}x
\\
& \leq \int_{B_{r+M}} \int_{\mathbb{R}^d} |\bar{f}(x-x') - \bar{f}(x)| L_h(x') \, \mathrm{d}x' \, \mathrm{d}x.
\end{align*}
Since $\bar{f}$ is uniformly continuous, there exists a constant $h_\varepsilon > 0$
such that for all $h \leq h_\varepsilon$ and $\|x'\| \leq r h$, we have
\begin{align*}
|\bar{f}(x-x') - \bar{f}(x)| \leq \varepsilon' := \frac{\varepsilon}{9 (r+M)^d \lambda^d(B_1)}.
\end{align*}
Consequently we obtain
\begin{align*}
\int_{\mathbb{R}^d} |\bar{f}(x-x') - \bar{f}(x)| L_h(x') \, \mathrm{d}x'
 \leq \varepsilon' \int_{B_{r h}} L_h(x') \, \mathrm{d}x'
 \leq \varepsilon' \int_{\mathbb{R}^d} K_h \, \mathrm{d}x
 = \varepsilon'.
\end{align*}
Therefore, we obtain
\begin{align}\label{I_IV}
\int_{\mathbb{R}^d} \biggl| \bar{f} * L_h - \bar{f} \int_{\mathbb{R}^d} L_h \, \mathrm{d}x \biggr| \, \mathrm{d}x 
\leq \int_{B_{r+M}} \varepsilon' \, \mathrm{d}x = \frac{\varepsilon}{9}
\end{align}
and consequently the assertion can be proved by combining estimates in \eqref{I_1} and \eqref{I_IV}.

\textit{(ii)}
The $\alpha$-H\"{o}lder continuity of $f$ tells us that for any $x\in\mathbb{R}^d$, there holds
\begin{align*}
|f_{P,h}(x) - f(x)|
& = \biggl| \frac{1}{h^d} \int_{\mathbb{R}^d} K \biggl( \frac{\|x-x'\|}{h} \biggr) f(x') \, \mathrm{d}x' - f(x) \biggr|
\\
& = \biggl| \int_{\mathbb{R}^d} K(\|x'\|) f(x + h x') \, \mathrm{d}x' - f(x) \biggr|
\\
& = \biggl| \int_{\mathbb{R}^d} K(\|x'\|) \bigl( f(x + h x') - f(x) \bigr) \, \mathrm{d}x' \biggr|
\\
&  \lesssim  	\int_{\mathbb{R}^d} K(\|x'\|)   \bigl( h \|x'\| \bigr)^{\alpha} \, \mathrm{d}x'
\\
& \lesssim \int_{\mathbb{R}^d} K \bigl( \|x'\|_{\ell_2^d} \bigr) h^{\alpha} \|x'\|_{\ell_2^d}^{\alpha} \, \mathrm{d}x'
\\
& \lesssim h^{\alpha} \int_0^\infty K(r) r^{\alpha+d-1} \, \mathrm{d}r
\lesssim h^{\alpha}.
\end{align*}
We thus have completed the proof of Theorem \ref{ApproximationError}.
\end{proof}

The following lemma, which will be used several times in the sequel,
supplies the key to the proof of Theorem \ref{ComparisionNorms}.

\begin{lemma}\label{GeneralRandEstimate}
Let the assumptions of Theorem \ref{ComparisionNorms} hold and $k_{x,h}$ be defined in \eqref{k_x_h}.
Then, for an arbitrary probability measure $Q$ on $\mathbb{R}^d$, we have
 \begin{align*}
  \int_{H_r} \mathbb{E}_Q k_{x,h} \, \mathrm{d}x
  \lesssim   Q (H_{r/2})
       +   \left( h/r\right)^{\beta}.
 \end{align*}
\end{lemma}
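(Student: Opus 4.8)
The plan is to use Fubini's theorem to turn $\int_{H_r}\mathbb{E}_Q k_{x,h}\,\mathrm{d}x$ into a double integral and then split the integration over $x'$ (the dummy variable of $Q$) according to whether $x'$ lies in $B_{r/2}$ or in $H_{r/2}$. Concretely, since $k_{x,h}(x')=h^{-d}K(\|x-x'\|/h)\ge 0$, Tonelli's theorem gives
\[
\int_{H_r}\mathbb{E}_Q k_{x,h}\,\mathrm{d}x
=\int_{\mathbb{R}^d}\Bigl(\int_{H_r}h^{-d}K\bigl(\|x-x'\|/h\bigr)\,\mathrm{d}x\Bigr)\,\mathrm{d}Q(x')
=\int_{B_{r/2}}(\cdots)\,\mathrm{d}Q(x')+\int_{H_{r/2}}(\cdots)\,\mathrm{d}Q(x').
\]
For the second piece I would simply bound the inner integral over $H_r$ by the integral over all of $\mathbb{R}^d$, which equals $\int_{\mathbb{R}^d}K(\|x\|)\,\mathrm{d}x=\kappa=1$ after the substitution $x\mapsto x'+hx$; hence that piece is at most $Q(H_{r/2})$.

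For the first piece the key geometric observation is that if $\|x\|>r$ and $\|x'\|\le r/2$ then $\|x-x'\|\ge\|x\|-\|x'\|>r/2$. Therefore, for $x'\in B_{r/2}$,
\[
\int_{H_r}h^{-d}K\bigl(\|x-x'\|/h\bigr)\,\mathrm{d}x
\le\int_{\{\|z\|>r/2\}}h^{-d}K\bigl(\|z\|/h\bigr)\,\mathrm{d}z
=\int_{\{\|w\|>r/(2h)\}}K(\|w\|)\,\mathrm{d}w,
\]
after the substitution $z=hw$, a bound that is uniform in $x'$. It then remains to show $\int_{\{\|w\|>R\}}K(\|w\|)\,\mathrm{d}w\lesssim R^{-\beta}$ for $R=r/(2h)$, which combined with $h\le 1\le r$ yields the claimed $\lesssim (h/r)^\beta$.

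This last kernel-tail estimate is the only point that requires any work, and it is where Condition $(ii)$ of Assumption \ref{assump_kernel} enters. I would pass to the Euclidean norm exactly as in the proof of Lemma \ref{lemma_kernel} (using equivalence of norms and monotonicity of $K$) to reduce to bounding a constant times $\int_{c R}^{\infty}K(s)s^{d-1}\,\mathrm{d}s$ for some $c>0$; then for $s\ge cR$ one writes $s^{d-1}=s^{\beta+d-1}s^{-\beta}\le (cR)^{-\beta}s^{\beta+d-1}$, so that
\[
\int_{cR}^{\infty}K(s)s^{d-1}\,\mathrm{d}s\le (cR)^{-\beta}\int_0^{\infty}K(s)s^{\beta+d-1}\,\mathrm{d}s\lesssim R^{-\beta}.
\]
Adding the contributions from $B_{r/2}$ and $H_{r/2}$ gives $\int_{H_r}\mathbb{E}_Q k_{x,h}\,\mathrm{d}x\lesssim Q(H_{r/2})+(h/r)^\beta$, as desired. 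I do not anticipate a genuine obstacle here; the proof is a clean Fubini-plus-triangle-inequality argument, and the only slightly technical ingredient is the polynomial tail decay of $K$, which is immediate from Condition $(ii)$.
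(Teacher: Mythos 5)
Your proposal is correct and follows essentially the same route as the paper's proof: Fubini/Tonelli, a split driven by the triangle inequality with threshold $r/2$, and the kernel tail bound $\int_{R}^{\infty}K(s)s^{d-1}\,\mathrm{d}s\le R^{-\beta}\int_0^\infty K(s)s^{\beta+d-1}\,\mathrm{d}s$ from Condition $(ii)$. The only cosmetic difference is that you split the outer $Q$-integral over $x'$ into $B_{r/2}$ and $H_{r/2}$, whereas the paper first substitutes $x\mapsto hx+x'$ and splits the resulting kernel-variable integral into $B_{t_0}$ and $H_{t_0}$ with $t_0=r/(2h)$; both yield the identical two terms.
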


\begin{proof}[of Lemma \ref{GeneralRandEstimate}]
For a positive constant $t_0$, we have
\begin{align*}
\int_{H_r} \mathbb{E}_P k_{x,h} \, \mathrm{d}x
&  = \int_{H_r} \int_{\mathbb{R}^d}
    h^{-d} K \left(  \|x - x'\|/h \right) \, \mathrm{d}P(x') \, \mathrm{d}x
\\
&  = \int_{\mathbb{R}^d} \int_{\mathbb{R}^d}   K(\|x\|)
    \eins_{H_r}(h x + x')
    \, \mathrm{d}x \, \mathrm{d}P(x')
\\
&  = \int_{\mathbb{R}^d} K(\|x\|) \int_{\mathbb{R}^d}
    \eins_{H_r}(h x + x')
    \,  \mathrm{d}P(x') \, \mathrm{d}x
\\
&  \leq \int_{B_{t_0}}  K(\|x\|)	 \int_{\mathbb{R}^d}
    \eins_{H_r}(h x + x') \, \mathrm{d}P(x')\mathrm{d}x
    + \int_{H_{t_0}} K(\|x\|) \,  \mathrm{d}x.
\end{align*}
On the other hand, it is easy to see that $\eins_{H_r}(h x + x') = 1$ if and only if $\|h x + x'\| \geq r$. 
Now we set $t_0 := \frac{r}{2h}$. In this case, if we additionally have $x \in B_{t_0}$, 
then $\|x'\| \geq r - h \|x\| \geq r - h t_0 = r/2$. Therefore, we come to the following estimate 
\begin{align}\label{RandEstimate}
\begin{split}
\int_{H_r} \mathbb{E}_P k_{x,h} \, \mathrm{d}x
&  \leq \int_{B_{t_0}} K(\|x\|) P (H_{r/2}) \, \mathrm{d}x
+ \int_{H_{t_0}} K(\|x\|) \, \mathrm{d}x
\\
&  \lesssim   P (H_{r/2})
       +  \int_{t_0}^{\infty} K(t) t^{d-1} \, \mathrm{d}t
\\
& \lesssim   P (H_{r/2})
       +   \int_{t_0}^{\infty} K(t) t_0^{-\beta} t^{d+\beta-1} \, \mathrm{d}t
\\
&  \lesssim   P (H_{r/2})
       +   \, t_0^{-\beta}
\\
&  \lesssim   P (H_{r/2})
       +   ( h/r )^{\beta}.
 \end{split}
\end{align}
We thus have shown the assertion of Lemma \ref{GeneralRandEstimate}.
\end{proof}

\begin{proof}[of Theorem \ref{ComparisionNorms}]
We decompose $\|f_{P,h} - f\|_1$ as follows
\begin{align}\label{temp_I}
\begin{split}
\|f_{P,h} - f\|_1
& = \int_{B_r} |f_{P,h} - f| \, \mathrm{d}x
  + \int_{H_r} |f_{P,h} - f| \, \mathrm{d}x
\\
& \leq \lambda^d (B_r) \|f_{P,h} - f\|_{\infty}
       + \int_{H_r} \mathbb{E}_P k_{x,h} \, \mathrm{d}x
       + \int_{H_r} f \, \mathrm{d}x
\\
& \leq r^d \|f_{P,h} - f\|_{\infty}
       + \int_{H_r} \mathbb{E}_P k_{x,h} \, \mathrm{d}x
       + P \left( H_r \right).
\end{split}
\end{align}
Combining the two estimates in \eqref{temp_I} and \eqref{RandEstimate}, we obtain the desired conclusion.
\end{proof}

To prove Proposition \ref{kernelSpaceCN}, we need the following lemmas.

\begin{lemma}  \label{transNumber}
Let $(X, d)$ and $(Y, e)$ be metric spaces and $T : X \to Y$ be an $\alpha$-H\"{o}lder
continuous function with constant $c$. Then, for $A \subset X$
and all $\varepsilon>0$ we have
\begin{align*}
\mathcal{N}(T(A),e,c\varepsilon^\alpha)
\leq \mathcal{N}(A,d,\varepsilon).
\end{align*}
\end{lemma}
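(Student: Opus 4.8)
The plan is to push forward a minimal $\varepsilon$-net of $A$ through $T$ and to check that it becomes a $c\varepsilon^\alpha$-net of $T(A)$. First I would set $n := \mathcal{N}(A,d,\varepsilon)$; if $n = \infty$ the claimed inequality is trivial, so assume $n$ is finite. By the definition of the covering number there exist points $x_1,\dots,x_n \in X$ with $A \subset \bigcup_{i=1}^n B_d(x_i,\varepsilon)$. Note that the centres are only required to lie in the ambient space $X$, not in $A$ itself, which is exactly what makes the transport step painless.

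Next I would claim that the $n$ points $T(x_1),\dots,T(x_n) \in Y$ form a $c\varepsilon^\alpha$-cover of $T(A)$. Indeed, let $y \in T(A)$ and pick $x \in A$ with $T(x) = y$. Choose an index $i$ with $x \in B_d(x_i,\varepsilon)$, i.e.\ $d(x,x_i) \leq \varepsilon$. Since $T$ is $\alpha$-H\"{o}lder continuous with constant $c$ and $s \mapsto s^\alpha$ is nondecreasing on $[0,\infty)$, we get $e(T(x),T(x_i)) \leq c\, d(x,x_i)^\alpha \leq c\, \varepsilon^\alpha$, so that $y = T(x) \in B_e(T(x_i), c\varepsilon^\alpha)$. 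As $y \in T(A)$ was arbitrary, $T(A) \subset \bigcup_{i=1}^n B_e(T(x_i), c\varepsilon^\alpha)$, and hence $\mathcal{N}(T(A),e,c\varepsilon^\alpha) \leq n = \mathcal{N}(A,d,\varepsilon)$, which is the assertion.

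There is essentially no real obstacle here: the only points worth a moment's care are the trivial reduction to a finite net, the fact that the covering centres are permitted to be arbitrary points of $X$ (so that $T(x_i)$ is automatically an admissible centre in $Y$ even if $x_i \notin A$), and the monotonicity of $s \mapsto s^\alpha$ on $[0,\infty)$, which is what converts the $d$-radius $\varepsilon$ into the $e$-radius $c\varepsilon^\alpha$.
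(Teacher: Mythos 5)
Your proposal is correct and follows essentially the same route as the paper: push a minimal $\varepsilon$-net of $A$ forward through $T$ and use the H\"{o}lder condition to verify that the images form a $c\varepsilon^\alpha$-net of $T(A)$. The extra remarks on the infinite case and on centres lying in the ambient space are harmless refinements of the same argument.
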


\begin{proof}[of Lemma \ref{transNumber}]
Let $x_1, \ldots, x_n$ be an $\varepsilon$-net of $A$, that is,
$A \subset \bigcup_{i=1}^n B_d(x_i,\varepsilon)$. For $i=1,\cdots,n$, we set $y_i := T(x_i)$. Now, it only suffices to show that this gives a $c\varepsilon^\alpha$-net of $T(A)$.

In fact, supposing that $y \in T(B_d(x_i,\varepsilon))$, then there exists  
$x \in B_d(x_i,\varepsilon)$ such that $T(x) = y$. This implies
\begin{align*}
	e(T(x), T(x_i)) \leq c d^\alpha(x, x_i) \leq c \varepsilon^\alpha.
\end{align*}
Therefore, we have $T(B_d(x_i,\varepsilon)) \subset B_e(y_i,c\varepsilon^\alpha)$. That is, $y_1,\ldots,y_n$ is a $c\varepsilon^\alpha$-net of $T(A)$. This completes the proof of Lemma \ref{transNumber}.
\end{proof}

\begin{remark}\label{remark_covering number}
We remark that when $X$ is a Banach space with the norm $\|\cdot\|$, then for any $c > 0$ there holds
\begin{align*}
\mathcal{N}(c A, \|\cdot\|, \varepsilon)
= \mathcal{N} \left( A, \|\cdot\|,  \varepsilon/c \right).
\end{align*}
\end{remark}

\begin{lemma}\label{DifferentNormCN}
Let $\|\cdot\|'$ be another norm on $\mathbb{R}^d$.
Then  for all
$\varepsilon \in (0,1]$ we have
\begin{align*}
\mathcal{N}(B_1, \|\cdot\|', \varepsilon)
\lesssim   \varepsilon^{-d}.
\end{align*}
\end{lemma}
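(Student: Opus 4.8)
The plan is to exploit the equivalence of norms on $\mathbb{R}^d$ together with a standard volume-comparison argument for covering numbers of Euclidean balls. First I would fix the Euclidean norm $\|\cdot\|_{\ell_2^d}$ as a reference. Since all norms on $\mathbb{R}^d$ are equivalent, there are constants $0 < c_1 \leq c_2 < \infty$ with $c_1 \|x\|_{\ell_2^d} \leq \|x\|' \leq c_2 \|x\|_{\ell_2^d}$ for all $x \in \mathbb{R}^d$. Consequently, the $\|\cdot\|$-unit ball $B_1$ is contained in the Euclidean ball of radius $r_0 := 1/c_1'$, where $c_1'$ is the corresponding constant relating $\|\cdot\|$ to $\|\cdot\|_{\ell_2^d}$ from below, and an $\varepsilon$-cover of $B_1$ in the $\|\cdot\|'$-metric follows from a $(\varepsilon/c_2)$-cover in the Euclidean metric. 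So it suffices to bound $\mathcal{N}(r_0 B_{\ell_2^d}, \|\cdot\|_{\ell_2^d}, \delta)$ for $\delta \in (0, r_0]$, and then substitute $\delta \asymp \varepsilon$.

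Next I would carry out the classical volumetric bound. If $y_1, \dots, y_N$ is a maximal $\delta$-separated subset of $r_0 B_{\ell_2^d}$ (i.e., $\|y_i - y_j\|_{\ell_2^d} > \delta$ for $i \neq j$), then by maximality the balls $B_{\ell_2^d}(y_i, \delta)$ cover $r_0 B_{\ell_2^d}$, so $N$ is an upper bound for the covering number, while the balls $B_{\ell_2^d}(y_i, \delta/2)$ are pairwise disjoint and all contained in $(r_0 + \delta/2) B_{\ell_2^d} \subset (3r_0/2) B_{\ell_2^d}$. Comparing Lebesgue volumes gives
\begin{align*}
N \cdot (\delta/2)^d \lambda^d(B_{\ell_2^d})
\leq (3r_0/2)^d \lambda^d(B_{\ell_2^d}),
\end{align*}
hence $N \leq (3 r_0 / \delta)^d$. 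Since $r_0$ depends only on $d$ and the fixed norms $\|\cdot\|, \|\cdot\|'$, and $\delta$ is a fixed positive multiple of $\varepsilon$, this yields $\mathcal{N}(B_1, \|\cdot\|', \varepsilon) \lesssim \varepsilon^{-d}$ for all $\varepsilon \in (0,1]$, with the implicit constant absorbing the norm-equivalence factors. (The restriction $\varepsilon \leq 1$ guarantees $\delta \leq r_0$ so that the packing argument is non-degenerate; for $\varepsilon$ close to $1$ the covering number is a bounded constant anyway.)

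The argument is entirely routine; there is no real obstacle, only bookkeeping of the norm-equivalence constants. The one point requiring a little care is making sure the direction of each inequality between the three norms $\|\cdot\|$, $\|\cdot\|'$, and $\|\cdot\|_{\ell_2^d}$ is used correctly — a lower bound on $\|\cdot\|$ versus $\|\cdot\|_{\ell_2^d}$ to control the \emph{size} of $B_1$, and an upper bound on $\|\cdot\|'$ versus $\|\cdot\|_{\ell_2^d}$ to translate a Euclidean cover into a $\|\cdot\|'$-cover — but this is a standard manipulation. Alternatively, one could invoke the textbook fact that the $\varepsilon$-covering number of the unit ball of any $d$-dimensional normed space is at most $(3/\varepsilon)^d$ and then only absorb the single equivalence constant between $\|\cdot\|$ and $\|\cdot\|'$; I would likely present it this compact way.
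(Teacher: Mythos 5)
Your argument is correct. The paper does not write out a proof at all: it simply cites Proposition 1.3.1 of Carl and Stephani and Lemma 6.21 of Steinwart and Christmann, whose content is precisely the standard bound $\mathcal{N}(B_E,\|\cdot\|_E,\varepsilon)\leq(3/\varepsilon)^d$ for the unit ball of a $d$-dimensional normed space. Your volumetric packing argument, combined with the norm-equivalence bookkeeping to pass between $\|\cdot\|$, $\|\cdot\|'$ and the Euclidean norm, is exactly the self-contained version of that cited fact, and the directions of the equivalence inequalities are used correctly (lower bound on $\|\cdot\|$ to contain $B_1$ in a Euclidean ball, upper bound on $\|\cdot\|'$ to convert a Euclidean cover into a $\|\cdot\|'$-cover). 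No gap.
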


\begin{proof}[of Lemma \ref{DifferentNormCN}]
It is a straightforward conclusion of Proposition 1.3.1 in \cite{carl1990entropy}
and Lemma 6.21 in \cite{steinwart2008support}.
\end{proof}

\begin{lemma}\label{HolderConstant}
Let $K$ be a $d$-dimensional smoothing kernel that satisfies Conditions $(i)$ in Assumption \ref{assump_kernel}. Let $h > 0$ be the bandwidth parameter, and $k_{x,h}$ be defined in \eqref{k_x_h} for any $x\in\mathbb{R}^d$. Then we have
\begin{align*}
\sup_{y \in \mathbb{R}^d} |k_{x,h}(y) - k_{x',h}(y)|
\leq \frac{c}{h^{\beta+d}} \|x - x'\|^{\beta}, \,\,\,x, x^\prime \in\mathbb{R}^d,
\end{align*}
where $c$ is a positive constant.
\end{lemma}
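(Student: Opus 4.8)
The plan is to unwind the definition of $k_{x,h}$ and reduce everything to the H\"older continuity of the one-dimensional profile $K$ together with the reverse triangle inequality on $\mathbb{R}^d$. Recall that by \eqref{k_x_h} we have $k_{x,h}(y) = h^{-d} K(\|x-y\|/h)$, so for any $y \in \mathbb{R}^d$,
\begin{align*}
|k_{x,h}(y) - k_{x',h}(y)|
= h^{-d}\,\Bigl| K\bigl(\|x-y\|/h\bigr) - K\bigl(\|x'-y\|/h\bigr) \Bigr|.
\end{align*}
The first step is therefore to invoke Condition $(i)$ in Assumption \ref{assump_kernel}: since $K : [0,\infty) \to [0,\infty)$ is $\beta$-H\"older continuous, there is a constant $c_K > 0$ with $|K(s) - K(t)| \le c_K |s-t|^{\beta}$ for all $s, t \ge 0$. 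Applying this with $s = \|x-y\|/h$ and $t = \|x'-y\|/h$ pulls out a factor $h^{-\beta}$ and leaves $\bigl| \|x-y\| - \|x'-y\| \bigr|^{\beta}$.

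The second step is to bound $\bigl| \|x-y\| - \|x'-y\| \bigr| \le \|x-x'\|$ by the reverse triangle inequality, which holds for the arbitrary fixed norm $\|\cdot\|$ on $\mathbb{R}^d$. Combining the two steps gives, for every $y \in \mathbb{R}^d$,
\begin{align*}
|k_{x,h}(y) - k_{x',h}(y)|
\le c_K\, h^{-d-\beta}\, \bigl| \|x-y\| - \|x'-y\| \bigr|^{\beta}
\le c_K\, h^{-d-\beta}\, \|x-x'\|^{\beta},
\end{align*}
and since the right-hand side does not depend on $y$, taking the supremum over $y$ yields the claim with $c = c_K$.

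I do not anticipate a genuine obstacle here; the statement is essentially a one-line consequence of the H\"older property of $K$ and the triangle inequality, and the only point requiring a sentence of care is that "H\"older continuous with exponent $\beta$" for the profile $K$ on $[0,\infty)$ is exactly what licenses the inequality $|K(s)-K(t)| \le c_K|s-t|^\beta$ used in the first step (and that for $\beta = 0$ this degenerates to a uniform-boundedness bound, which is still consistent with the stated estimate). The resulting constant $c$ depends only on the H\"older constant of $K$ and not on $h$, $x$, $x'$ or $y$, which is what is needed for the subsequent covering-number estimate in Proposition \ref{kernelSpaceCN}.
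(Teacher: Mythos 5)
Your proof is correct and follows exactly the same route as the paper's: unwind the definition of $k_{x,h}$, apply the $\beta$-H\"older continuity of the profile $K$ to pull out the factor $h^{-\beta}$, and finish with the reverse triangle inequality $\bigl|\|x-y\|-\|x'-y\|\bigr|\le\|x-x'\|$. Your added remarks on the $y$-independence of the bound and the degenerate case $\beta=0$ are fine but not needed.
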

\begin{proof}[of Lemma \ref{HolderConstant}]
From the definition of $k_{x,h}$ and the fact that $K$ is a $d$-dimensional $\beta$-H\"{o}lder continuous kernel, we have
\begin{align*}
|k_{x,h}(y) - k_{x',h}(y)|
& = \frac{1}{h^d} \biggl| K \bigg( \frac{\|x - y\|}{h} \bigg) - K \bigg( \frac{\|x' - y\|}{h} \bigg) \biggr|
\\
& \leq \frac{c}{h^d} \biggl| \frac{\|x - y\|}{h} - \frac{\|x' - y\|}{h} \biggr|^{\beta}
\\
& \leq c h^{-(\beta+d)} \|x - x'\|^{\beta}, 
\end{align*}
where $c$ is a positive constant. The desired conclusion is thus obtained.
\end{proof}

\begin{proof}[of Proposition \ref{kernelSpaceCN}]
Lemma \ref{HolderConstant} reveals that $\mathcal{K}_{h,r}$ is the image of H\"{o}lder continuous map $B_r \to L_\infty(\mathbb{R}^d)$ with the constant $c h^{-(\beta+d)}$. By Lemmas \ref{transNumber} and \ref{DifferentNormCN} we obtain
\begin{align*}
\mathcal{N} (\mathcal{K}_{h,r}, \|\cdot\|_\infty, \varepsilon)
& \leq \mathcal{N} \left( B_r, \|\cdot\|, \biggl( \frac{\varepsilon h^{\beta+d}}{c}  \biggr)^{1/\beta} \right)
\\
& = \mathcal{N} \biggl( B_1, \|\cdot\|, \biggl( \frac{\varepsilon h^{\beta+d}}{c r^{\beta}}  \biggr)^{1/\beta} \biggr)
\\
& \leq c' \left( \frac{\varepsilon h^{\beta+d}}{r^{\beta}}  \right)^{-d/\beta }, 
\end{align*}
where $c^\prime$ is a   constant independent of $\varepsilon$. This completes the proof of Proposition \ref{kernelSpaceCN}.
\end{proof}

The following Bernstein-type exponential inequality, which was developed recently in \cite{hang2015bernstein}, will serve as one of the main ingredients in the consistency and convergence analysis of the kernel density estimator \eqref{KDE_formal}. It can be stated in the following general form:

\begin{theorem}[\bf Bernstein  Inequality \citep{hang2015bernstein}]\label{bernsteininequality}
Assume that $\mathcal{X}$  $:= (X_n)_{n \geq 1}$ is an $X$-valued stationary geometrically 
(time-reversed) $\mathcal{C}$-mixing process on $(\Omega, \mathcal A, \mu)$ 
with $\|\cdot\|_{\mathcal{C}}$ be defined by \eqref{lambdanorm} for some semi-norm $\vertiii{\cdot}$
satisfying Condition $(ii)$ in Assumption \ref{seminorm_assump}, and $P := \mu_{X_1}$. 
Moreover, let $g:X\to\mathbb{R}$  be a function such that $g \in \mathcal{C}(X)$ with $\mathbb E_P g = 0$ 
and assume that there exist some $A > 0$, $B > 0$, and $\sigma \geq 0$ such that
$\vertiii{g} \leq A$, $\|g\|_{\infty} \leq B$, and $\mathbb E_P g^2 \leq \sigma^2$. Then, for all $\tau > 0$,
$k \in \mathbb{N}$, and  
\begin{align*}
n \geq n_0 := \max \left\{ \min \biggl\{ m \geq 3 : m \geq \biggl( \frac{808 c_0 (3A \!+ \! B)}{B} \biggr)^{\frac{1}{k}} 
\text{ and } \, \frac{m}{(\log m)^{\frac{2}{\gamma}}} \geq 4 \biggr\}, e^{\frac{k+1}{b}} \right\},
\end{align*}
with probability $\mu$ at least $1-4 e^{- \tau}$, there holds
\begin{align*}
\left|\frac{1}{n} \sum_{i=1}^n g(X_i)\right| 
\leq \sqrt{\frac{8 (\log n)^{\frac{2}{\gamma}} \sigma^2 \tau}{n}} 
     + \frac{8 (\log n)^{\frac{2}{\gamma}} B \tau}{3 n}.  
\end{align*}
\end{theorem}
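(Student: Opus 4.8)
The plan is to control the moment generating function (MGF) of $S_n := \sum_{i=1}^n g(X_i)$ and then invoke a Chernoff bound, applied two-sidedly (to $g$ and $-g$) to reach $|S_n|$; the constant $4$ in $1-4e^{-\tau}$ collects these two-sided estimates together with the book-keeping of the decoupling errors. First I would record the \emph{single-coordinate} Bernstein-type estimate: since $\mathbb{E}_P g = 0$, $\|g\|_\infty \leq B$, and $\mathbb{E}_P g^2 \leq \sigma^2$, the moment bound $\mathbb{E}_P |g|^m \leq \sigma^2 B^{m-2}$ for $m \geq 2$ gives, via the power series expansion of $e^{tg}$, that for $0 < t < 3/B$,
$$
\mathbb{E}_P e^{tg} \leq \exp\!\left(\frac{t^2\sigma^2/2}{1 - tB/3}\right).
$$
This is the factor that each coordinate should contribute were the $X_i$ independent.

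The core of the argument is to show that, up to controllable errors, the MGF of the dependent sum factorizes into such single-coordinate contributions. Here I would peel off one factor at a time using the time-reversed $\mathcal{C}$-mixing correlation inequality: for $\psi \in \mathcal{C}(X)$ and $\varphi$ measurable with respect to the separated future $\mathcal{A}_{k+m}^\infty$, one has $|\mathrm{cor}(\psi(X_k),\varphi)| \leq \phi_{\mathcal{C},\mathrm{rev}}(\mathcal{X},m)\,\|\psi\|_{\mathcal{C}}\,\|\varphi\|_{L_1}$. Taking $\psi = e^{tg}$ and $\varphi$ proportional to $\exp(t\sum_{i\geq k+m} g(X_i))$ yields a recursion of the form $\mathbb{E}\exp(tS) \leq \mathbb{E}_P e^{tg}\cdot \mathbb{E}\exp(tS') + \phi_{\mathcal{C},\mathrm{rev}}(\mathcal{X},m)\,\|e^{tg}\|_{\mathcal{C}}\,\|\exp(tS')\|_{L_1}$, where $S'$ omits the first block of $m$ indices. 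The decisive point is that the $\mathcal{C}$-norm of the exponential factor stays small: Condition $(ii)$ of Assumption \ref{seminorm_assump} (the abstract chain rule) gives $\vertiii{e^{tg}} \leq \|e^{tg}\|_\infty \vertiii{tg} = t\,\|e^{tg}\|_\infty \vertiii{g} \leq tA\,e^{tB}$, whence $\|e^{tg}\|_{\mathcal{C}} \leq e^{tB}(1 + tA)$.

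To make the accumulated decoupling errors negligible I would insert gaps: separating the retained indices by $m \asymp (\log n)^{1/\gamma}$ forces $\phi_{\mathcal{C},\mathrm{rev}}(\mathcal{X},m) \leq c_0 e^{-bm^\gamma} \lesssim n^{-1}$, so each of the $\mathcal{O}(n/m)$ peeling steps contributes a negligible error, while the skipped indices inside each gap are absorbed by the crude bound $e^{tg(X_i)} \leq e^{tB}$. Iterating the recursion over the $\asymp n/m$ blocks produces $\log \mathbb{E} e^{tS_n} \lesssim (n/m)\,\log \mathbb{E}_P e^{tmg} + \text{(small)}$, and after rescaling this inflates the effective variance proxy by the factor $(\log n)^{2/\gamma}$: one power of $m$ from the block length and one from the within-block second-moment accumulation. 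Finally I would run $\mu(S_n \geq n\varepsilon) \leq e^{-tn\varepsilon}\,\mathbb{E} e^{tS_n}$ and optimize over $t \in (0, 3/B)$ exactly as in the i.i.d.\ Bernstein proof but with $\sigma^2$ replaced by this inflated proxy; equating the exponent with $-\tau$ inverts to the stated bound $\sqrt{8(\log n)^{2/\gamma}\sigma^2\tau/n} + 8(\log n)^{2/\gamma}B\tau/(3n)$.

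The hypotheses defining $n_0$ are precisely the constraints that make this scheme legal: the requirement involving $\bigl(\tfrac{808 c_0(3A+B)}{B}\bigr)^{1/k}$ and $m/(\log m)^{2/\gamma} \geq 4$ guarantees both that a gap length $m \asymp (\log n)^{1/\gamma}$ fits inside the sample and that the geometric factor $e^{-bm^\gamma}$ dominates the book-keeping constants, while $n \geq e^{(k+1)/b}$ calibrates the free integer $k$ to the mixing rate. The main obstacle is the peeling in the middle two steps: a naive one-index-at-a-time decoupling accumulates $n$ errors each of the non-decaying size $\phi_{\mathcal{C},\mathrm{rev}}(\mathcal{X},1)$, so the entire difficulty lies in choosing the gap length so that the geometric mixing decay beats the number of peeling steps while keeping the multiplicative inflation of the variance proxy down to the single factor $(\log n)^{2/\gamma}$. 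Tracking these competing scales through the chain-rule bound on $\|e^{tg}\|_{\mathcal{C}}$ is the delicate part, and it is also what makes the rate degenerate to the i.i.d.\ one as $\gamma \to \infty$.
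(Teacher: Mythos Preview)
The paper does not prove this theorem: it is quoted verbatim from \cite{hang2015bernstein} and used as a black box in the subsequent oracle-inequality proofs, so there is no ``paper's own proof'' to compare against. Your sketch therefore cannot be checked against anything in the present manuscript.

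That said, your outline is broadly the correct strategy and matches the approach of the cited source: an MGF/Chernoff argument, a blocking scheme with gaps of order $(\log n)^{1/\gamma}$ so that the geometric mixing coefficient $c_0 e^{-b m^\gamma}$ becomes polynomially small in $n$, and crucially the chain-rule hypothesis $\vertiii{e^{tg}} \le \|e^{tg}\|_\infty \vertiii{tg}$ to keep $\|e^{tg}\|_{\mathcal{C}}$ under control during the decoupling recursion. One point is imprecise: the factor $(\log n)^{2/\gamma}$ does not arise as ``one power from the block length and one from the within-block second moment.'' In the argument of \cite{hang2015bernstein} the index set is partitioned into $p$ interleaved subsets of size $\lfloor n/p \rfloor$ with successive indices $p$ apart; the decoupling lemma reduces each subset to an i.i.d.\ sum at the cost of an additive error of order $(n/p)\,\phi_{\mathcal{C},\mathrm{rev}}(p)\,\|e^{tg}\|_{\mathcal{C}}$, and after a union bound over the $p$ subsets the effective sample size becomes $n/p$ while the threshold $n_0$ must absorb a further factor of $p$, producing $p^2 \asymp (\log n)^{2/\gamma}$. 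Your ``peel off one coordinate and skip $m$'' picture would instead throw away a $1/m$ fraction of the data outright and does not by itself explain the square; the interleaving is what recovers the full sample while still securing the $p$-step separation needed for the mixing bound.
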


\begin{proof}[of Theorem \ref{OracleInequalityL1}]
Let the notations  $k_{x,h}$ and $\widetilde{k}_{x,h}$ be defined in \eqref{k_x_h}  and \eqref{k_x_h_mean}, respectively,
that is, $k_{x,h}:= h^{-d} K  (  \|x - \cdot\|/h )$, and $\widetilde{k}_{x,h}  := k_{x,h} - \mathbb{E}_P k_{x,h}$.
We first assume that $x \in \mathbb{R}^d$ is fixed and then estimate $\mathbb{E}_D f_{x,h}$ 
by using Bernstein's inequality in Theorem \ref{bernsteininequality}. 
For this purpose, we shall verify the following conditions:
Obviously, we have $\mathbb{E}_P \widetilde{k}_{x,h} = 0$. Moreover, simple estimates yield
\begin{align*}
 \|\widetilde{k}_{x,h}\|_{\infty} 
       \leq 2 \|k_{x,h}\|_{\infty} 
       \leq 2 h^{-d} \|K\|_{\infty} 
       \leq 2 h^{-d} K(0)
\end{align*}
and 
\begin{align*}
 \mathbb{E}_P \widetilde{k}_{x,h}^2
       \leq \mathbb{E}_P k_{x,h}^2
       = \int_{\mathbb{R}^d} k_{x,h}^2(x^\prime) \mathrm{d}P(x').
\end{align*}
Finally, the first condition in Assumption \ref{seminorm_assump} and Condition $(iii)$ in Assumption \ref{assump_kernel}
imply
\begin{align*}
 \vertiii{\widetilde{k}_{x,h}} 
       \leq \vertiii{k_{x,h}} 
       \leq h^{-d} \sup_{x \in \mathbb{R}^d} \vertiii{K \left(\|x - \cdot\|/h \right)}
       \leq h^{-d}\varphi(h).
\end{align*}
Now we can apply the Bernstein-type inequality in Theorem \ref{bernsteininequality} and obtain that
for $n \geq n_1$, for any fixed $x \in \mathbb{R}^d$, with probability $\mu$ at most $4 e^{-\tau}$, there holds
\begin{align} \label{FirstEstimate}
|\mathbb{E}_D \widetilde{k}_{x,h}|
\geq \sqrt{\frac{8\tau(\log n)^{2/\gamma}  \int_{\mathbb{R}^d} k_{x,h}^2(x^\prime) \mathrm{d}P(x')}{n}} + \frac{16\tau(\log n)^{2/\gamma}  K(0) }{3 h^d n},
\end{align}
where 
\begin{align}\label{nzeroOI}
n_1 := \max \left\{ \min \biggl\{ m \geq 3 : 
m \geq \biggl( \frac{808 c_0 (3  h^{-d}\varphi(h) +   K(0))}{2K(0)} \biggr)^{\frac{1}{d+1}}
\text{ and } 
\frac{m}{(\log m)^{\frac{2}{\gamma}}} \geq 4 \biggr\},
e^{\frac{d+1}{b}} \right\}.
\end{align}
Consider the function set $\widetilde{\mathcal{K}}_{h,r} := \{ \widetilde{k}_{x,h} : x \in B_r \}$. We choose $y_1, \ldots, y_m \in B_r$ such that $\{k_{y_1,h}, \ldots, k_{y_m,h}\}$ is a minimal $\varepsilon/2$-net of $\mathcal{K}_{h,r}=\{k_{x,h} : x \in B_r\}$ with respect to $\|\cdot\|_{\infty}$. Noticing the following relation
\begin{align*}
 \|\widetilde{k}_{x,h} - \widetilde{k}_{y_j,h}\|_{\infty} 
 \leq 2 \|k_{x,h} - k_{y_j,h}\|_{\infty} \leq \varepsilon,
\end{align*}
we know that $\{\widetilde{k}_{y_1,h}, \ldots, \widetilde{k}_{y_m,h}\}$ is an $\varepsilon$-net of $\widetilde{\mathcal{K}}_{h,r}$ with respect to $\|\cdot\|_{\infty}$. Note that here we have $m = \mathcal{N} (\mathcal{K}_{h,r}, \|\cdot\|_\infty, \frac{\varepsilon}{2})$,
since the net is minimal. From Proposition \ref{kernelSpaceCN}, we know that there exists a positive constant $c$ independent of $\varepsilon$ such that $\log m \leq c  \log \frac{r}{h \varepsilon}$. 
From the estimate in  \eqref{FirstEstimate} and a union bound argument, with probability $\mu$ at least $1-4m e^{-\tau}$, the following estimate holds
\begin{align*}  
\sup_{j=1,\ldots,m} |\mathbb{E}_D \widetilde{k}_{y_j,h}|
\leq \sqrt{\frac{8\tau(\log n)^{2/\gamma}  \int_{\mathbb{R}^d} k_{y_j,h}^2(x^\prime) \mathrm{d}P(x') }{n}} 
  + \frac{16\tau(\log n)^{2/\gamma}  K(0)   }{h^d n}.
\end{align*}
By a simple variable transformation, we see that with probability $\mu$ at least $1 - e^{-\tau}$, there holds
\begin{align*}  
\begin{split}
\sup_{j=1,\ldots,m} |\mathbb{E}_D \widetilde{k}_{y_j,h}|
\leq &\sqrt{\frac{8 (\log n)^{2/\gamma}  \int_{\mathbb{R}^d} k_{y_j,h}^2(x^\prime) \mathrm{d}P(x') (\tau + \log (4 m))}{n}} \\
 & + \frac{16(\log n)^{2/\gamma}  K(0) (\tau + \log (4 m))}{h^d n}.
 \end{split}
\end{align*}
Recalling that $\{k_{y_1,h}, \ldots, k_{y_m,h}\}$ is an $\varepsilon/2$-net of $\mathcal{K}_{h,r}$,  this implies that, for any $x \in B_r$, there exists $y_j$ such that $\|k_{x,h} - k_{y_j,h}\|_{\infty} \leq \varepsilon/2$. Then we have 
\begin{align*}
\left| |\mathbb{E}_D \widetilde{k}_{x,h}| - |\mathbb{E}_D \widetilde{k}_{y_j,h}| \right|
& \leq \left|\mathbb{E}_D \widetilde{k}_{x,h} - \mathbb{E}_D \widetilde{k}_{y_j,h}\right|
\\
& \leq |\mathbb{E}_D k_{x,h} - \mathbb{E}_D k_{y_j,h}|
       + |\mathbb{E}_P k_{x,h} - \mathbb{E}_P k_{y_j,h}|
\\
& \leq \|k_{x,h} - k_{y_j,h}\|_{L_1(D)} + \|k_{x,h} - k_{y_j,h}\|_{L_1(P)}
\\
& \leq \varepsilon,
\end{align*}
and consequently
\begin{align}  \label{ThirdEstimate}
|\mathbb{E}_D \widetilde{k}_{x,h}| \leq |\mathbb{E}_D \widetilde{k}_{y_j,h}| + \varepsilon.
\end{align}
By setting $a := 8(\log n)^{2/\gamma}  (\tau + \log (4 m))/n$, we have
\begin{align*}
\left| \sqrt{a \int_{\mathbb{R}^d} k_{x,h}^2(x') \, \mathrm{d}P(x')}
  - \sqrt{a \int_{\mathbb{R}^d} k_{y_j,h}^2(x') \, \mathrm{d}P(x')} \right|
& = \bigl| \|\sqrt{a} k_{x,h}\|_{L_2(P)} - \|\sqrt{a} k_{y_j,h}\|_{L_2(P)} \bigr|
\\
& \leq \sqrt{a} \|k_{x,h} - k_{y_j,h}\|_{L_2(P)}
\\
& \leq \sqrt{a} \varepsilon/2.
\end{align*}
This together with inequality \eqref{ThirdEstimate} implies that for any $x \in B_r$, there holds
\begin{align*}
|\mathbb{E}_D \widetilde{k}_{x,h}| 
& \leq |\mathbb{E}_D \widetilde{k}_{y_j,h}| + 2 \varepsilon
\\
& \leq \sqrt{a  \int_{\mathbb{R}^d} k_{y_j,h}^2(x') \, \mathrm{d}P(x')} + \frac{ 2 a K(0)}{h^d} + \varepsilon
\\
& \leq \sqrt{a \int_{\mathbb{R}^d} k_{x,h}^2(x') \, \mathrm{d}P(x') }
       + \frac{\sqrt{a} \varepsilon}{2}
       + \frac{ 2 a K(0)}{h^d} + \varepsilon.
\end{align*}
Consequently we have
\begin{align*}
\int_{B_r} |\mathbb{E}_D \widetilde{k}_{x,h}| \, \mathrm{d}x
& \leq \int_{B_r} \sqrt{a \int_{\mathbb{R}^d} k_{x,h}^2(x') \, \mathrm{d}P(x') }  \, \mathrm{d}x
\\
& \phantom{=}  + r^d \lambda^d(B_1)\cdot \frac{ 2 a K(0) }{h^d}
               + r^d \lambda^d(B_1) ( \sqrt{a}/2 + 1 ) \varepsilon.
\end{align*}
Now recall that for $E\subset \mathbb{R}^d$ and $g:E \rightarrow \mathbb{R}$, H\"{o}lder's inequality implies
\begin{align*}
 \|g\|_{\frac{1}{2}}
  = \left( \int_{\mathbb{R}^d} |\eins_{E}|^{\frac{1}{2}} |g|^{\frac{1}{2}} \, \mathrm{d}x \right)^2
 \leq \int_{\mathbb{R}^d} |\eins_{E}|\, \mathrm{d}x
        \int_{\mathbb{R}^d} |g| \, \mathrm{d}x
 = \mu(E) \cdot \|g\|_1.
\end{align*}
This tells us that
\begin{align*}
\int_{B_r} \sqrt{a \int_{\mathbb{R}^d} k_{x,h}^2(x') \, \mathrm{d}P(x') }  \, \mathrm{d}x
\leq \sqrt{\mu(B_r)} \cdot 
       \sqrt{\int_{B_r} a \int_{\mathbb{R}^d} k_{x,h}^2(x') \, \mathrm{d}P(x') \, \mathrm{d}x}.
\end{align*}
Moreover, there holds
\begin{align*}
\int_{B_r} \int_{\mathbb{R}^d} k_{x,h}^2(x') \, \mathrm{d}P(x') \, \mu(\mathrm{d}x) 
& = \int_{\mathbb{R}^d} \int_{B_r} 
    h^{-2d}  K^2 \left( \|x - x'\|/h	 \right)  \, \mathrm{d}x \, \mathrm{d}P(x')
\\
& \leq \int_{\mathbb{R}^d} \int_{\mathbb{R}^d} h^{-2d} K^2 \left( \|x\|/h \right)
\, \mathrm{d}x \, \mathrm{d}P(x')
\\
& = h^{-d} \int_{\mathbb{R}^d} K^2(\|x\|) \, \mathrm{d}x
\\
& \leq K(0) h^{-d}.
\end{align*}
We now set $\varepsilon = \frac{1}{n}$ and obtain 
$\log (4 m) \leq c \log \frac{n r}{h}$. Thus we have
\begin{align}\label{intermediate}
\begin{split}
\int_{B_r} |\mathbb{E}_D \widetilde{k}_{x,h}| \, \mathrm{d}x
& \lesssim    \sqrt{\frac{(\log n)^{2/\gamma} r^d (\tau + \log (4 m))}{h^d n}}
        + \frac{(\log n)^{2/\gamma}  r^d (\tau + \log (4 m))}{h^d n} 
\\
&   \phantom{=} +  \sqrt{\frac{(\log n)^{2/\gamma}  (\tau + \log (4 m))}{n}} \cdot \frac{r^d}{n}
\\
& \lesssim      \sqrt{\frac{(\log n)^{2/\gamma} r^d (\tau + \log \frac{n r}{h})}{h^d n}}
        + \frac{(\log n)^{2/\gamma} r^d (\tau + \log \frac{nr}{h})}{h^d n} .
\end{split}
\end{align}
Now we need to estimate the corresponding integral over 
$H_r$. By definition we have
\begin{align*}
\int_{H_r} |\mathbb{E}_D \widetilde{k}_{x,h}| \, \mathrm{d}x 
\leq \int_{H_r} \mathbb{E}_D k_{x,h} \, \mathrm{d}x 
+ \int_{H_r} \mathbb{E}_P k_{x,h} \, \mathrm{d}x.
\end{align*}
From Lemma \ref{GeneralRandEstimate} we obtain
\begin{align*}
\int_{H_r} \mathbb{E}_D k_{x,h} \, \mathrm{d}x
\lesssim  D (H_{r/2})
       +   \biggl( \frac{h}{r} \biggr)^{\beta}
= \frac{1}{n} \sum_{i=1}^n \eins_{H_{r/2}}(x_i)
      +   \biggl( \frac{h}{r} \biggr)^{\beta}, 
\end{align*}
and
\begin{align*}
\int_{H_r} \mathbb{E}_P k_{x,h} \, \mathrm{d}x
\lesssim  P (H_{r/2})
     +   \biggl( \frac{h}{r} \biggr)^{\beta}.
\end{align*}
Since $r \geq 1$, we can construct a function $g$ with $\eins_{H_{r/2}} \leq g \leq \eins_{H_{r/4}}$
and there exists a function $\psi(r)$ such that $\vertiii{g} \leq \psi(r)$.
Applying Bernstein inequality in Theorem \ref{bernsteininequality} with respect to this function $g$, 
it is easy to see that when $n\geq n_2$, with probability $\mu$ at least $1 - 2e^{-\tau}$, there holds 
\begin{align*} 
\mathbb{E}_D g - \mathbb{E}_P g
\leq  \sqrt{\frac{8\tau(\log n)^{2/\gamma}}{n}}+\frac{8\tau(\log n)^{2/\gamma}}{3n},
\end{align*} 
where 
\begin{align*} 
n_2 := \max \left\{ \min \biggl\{ m \geq 3 : m^2 \geq 808 c_0 (3 \psi(r) + 1) \text{ and } \, \frac{m}{(\log m)^{\frac{2}{\gamma}}} \geq 4 \biggr\}, e^{\frac{3}{b}} \right\}.	
\end{align*}
This implies that with probability $\mu$ at least $1 - 2 e^{-	\tau}$, there holds
\begin{align*}
 D (H_{r/2})
 = \frac{1}{n} \sum_{i=1}^n \eins_{H_{r/2}}(x_i) 
 & \leq \mathbb{E}_D g
 \\
 & \leq \mathbb{E}_P g + \sqrt{\frac{8\tau(\log n)^{2/\gamma}}{n}}+\frac{8\tau(\log n)^{2/\gamma}}{3n}
 \\
 & \leq \mathbb{E}_P \eins_{H_{r/4}}(x_i) + \sqrt{\frac{8\tau(\log n)^{2/\gamma}}{n}}+\frac{8\tau(\log n)^{2/\gamma}}{3n}
\end{align*}
and consequently we obtain
\begin{align}\label{intermediate_II}
\int_{H_r} |\mathbb{E}_D \widetilde{k}	_{x,h}| \, \mathrm{d}x \lesssim   P (H_{r/4})
     + \sqrt{\frac{32\tau(\log n)^{2/\gamma}}{n}}+    \biggl( \frac{h}{r} \biggr)^{\beta}.	
\end{align}
By combining estimates in \eqref{intermediate} and \eqref{intermediate_II}, and taking $n_0=\max\{n_1,n_2 \}$, we have accomplished  the proof of Theorem \ref{OracleInequalityL1}.
\end{proof}

\begin{remark}\label{remark_on_cx}
Let us briefly discuss the choice of the function $\psi(r)$ in the proof of Theorem \ref{OracleInequalityL1}.
For example, in the case $\mathcal{C}(X) = \mathrm{Lip}(\mathbb{R})$, we can choose
\begin{align*} 
 g(x) :=
 \begin{cases}
  1, & \text{ for } |x| > r, \\
  0, & \text{ for } |x| < r/4, \\
  - \frac{4x}{3r} - \frac{1}{3}, & \text{ for } -r \leq x \leq -r/4, \\
  \frac{4x}{3r} - \frac{1}{3}, & \text{ for } r/4 \leq x \leq r.
 \end{cases}
\end{align*}
Then we have $\vertiii{g} \leq \frac{4}{3r} \leq 4/3$ and therefore, $n_2$ is well-defined. 
Moreover, it is easily seen that even for smoother underlying functions classes
like $C^1$ we can construct a function $g$ such that $\vertiii{g} < \infty$.
\end{remark}

\begin{proof}[of Theorem \ref{OracleInequalityInftyNorm}]
Recalling the definitions of $k_{x,h}$ and $\widetilde{k}_{x,h}$ given in \eqref{k_x_h} and \eqref{k_x_h_mean}, we have
\begin{align*}
\|f_{D,h} - f_{P,h}\|_{\infty}
= \sup_{x\in\Omega} |\mathbb{E}_D \widetilde{k}_{x,h}|.
\end{align*}
To prove the assertion, we first estimate $\mathbb{E}_D f_{x,h}$ for fixed  $x \in \mathbb{R}^d$ 
using the Bernstein inequality in Theorem \ref{bernsteininequality}. 
For this purpose, we first verify the following conditions: 
Obviously, we have $\mathbb{E}_P \widetilde{k}_{x,h} = 0$. Then, simple estimates imply
\begin{align*}
\|\widetilde{k}_{x,h}\|_{\infty}
\leq 2 \|k_{x,h}\|_{\infty}
\leq 2 h^{-d} \|K\|_{\infty}
\leq 2 h^{-d} K(0)
\end{align*}
and
\begin{align*}
\mathbb{E}_P \widetilde{k}_{x,h}^2
\leq \mathbb{E}_P k_{x,h}^2
= h^{-d} \int_{\mathbb{R}^d} K^2 \left( \|x - x'\|/h\right) f(x') h^{-d} \, \mathrm{d}x'
\lesssim \|f\|_{\infty} h^{-d}.
\end{align*}
Finally, the first condition in Assumption \ref{seminorm_assump} and Condition $(iii)$ in Assumption \ref{assump_kernel}
yield
\begin{align*}
 \vertiii{\widetilde{k}_{x,h}} 
       \leq \vertiii{k_{x,h}} 
       \leq h^{-d} \sup_{x \in \mathbb{R}^d} \vertiii{K \left(\|x - \cdot\|/h \right)}
       \leq h^{-d}\varphi(h).
\end{align*}
Therefore, we can apply the Bernstein inequality in Theorem \ref{bernsteininequality} and obtain that 
for $n \geq n_0^*$, for any fixed $x \in \mathbb{R}^d$, with probability $\mu$ at least  $1-4 e^{-\tau}$, there holds 
\begin{align} \label{FirstEstimateInftyNorm}
|\mathbb{E}_D \widetilde{k}_{x,h}|
\lesssim \sqrt{\frac{  \tau \|f\|_{\infty}(\log n)^{2/\gamma}}{h^d n}} + \frac{ K(0) \tau (\log n)^{2/\gamma}}{3 h^d n},
\end{align}
where
\begin{align}\label{nzerostar}
n_0^* := \max \left\{ \min \biggl\{ m \geq 3 : 
m \geq \biggl( \frac{808 c_0 (3  h^{-d}\varphi(h) +   K(0))}{2K(0)} \biggr)^{\frac{1}{d+1}}
\text{ and } 
\frac{m}{(\log m)^{\frac{2}{\gamma}}} \geq 4 \biggr\},
e^{\frac{d+1}{b}} \right\}.
\end{align}

Let us  consider the following function set
\begin{align*}
\mathcal{K}^\prime_{h,r_0} := \bigl\{ \widetilde{k}_{x,h} : x \in B_{r_0} \bigr\} 
\end{align*}
and choose $y_1, \ldots, y_m \in B_{r_0}$ such that $\{k_{y_1,h}, \ldots, k_{y_m,h}\}	$ is a minimal $\varepsilon/2$-net of $\mathcal{K}_{h,r_0}$ with respect to $\|\cdot\|_{\infty}$ and  $m = \mathcal{N} (\mathcal{K}_{h,r_0}, \|\cdot\|_\infty, \frac{\varepsilon}{2})$. As in the proof of Theorem \ref{OracleInequalityL1}, one can show that $\widetilde{k}_{y_1,h}, \ldots, \widetilde{k}_{y_m,h}$ is an $\varepsilon$-net of $\mathcal{K}^\prime_{h,r_0}$. Again from Proposition \ref{kernelSpaceCN} we know that there holds $\log (4 m) \lesssim \log \frac{r_0}{h \varepsilon}$. This in connection with  \eqref{FirstEstimateInftyNorm} implies that the following union bound  
\begin{align*}  
\sup_{j=1,\ldots,m} |\mathbb{E}_D \widetilde{k}_{y_j,h}|
\lesssim \sqrt{\frac{   \|f\|_{\infty} (\tau + \log ( 4 m))(\log n)^{2/\gamma}}{h^d n}}
  + \frac{  K(0) (\tau + \log (4 m))(\log n)^{2/\gamma}}{h^d n}
\end{align*}
holds with probability $\mu$ at least $1 - e^{-\tau}$. For any $x \in B_{r_0}$, there exists a $y_j$ such that
$\|k_{x,h} - k_{y_j,h}\|_{\infty} \leq \varepsilon$. Then we have
\begin{align*}
\bigl| |\mathbb{E}_D \widetilde{k}_{x,h}| - |\mathbb{E}_D \widetilde{k}_{y_j,h}| \bigr|
& \leq |\mathbb{E}_D \widetilde{k}_{x,h} - \mathbb{E}_D \widetilde{k}_{y_j,h}|
\\
& \leq |\mathbb{E}_D k_{x,h} - \mathbb{E}_D k_{y_j,h}|
       + |\mathbb{E}_P k_{x,h} - \mathbb{E}_P k_{y_j,h}|
\\
& \leq \|k_{x,h} - k_{y_j,h}\|_{L_1(D)} + \|k_{x,h} - k_{y_j,h}\|_{L_1(P)}
\\
& \leq  \varepsilon,
\end{align*}
and consequently with probability $\mu$ at least $1 - e^{-\tau}$, there holds 
\begin{align*}
|\mathbb{E}_D \widetilde{k}_{x,h}|
& \leq |\mathbb{E}_D \widetilde{k}_{y_j,h}| +  \varepsilon
\\
& \lesssim  \sqrt{\frac{  \|f\|_{\infty} (\tau + \log ( 4 m))(\log n)^{2/\gamma}}{h^d n}}
  + \frac{  K(0) (\tau + \log (4 m))(\log n)^{2/\gamma}}{h^d n} +    \varepsilon
\end{align*}
for any $x \in B_{r_0}$. By setting $\varepsilon = \frac{1}{n}$, we obtain
$\log (4 m) \lesssim  \log \frac{n r_0 }{h}	$. Thus, with probability $\mu$ at least $1 - e^{-\tau}$, we have
\begin{align*}
|\mathbb{E}_D \widetilde{k}_{x,h}|
& \lesssim  \sqrt{\frac{  \|f\|_{\infty} (\tau + \log ( \frac{n r_0 }{h}))(\log n)^{2/\gamma}}{h^d n}}
  + \frac{  K(0) (\tau + \log (\frac{n r_0 }{h}	))(\log n)^{2/\gamma}}{h^d n} +   \frac{1}{n}
\\
& \lesssim  \sqrt{\frac{  \|f\|_{\infty} (\tau + \log ( \frac{n r_0 }{h}))(\log n)^{2/\gamma}}{h^d n}}
  + \frac{  K(0) (\tau + \log (\frac{n r_0 }{h}	))(\log n)^{2/\gamma}}{h^d n}.
\end{align*}
By taking the supremum of the left hand side of the above inequality over $x$, we complete the proof of Theorem \ref{OracleInequalityInftyNorm}.
\end{proof}

\begin{proof}[of Theorem \ref{ConsistencyL1}]
Without loss of generality, we assume that $h_n \leq 1$. Since $h_n \to 0$, 
Theorem \ref{ApproximationError} implies that $\|f_{P,h} - f\|_1 \leq \varepsilon$.
We set 
\begin{align}\label{choice_radius}
r_n := \left(\frac{n h_n^d}{(\log n)^{(2+2\gamma)/\gamma}} \right)^{1/d} \to \infty
\end{align}
and we can also assume w.l.o.g that $r_n \geq 2$. Moreover, 
there exists a constant $n_1^\prime$ such that 
\begin{align*}
P (H_{r_n/2}) \leq \varepsilon,\,\,\forall\, n \geq n_1^\prime. 
\end{align*}

For any $0<\delta<1$, we select $\tau := \log(1/\delta)$.  Then there exists a constant $n_2^\prime$ such that 
$\log \frac{nr_n}{h_n} \geq \tau$
for all $n \geq n_2^\prime$.
On the other hand, with the above choice of $r_n$, we have
\begin{align*}
\log \frac{nr_n}{h_n}\leq \log\biggl(\frac{n^{1/d}h_n}{(\log n)^{(2+2\gamma)/\gamma}}\cdot \frac{n}{h_n}\biggr)\leq (1+d^{-1})\log n 
  \lesssim \log n.
\end{align*} 
Thus, for all $n \geq \max\{n_1^\prime,n_2^\prime\}$, we have
\begin{align*}
\frac{ (\log n)^{2/\gamma}r_n^d \log (\frac{nr_n}{h_n})}{n h_n^d}
\lesssim  \frac{ (\log n)^{2/\gamma}r_n^d \log n}{n h_n^d}=\frac{1}{\log n}
\to 0.
\end{align*}
Thus, following from Theorem \ref{OracleInequalityL1}, when $n$ is sufficient large, for any $\varepsilon>0$, with probability $\mu$ at least $1 - 3\delta$, there holds
\begin{align*}
\|f_{D,h_n} - f\|_1
\lesssim \varepsilon.
\end{align*}
Therefore, with properly chosen $\delta$, one can show that $f_{D,h_n}$ converges to $f$ under $L_1$-norm almost surely. We have completed the proof of Theorem \ref{ConsistencyL1}.
\end{proof}

\begin{proof}[of Theorem \ref{ConvergenceRatesL1}]
\textit{(i)} Combining the estimates in Theorem \ref{OracleInequalityL1} and Theorem \ref{ComparisionNorms}, we know that 	
with probability $\mu$ at least $1 - 2 e^{-\tau}$, there holds
\begin{align*}
\|f_{D,h} - f\|_1
& \lesssim \sqrt{\frac{(\log n)^{2/\gamma}r^d (\tau + \log(\frac{nr}{h_n}  ))}{h_n^d n}}
     + \frac{(\log n)^{2/\gamma}r^d (\tau + \log(\frac{nr}{h_n} ))}{h_n^d n}
\\
& \phantom{=} 
     +   \frac{\tau(\log n)^{2/\gamma}}{n} +   P \bigl( H_r \bigr)
     +   r^d h_n^{\alpha} + \biggl(\frac{h_n}{r}\biggr)^\beta
\\
& \lesssim \sqrt{\frac{(\log n)^{2/\gamma}r^d (\tau + \log(\frac{nr}{h_n}  	))}{h_n^d n}}
     + \frac{\tau(\log n)^{2/\gamma}}{n} +   P \bigl( H_r \bigr)
          +   r^d h_n^{\alpha} + \biggl(\frac{h_n}{r}\biggr)^\beta.
\end{align*}
Let $\tau := \log n$ and later we will see from the choices of $h_n$ and $r_n$ that there exists some constant $c$ such that
$\log(\frac{nr}{h} 	)$ can be bounded by $c  \log n$. Therefore, with probability $\mu$ at least $1-\frac{1}{n}$ there holds
\begin{align*}
\|f_{D,h} - f\|_1
& \lesssim   \sqrt{\frac{r^d (\log n)^{(2+\gamma)/\gamma}}{h_n^d n}}
     +  r^{- \eta d}
     +  r^d h_n^{\alpha}
\\
& \lesssim r^d \biggl( \frac{\log n}{n r^d} \biggr)^{\frac{\alpha}{2\alpha+d}} + r^{-\eta d}
\\
& \lesssim \left( \frac{(\log n)^{(2+\gamma)/\gamma}}{n} \right)^{\frac{\alpha\eta}{(1+\eta)(2\alpha+d)-\alpha}},
\end{align*}
by choosing
\begin{align*}
h_n = \biggl( \frac{(\log n)^{(2+\gamma)/\gamma}}{n} \biggr)^{\frac{1+\eta}{(1+\eta)(2\alpha+d)-\alpha}}
 \,\, \textrm{  and } \,\, 
 r:=r_n = \biggl( \frac{n}{(\log n)^{(2+\gamma)/\gamma}} \biggr)^{\frac{\alpha}{d(1+\eta)(2\alpha+d)-\alpha d}}.
\end{align*}

\textit{(ii)} 
Similar to case \textit{(i)}, one can show that with probability $\mu$ at least $1-\frac{1}{n}$ there holds
\begin{align*}
\|f_{D,h} - f\|_1
& \lesssim  \sqrt{\frac{r^d (\log n)^{(2+\gamma)/\gamma}}{h_n^d n}}
     +  e^{- a r^\eta}
     + r^d h_n^{\alpha}
\\
& \lesssim r^d \biggl( \frac{(\log n)^{(2+\gamma)/\gamma}}{n r^d} \biggr)^{\frac{\alpha}{2\alpha+d}} +  e^{- a r^\eta}
\\
& \lesssim	 \biggl( \frac{(\log n)^{(2+\gamma)/\gamma}}{n} \biggr)^{\frac{\alpha}{2\alpha+d}}
     (\log n)^{\frac{d}{\eta} \cdot \frac{\alpha+d}{2\alpha+d}},
\end{align*}
by choosing
\begin{align*}
 h_n = \biggl( \frac{(\log n)^{(2+\gamma)/\gamma}}{n} \biggr)^{\frac{1}{2\alpha+d}}
     (\log n)^{- \frac{d}{\eta} \cdot \frac{1}{2\alpha+d}}
 \,\, \textrm{  and } \,\, 
 r_n = (\log n)^{\frac{1}{\eta}}. 
\end{align*}

\textit{(iii)}
From Theorem \ref{ComparisionNorms} 
we see that with confidence $1-\frac{1}{n}$, there holds
\begin{align*}
\|f_{D,h} - f_{P,h}\|_1
& \lesssim  \sqrt{\frac{r_0^d (\log n)^{(2+\gamma)/\gamma}}{h_n^d n}}   
     +  h_n^\alpha
 \lesssim  \left( (\log n)^{(2+\gamma)/\gamma}/n \right)^{\frac{\alpha}{2\alpha+d}},
\end{align*}
where $h_n$ is chosen as  
\begin{align*}
 h_n = \left( (\log n)^{(2+\gamma)/\gamma}/n \right)^{\frac{1}{2\alpha+d}}. 
\end{align*}
The proof of Theorem \ref{ConvergenceRatesL1} is completed. 
\end{proof}

\begin{proof}[of Theorem \ref{ConvergenceRatesLInfty}]
The desired estimate is an easy consequence if we combine the estimates in Theorem \ref{OracleInequalityInftyNorm} and Theorem \ref{ApproximationError} \textit{(ii)} and choose 
\begin{align*}
h_n = \left(  (\log n)^{(2+\gamma)/\gamma}/n \right)^{\frac{1}{2\alpha+d}}.
\end{align*}
We omit the details of the proof here.
\end{proof}

\section{Conclusion}\label{sec::conclusion}
In the present paper, we studied the kernel density estimation problem for dynamical systems admitting a unique invariant Lebesgue density by using the $\mathcal{C}$-mixing coefficient to measure the dependence among observations. The main results presented in this paper are the consistency and convergence rates of the kernel density estimator in the sense of $L_1$-norm and $L_\infty$-norm. With properly chosen bandwidth, we showed that the kernel density estimator is universally consistent. Under mild assumptions on the kernel function and the density function, we established convergence rates for the estimator. For instance, when the density function is bounded and compactly supported, both $L_1$-norm and $L_\infty$-norm convergence rates with the same order can be achieved for general geometrically time-reversed $\mathcal{C}$-mixing dynamical systems. The convergence mentioned here is of type ``with high probability" due to the use of a Bernstein-type exponential inequality and this makes the present study different from the existing related studies. We also discussed the model selection problem of the kernel density estimation in the dynamical system context by carrying out numerical experiments.

\acks{The research leading to these results has received funding from the European Research Council under the European Union's Seventh Framework Programme (FP7/2007-2013) / ERC AdG A-DATADRIVE-B (290923). This paper reflects only the authors' views, the Union is not liable for any use that may be made of the contained information. Research Council KUL: GOA/10/09 MaNet, CoE PFV/10/002 (OPTEC), BIL12/11T; PhD/Postdoc grants. Flemish Government:  FWO: projects: G.0377.12 (Structured systems), G.088114N (Tensor based data similarity); PhD/Postdoc grants. IWT: projects: SBO POM (100031); PhD/Postdoc grants. iMinds Medical Information Technologies SBO 2014. Belgian Federal Science Policy Office: IUAP P7/19 (DYSCO, Dynamical systems, control and optimization, 2012-2017). The corresponding author is Yunlong Feng.}

\small{\bibliography{FENGBib}}

\begin{thebibliography}{54}
\providecommand{\natexlab}[1]{#1}
\providecommand{\url}[1]{\texttt{#1}}
\expandafter\ifx\csname urlstyle\endcsname\relax
  \providecommand{\doi}[1]{doi: #1}\else
  \providecommand{\doi}{doi: \begingroup \urlstyle{rm}\Url}\fi

\bibitem[Anghel and Steinwart(2007)]{anghel2007forecasting}
Marian Anghel and Ingo Steinwart.
\newblock Forecasting the evolution of dynamical systems from noisy
  observations.
\newblock \emph{arXiv preprint arXiv:0707.4146}, 2007.

\bibitem[Baladi(2000)]{baladi2000positive}
Viviane Baladi.
\newblock \emph{Positive Transfer Operators and Decay of Correlations},
  volume~16.
\newblock World Scientific, 2000.

\bibitem[Blanke et~al.(2003)Blanke, Bosq, and
  Gu{\'e}gan]{blanke2003modelization}
Delphine Blanke, Denis Bosq, and Dominique Gu{\'e}gan.
\newblock Modelization and nonparametric estimation for dynamical systems with
  noise.
\newblock \emph{Statistical Inference for Stochastic Processes}, 6\penalty0
  (3):\penalty0 267--290, 2003.

\bibitem[Bosq and Gu{\'e}gan(1995)]{bosq1995nonparametric}
Denis Bosq and Dominique Gu{\'e}gan.
\newblock Nonparametric estimation of the chaotic function and the invariant
  measure of a dynamical system.
\newblock \emph{Statistics \& Probability Letters}, 25\penalty0 (3):\penalty0
  201--212, 1995.

\bibitem[Bowman(1984)]{bowman1984alternative}
Adrian~W. Bowman.
\newblock An alternative method of cross-validation for the smoothing of
  density estimates.
\newblock \emph{Biometrika}, 71\penalty0 (2):\penalty0 353--360, 1984.

\bibitem[Bradley(2005)]{bradley2005basic}
Richard~C. Bradley.
\newblock Basic properties of strong mixing conditions. {A} survey and some
  open questions.
\newblock \emph{Probability Surveys}, 2\penalty0 (2):\penalty0 107--144, 2005.

\bibitem[Cao et~al.(1994)Cao, Cuevas, and Manteiga]{cao1994comparative}
Ricardo Cao, Antonio Cuevas, and Wensceslao~Gonz{\'a}lez Manteiga.
\newblock A comparative study of several smoothing methods in density
  estimation.
\newblock \emph{Computational Statistics \& Data Analysis}, 17\penalty0
  (2):\penalty0 153--176, 1994.

\bibitem[Carl and Stephani(1990)]{carl1990entropy}
Bernd Carl and Irmtraud Stephani.
\newblock \emph{Entropy, Compactness and the Approximation of Operators}.
\newblock Cambridge University Press, Cambridge, 1990.

\bibitem[Chu and Marron(1991)]{chu1991comparison}
Chih-Kang Chu and James~S. Marron.
\newblock Comparison of two bandwidth selectors with dependent errors.
\newblock \emph{The Annals of Statistics}, 19\penalty0 (4):\penalty0
  1906--1918, 1991.

\bibitem[Deisenroth and Mohamed(2012)]{deisenroth2012expectation}
Marc Deisenroth and Shakir Mohamed.
\newblock Expectation propagation in {G}aussian process dynamical systems.
\newblock In \emph{Advances in Neural Information Processing Systems}, pages
  2609--2617. NIPS Foundation, 2012.

\bibitem[Devroye(1989)]{devroye1989double}
Luc Devroye.
\newblock The double kernel method in density estimation.
\newblock \emph{Annales de l'Institut Henri Poincar{\'e} (B) Probabilit{\'e}s
  et Statistiques}, 25\penalty0 (4):\penalty0 533--580, 1989.

\bibitem[Devroye(1997)]{devroye1997universal}
Luc Devroye.
\newblock Universal smoothing factor selection in density estimation: theory
  and practice.
\newblock \emph{Test}, 6\penalty0 (2):\penalty0 223--320, 1997.

\bibitem[Devroye and Gy\"{o}rfi(1985)]{devroye1985nonparametric}
Luc Devroye and L{\'a}szl{\'o} Gy\"{o}rfi.
\newblock \emph{Nonparametric Density Estimation: The $L_1$ View}, volume 119.
\newblock John Wiley \& Sons Incorporated, 1985.

\bibitem[Devroye and Lugosi(1997)]{devroye1997nonasymptotic}
Luc Devroye and G{\'a}bor Lugosi.
\newblock Nonasymptotic universal smoothing factors, kernel complexity and
  {Y}atracos classes.
\newblock \emph{The Annals of Statistics}, 25\penalty0 (6):\penalty0
  2626--2637, 1997.

\bibitem[Devroye and Lugosi(2001)]{devroye2001combinatorial}
Luc Devroye and G{\'a}bor Lugosi.
\newblock \emph{Combinatorial Methods in Density Estimation}.
\newblock Springer Science \& Business Media, 2001.

\bibitem[Eggermont and LaRiccia(2001)]{eggermont2001maximum}
Paul Eggermont and Vince LaRiccia.
\newblock \emph{Maximum Penalized Likelihood Estimation: Volume I: Density
  Estimation}.
\newblock Springer, New York, 2001.

\bibitem[Folland(1999)]{Folland99}
Gerald~B. Folland.
\newblock \emph{Real Analysis}.
\newblock John Wiley \& Sons, New York, 1999.

\bibitem[Gasser et~al.(1985)Gasser, M\"{u}ller, and
  Mammitzsch]{gasser1985kernels}
Theo Gasser, Hans-Georg M\"{u}ller, and Volker Mammitzsch.
\newblock Kernels for nonparametric curve estimation.
\newblock \emph{Journal of the Royal Statistical Society: Series B (Statistical
  Methodology)}, 47\penalty0 (2):\penalty0 238--252, 1985.

\bibitem[Hall et~al.(1995)Hall, Lahiri, and Polzehl]{hall1995bandwidth}
Peter Hall, Soumendra~Nath Lahiri, and J{\"o}rg Polzehl.
\newblock On bandwidth choice in nonparametric regression with both short-and
  long-range dependent errors.
\newblock \emph{The Annals of Statistics}, 23\penalty0 (6):\penalty0
  1921--1936, 1995.

\bibitem[Hang and Steinwart(2016)]{hang2015bernstein}
Hanyuan Hang and Ingo Steinwart.
\newblock A {B}ernstein-type inequality for some mixing processes and dynamical
  systems with an application to learning.
\newblock \emph{The Annals of Statistics, in press}, 2016.
\newblock URL
  \url{http://www.e-publications.org/ims/submission/AOS/user/submissionFile/22219?confirm=ec9efb84}.

\bibitem[Hang et~al.(2016)Hang, Feng, Steinwart, and Suykens]{hanglearning2015}
Hanyuan Hang, Yunlong Feng, Ingo Steinwart, and Johan~A.K. Suykens.
\newblock Learning theory estimates with observations from general stationary
  stochastic processes.
\newblock \emph{Neural Computation, in press}, 2016.
\newblock URL
  \url{http://www.mitpressjournals.org/doi/abs/10.1162/NECO_a_00870#.V4ZTVtx95pi}.

\bibitem[Hart and Vieu(1990)]{hart1990data}
Jeffrey~D. Hart and Philippe Vieu.
\newblock Data-driven bandwidth choice for density estimation based on
  dependent data.
\newblock \emph{The Annals of Statistics}, 18\penalty0 (2):\penalty0 873--890,
  1990.

\bibitem[Hart and Wehrly(1986)]{hart1986kernel}
Jeffrey~D. Hart and Thomas~E. Wehrly.
\newblock Kernel regression estimation using repeated measurements data.
\newblock \emph{Journal of the American Statistical Association}, 81\penalty0
  (396):\penalty0 1080--1088, 1986.

\bibitem[Jones et~al.(1996{\natexlab{a}})Jones, Marron, and
  Sheather]{jones1996brief}
Michael~C. Jones, James~S. Marron, and Simon~J. Sheather.
\newblock A brief survey of bandwidth selection for density estimation.
\newblock \emph{Journal of the American Statistical Association}, 91\penalty0
  (433):\penalty0 401--407, 1996{\natexlab{a}}.

\bibitem[Jones et~al.(1996{\natexlab{b}})Jones, Marron, and
  Sheather]{jones1996progress}
Michael~C. Jones, James~S. Marron, and Simon~J. Sheather.
\newblock Progress in data-based bandwidth selection for kernel density
  estimation.
\newblock \emph{Computational Statistics}, 11\penalty0 (3):\penalty0 337--381,
  1996{\natexlab{b}}.

\bibitem[Khas$'$minskii(1979)]{khas1979lower}
Rafail~Z. Khas$'$minskii.
\newblock A lower bound on the risks of non-parametric estimates of densities
  in the uniform metric.
\newblock \emph{Theory of Probability \& Its Applications}, 23\penalty0
  (4):\penalty0 794--798, 1979.

\bibitem[Lasota and Mackey(1985)]{lasota1985probabilistic}
Andrzej Lasota and Michael~C. Mackey.
\newblock \emph{Probabilistic Properties of Deterministic Systems}.
\newblock Cambridge University Press, 1985.

\bibitem[Lasota and Yorke(1973)]{lasota1973existence}
Andrzej Lasota and James~A. Yorke.
\newblock On the existence of invariant measures for piecewise monotonic
  transformations.
\newblock \emph{Transactions of the American Mathematical Society},
  186:\penalty0 481--488, 1973.

\bibitem[Masry(1983)]{masry1983probability}
Elias Masry.
\newblock Probability density estimation from sampled data.
\newblock \emph{Information Theory, IEEE Transactions on}, 29\penalty0
  (5):\penalty0 696--709, 1983.

\bibitem[Masry(1986)]{masry1986recursive}
Elias Masry.
\newblock Recursive probability density estimation for weakly dependent
  stationary processes.
\newblock \emph{Information Theory, IEEE Transactions on}, 32\penalty0
  (2):\penalty0 254--267, 1986.

\bibitem[Maume-Deschamps(2006)]{maume2006exponential}
V{\'e}ronique Maume-Deschamps.
\newblock Exponential inequalities and functional estimation for weak dependent
  data: applications to dynamical systems.
\newblock \emph{Stochastics and Dynamics}, 6\penalty0 (4):\penalty0 535--560,
  2006.

\bibitem[McGoff et~al.(2015{\natexlab{a}})McGoff, Mukherjee, Nobel, and
  Pillai]{mcgoff2015consistency}
Kevin McGoff, Sayan Mukherjee, Andrew Nobel, and Natesh Pillai.
\newblock Consistency of maximum likelihood estimation for some dynamical
  systems.
\newblock \emph{The Annals of Statistics}, 43\penalty0 (1):\penalty0 1--29,
  2015{\natexlab{a}}.

\bibitem[McGoff et~al.(2015{\natexlab{b}})McGoff, Mukherjee, and
  Pillai]{mcgoff2015statistical}
Kevin McGoff, Sayan Mukherjee, and Natesh Pillai.
\newblock Statistical inference for dynamical systems: a review.
\newblock \emph{Statistics Surveys}, 9:\penalty0 209--252, 2015{\natexlab{b}}.

\bibitem[Park and Marron(1990)]{park1990comparison}
Byeong~U. Park and James~S. Marron.
\newblock Comparison of data-driven bandwidth selectors.
\newblock \emph{Journal of the American Statistical Association}, 85\penalty0
  (409):\penalty0 66--72, 1990.

\bibitem[Parzen(1962)]{parzen1962estimation}
Emanuel Parzen.
\newblock On estimation of a probability density function and mode.
\newblock \emph{The Annals of Mathematical Statistics}, 33\penalty0
  (3):\penalty0 1065--1076, 1962.

\bibitem[Prieur(2001)]{prieur2001density}
Cl{\'e}mentine Prieur.
\newblock Density estimation for one-dimensional dynamical systems.
\newblock \emph{ESAIM: Probability and Statistics}, 5:\penalty0 51--76, 2001.

\bibitem[Robinson(1983)]{robinson1983nonparametric}
Peter~M. Robinson.
\newblock Nonparametric estimators for time series.
\newblock \emph{Journal of Time Series Analysis}, 4\penalty0 (3):\penalty0
  185--207, 1983.

\bibitem[Rosenblatt(1956)]{rosenblatt1956remarks}
Murray Rosenblatt.
\newblock Remarks on some nonparametric estimates of a density function.
\newblock \emph{The Annals of Mathematical Statistics}, 27\penalty0
  (3):\penalty0 832--837, 1956.

\bibitem[Rudemo(1982)]{rudemo1982empirical}
Mats Rudemo.
\newblock Empirical choice of histograms and kernel density estimators.
\newblock \emph{Scandinavian Journal of Statistics}, 9:\penalty0 65--78, 1982.

\bibitem[Scott and Terrell(1987)]{scott1987biased}
David~W. Scott and George~R. Terrell.
\newblock Biased and unbiased cross-validation in density estimation.
\newblock \emph{Journal of the American Statistical Association}, 82\penalty0
  (400):\penalty0 1131--1146, 1987.

\bibitem[Sheather and Jones(1991)]{sheather1991reliable}
Simon~J. Sheather and Michael~C. Jones.
\newblock A reliable data-based bandwidth selection method for kernel density
  estimation.
\newblock \emph{Journal of the Royal Statistical Society: Series B (Statistical
  Methodology)}, 53\penalty0 (3):\penalty0 683--690, 1991.

\bibitem[Steinwart and Anghel(2009)]{steinwart2009consistency}
Ingo Steinwart and Marian Anghel.
\newblock Consistency of support vector machines for forecasting the evolution
  of an unknown ergodic dynamical system from observations with unknown noise.
\newblock \emph{The Annals of Statistics}, 37\penalty0 (2):\penalty0 841--875,
  2009.

\bibitem[Steinwart and Christmann(2008)]{steinwart2008support}
Ingo Steinwart and Andreas Christmann.
\newblock \emph{Support Vector Machines}.
\newblock Springer, New York, 2008.

\bibitem[Stone(1983)]{stone1983optimal}
Charles~J. Stone.
\newblock Optimal uniform rate of convergence for nonparametric estimators of a
  density function or its derivatives.
\newblock In \emph{Recent Advances in Statistics}, pages 393--406. Academic
  Press, New York, 1983.

\bibitem[Suykens and Vandewalle(2000)]{suykens2000recurrent}
Johan~A.K. Suykens and Joos Vandewalle.
\newblock Recurrent least squares support vector machines.
\newblock \emph{Circuits and Systems I: Fundamental Theory and Applications,
  IEEE Transactions on}, 47\penalty0 (7):\penalty0 1109--1114, 2000.

\bibitem[Suykens et~al.(1995)Suykens, Vandewalle, and
  De~Moor]{suykens2012artificial}
Johan~A.K. Suykens, Joos Vandewalle, and Bart De~Moor.
\newblock \emph{Artificial Neural Networks for Modelling and Control of
  Non-Linear Systems}.
\newblock Springer Science \& Business Media, 1995.

\bibitem[Suykens et~al.(2002)Suykens, Van~Gestel, De~Brabanter, De~Moor, and
  Vandewalle]{suykens2002least}
Johan~A.K. Suykens, Tony Van~Gestel, Jos De~Brabanter, Bart De~Moor, and Joos
  Vandewalle.
\newblock \emph{Least Squares Support Vector Machines}.
\newblock World Scientific, Singapore, 2002.

\bibitem[Terrell(1990)]{terrell1990maximal}
George~R. Terrell.
\newblock The maximal smoothing principle in density estimation.
\newblock \emph{Journal of the American Statistical Association}, 85\penalty0
  (410):\penalty0 470--477, 1990.

\bibitem[Tran(1989{\natexlab{a}})]{tran1989l1}
Lanh~Tat Tran.
\newblock The ${L}_1$ convergence of kernel density estimates under dependence.
\newblock \emph{The Canadian Journal of Statistics}, 17\penalty0 (2):\penalty0
  197--208, 1989{\natexlab{a}}.

\bibitem[Tran(1989{\natexlab{b}})]{tran1989recursive}
Lanh~Tat Tran.
\newblock Recursive density estimation under dependence.
\newblock \emph{Information Theory, IEEE Transactions on}, 35\penalty0
  (5):\penalty0 1103--1108, 1989{\natexlab{b}}.

\bibitem[Wand and Jones(1994)]{wand1994kernel}
Matt~P. Wand and Chris~M. Jones.
\newblock \emph{Kernel Smoothing}.
\newblock Chapman \& Hall, London, 1994.

\bibitem[Yao and Tong(1998)]{yao1998cross}
Qiwei Yao and Howell Tong.
\newblock Cross-validatory bandwidth selections for regression estimation based
  on dependent data.
\newblock \emph{Journal of Statistical Planning and Inference}, 68\penalty0
  (2):\penalty0 387--415, 1998.

\bibitem[Yu(1993)]{yu1993density}
Bin Yu.
\newblock Density estimation in the {$L^\infty$}-norm for dependent data with
  applications to the {G}ibbs sampler.
\newblock \emph{The Annals of Statistics}, 21\penalty0 (2):\penalty0 711--735,
  1993.

\bibitem[Zoeter and Heskes(2005)]{zoeter2005change}
Onno Zoeter and Tom Heskes.
\newblock Change point problems in linear dynamical systems.
\newblock \emph{The Journal of Machine Learning Research}, 6:\penalty0
  1999--2026, 2005.

\end{thebibliography}
\end{document}